\newcommand{\defeq}{\vcentcolon=}
\newtheorem{theorem}{Theorem}[section]
\theoremstyle{definition}
\newtheorem*{definition}{Definition}
\def\thmref#1{Theorem~\ref{#1}}
\newtheorem{lemma}{Lemma}[section] 
\def\lemref#1{Lemma~\ref{#1}}
\def\defref#1{Definition~\ref{#1}}
\newcommand{\pE}{\Tilde{\mathbb{E}}}
\def\sdpref#1{SDP~\ref{#1}}
\def\corref#1{Corollary~\ref{#1}}
\def\figref#1{figure~\ref{#1}}
\def\Figref#1{Figure~\ref{#1}}
\def\eqref#1{equation~\ref{#1}}
\def\algref#1{algorithm~\ref{#1}}
\def\Algref#1{Algorithm~\ref{#1}}
\def\ceil#1{\lceil #1 \rceil}
\def\1{\bm{1}}
\DeclareMathAlphabet{\mathsfit}{\encodingdefault}{\sfdefault}{m}{sl}
\SetMathAlphabet{\mathsfit}{bold}{\encodingdefault}{\sfdefault}{bx}{n}
\newcommand{\E}{\mathbb{E}}
\newcommand{\R}{\mathbb{R}}
\DeclareMathOperator*{\argmin}{arg\,min}
\DeclareMathOperator{\Tr}{Tr}
\newenvironment{SDPOPT}[1][htb]{%
    \floatname{algorithm}{SDP}
   \begin{algorithm}[#1]%
  }{\end{algorithm}}
\newtheorem{corollary}{Corollary}
\title{Are Graph Neural Networks Optimal Approximation Algorithms?}
\author{%
  Morris Yau \\
  MIT CSAIL \\
  \texttt{morrisy@mit.edu} \\
  \And
    Nikolaos Karalias \\
  MIT CSAIL \\
  \texttt{stalence@mit.edu} \\
  \And
   Eric Lu \\
  Harvard University \\
  \texttt{ericlu01@g.harvard.edu} \\
  \And
  Jessica Xu \\
  Independent researcher formerly at MIT \\
  \texttt{jessica.peng.xu@gmail.com} \\
  \And
  Stefanie Jegelka \\
  TUM\thanks{CIT, MCML, MDSI}  \ and MIT\thanks{EECS and CSAIL} \\
  \texttt{stefje@mit.edu} \\
}
\begin{document}

\maketitle

\begin{abstract}
  In this work we design graph neural network architectures that capture optimal approximation algorithms for a large class of combinatorial optimization problems, using powerful algorithmic tools from semidefinite programming (SDP). Concretely, we prove that polynomial-sized message-passing GNN's can learn the most powerful polynomial time algorithms for Max Constraint Satisfaction Problems assuming the Unique Games Conjecture. We leverage this result to construct efficient graph neural network architectures, OptGNN, that obtain high-quality approximate solutions on landmark combinatorial optimization problems such as Max-Cut, Min-Vertex-Cover, and Max-3-SAT. Our approach achieves strong empirical results across a wide range of real-world and synthetic datasets against solvers and neural baselines.  Finally, we take advantage of OptGNN's ability to capture convex relaxations to design an algorithm for producing bounds on the optimal solution from the learned embeddings of OptGNN.  
\end{abstract}

\section{Introduction}

Combinatorial Optimization (CO) is the class of problems that optimize functions subject to constraints over discrete search spaces.  They are often NP-hard to solve and to approximate, owing to their typically exponential search spaces over nonconvex domains.  Nevertheless, their important applications in science and engineering \citep{gardiner2000graph,zaki1997new,smith2004design,du2017multi} has engendered a long history of study rooted in the following simple insight.  In practice, CO instances are endowed with domain-specific structure that can be exploited by specialized algorithms \citep{hespe2020wegotyoucovered, walteros2019maximum, ganesh2020unreasonable}.  In this context, neural networks are natural candidates for learning and then exploiting patterns in the data distribution over CO instances.




The emerging field at the intersection of machine learning (ML) and combinatorial optimization (CO) has led to novel algorithms with promising empirical results for several CO problems. However, similar to classical approaches to CO, ML pipelines have to manage a tradeoff between efficiency and optimality. Indeed, prominent works in this line of research forego optimality and focus on parametrizing heuristics \citep{li2018combinatorial, khalil2017learning, yolcu2019learning, chen2019learning} or by employing specialized models \citep{zhang2023let, nazari2018reinforcement, toenshoff2019run, xu2021reinforcement, min2022can} and task-specific loss functions \citep{amizadeh2018learning, karalias2020erdos, wang2022unsupervised, karalias2022neural, sun2022annealed}. 
Exact ML solvers that can guarantee optimality often leverage general techniques like branch and bound \citep{gasse2019exact, paulus2022learning} and constraint programming \citep{parjadis2021improving, cappart2019improving}, which offer the additional benefit of providing approximate solutions together with a bound on the distance to the optimal solution. The downside of those methods is their exponential worst-case time complexity. 
Striking the right balance between efficiency and optimality is quite challenging, which leads us to the central question of this paper:


\emph{Are there neural architectures for efficient combinatorial optimization that can learn to adapt to a data distribution over instances yet capture algorithms with \textbf{optimal} worst-case approximation guarantees?}

\begin{wrapfigure}{L}{0.6\textwidth}
\centering
\includegraphics[width=\linewidth,trim=3cm 2cm 3cm 2cm]{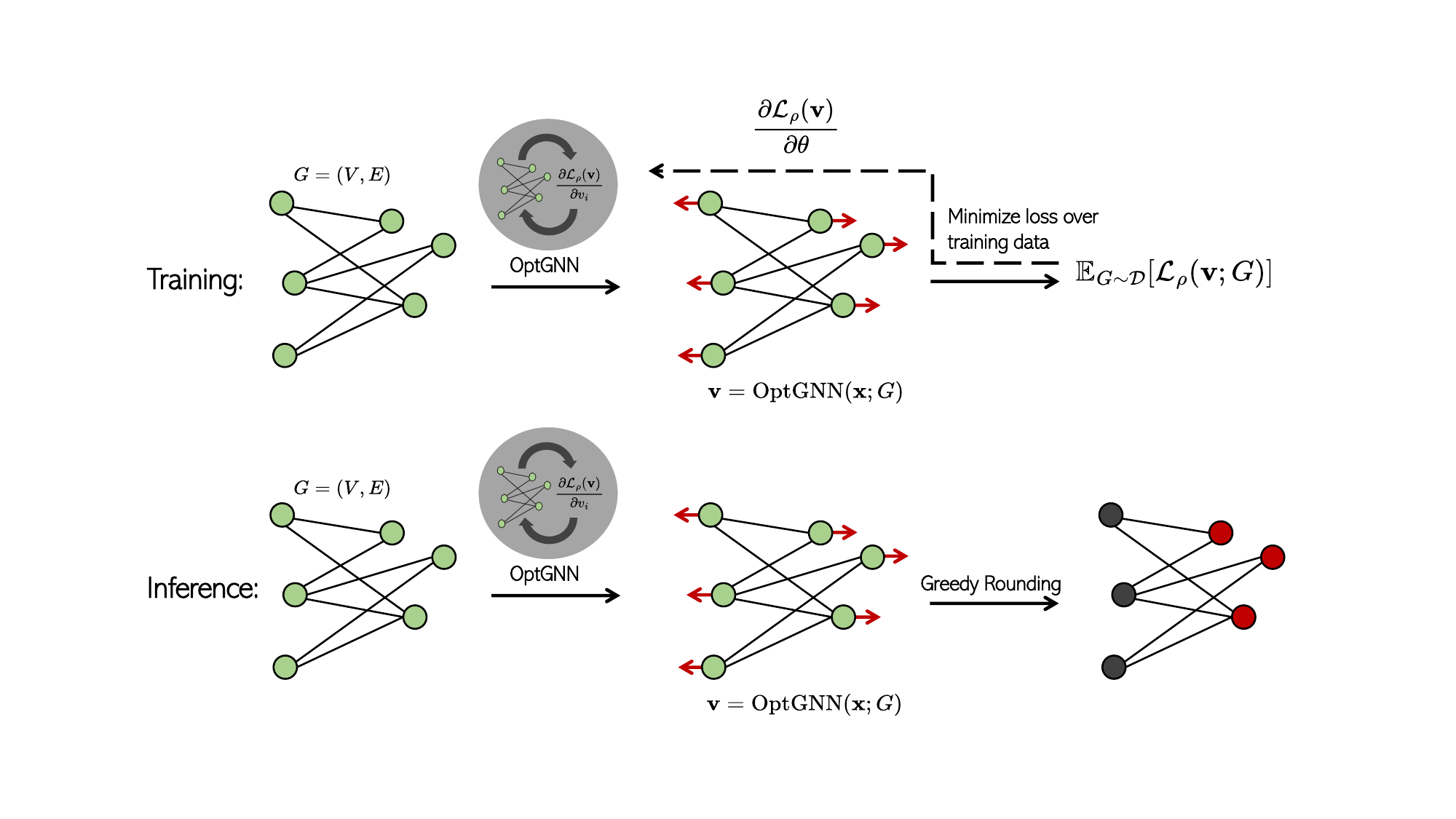}
\caption{Schematic representation of OptGNN. During training, OptGNN produces node embeddings $\mathbf{v}$ using message passing updates on the graph $G$. These embeddings are used to compute the penalized objective $\mathcal{L}_p(\mathbf{v};G)$. OptGNN is trained by minimizing the average loss over the training set. At inference time, the fractional solutions (embeddings) $\mathbf{v}$ for an input graph $G$ produced by OptGNN are rounded using randomized rounding.}
\label{fig:OptGNNsummary}
\end{wrapfigure}

To answer this question, we build on the extensive literature on approximation algorithms and semidefinite programming (SDP) which has led to breakthrough results for NP-hard combinatorial problems, such as the Goemans-Williamson approximation algorithm for Max-Cut \citep{goemans1995improved} and the use of the Lov\'{a}sz theta function to find the maximum independent set on perfect graphs \citep{lovasz1979shannon, grotschel1981ellipsoid}.  For several problems, it is known that if the Unique Games Conjecture (UGC) is true, then the approximation guarantees obtained through SDP relaxations are indeed the best that can be achieved \citep{raghavendra2008optimal,barak2014sum}. The key insight of our work is that a polynomial time message-passing algorithm (see \defref{def:MP}) approximates the solution of an SDP with the optimal integrality gap for the class of Maximum Constraint Satisfaction Problems (Max-CSP), assuming the UGC. This algorithm can be naturally parameterized to build a graph neural network, which we call OptGNN.


Our contributions can be summarized as follows:
\begin{itemize}
\item We construct a polynomial-time message passing algorithm (see \defref{def:MP}) for solving the SDP of \cite{raghavendra2008optimal} for the broad class of maximum constraint satisfaction problems (including Max-Cut, Max-SAT, etc.), that is optimal barring the possibility of significant breakthroughs in the field of approximation algorithms. 
\item We construct a graph neural network architecture which we call OptGNN, to capture this message-passing algorithm. We show that OptGNN is PAC-learnable with polynomial sample complexity.
\item We describe how to generate optimality certificates from the learned representations of OptGNN i.e., provable bounds on the optimal solution.
\item Empirically, OptGNN is simple to implement (see pseudocode in Appendix \ref{alg:OptGNNpseudocodegeneral}) and we show that it achieves competitive results on 3 landmark CO problems and several datasets against classical heuristics, solvers and state-of-the-art neural baselines \footnote{Code is available at: \url{https://github.com/penlu/bespoke-gnn4do}.}.  Furthermore, we provide out-of-distribution (OOD) evaluations and ablation studies for OptGNN that further validate our parametrized message-passing approach.

\end{itemize}


\section{Background and Related Work}
\textbf{Optimal Approximation Algorithms.}
Proving that an algorithm achieves the best approximation guarantee for NP-hard problems is an enormous scientific challenge as it requires ruling out that better algorithms exist (i.e., a hardness result). The Unique Games Conjecture (UGC) \citep{khot2002power} is a striking development in the theory of approximation algorithms because it addresses this obstacle. If true, it implies approximation hardness results for a large number of hard combinatorial problems that match the approximation guarantees of the best-known algorithms  \citep{raghavendra2009towards,raghavendra2012reductions}. For that reason, those algorithms are also sometimes referred to as UGC-optimal. More importantly, the UGC implies that for all Max-CSPs there is a general UGC-optimal algorithm based on semidefinite programming \citep{raghavendra2008optimal}. For a complete exposition on the topic of UGC and approximation algorithms, we refer the reader to \cite{barak2014sum}.


\textbf{Neural approximation algorithms and their limitations.}
There has been important progress in characterizing the combinatorial capabilities of modern deep learning architectures, including bounds on the approximation guarantees achievable by GNNs on bounded degree graphs \citep{sato2019approximation,sato2021random} and conditional impossibility results for solving classic combinatorial problems such as Max-Independent-Set and Min-Spanning-Tree \citep{loukas2019graph}. It has been shown that a GNN can implement a distributed local algorithm that straightforwardly obtains a 1/2-approximation for Max-SAT \citep{liu2021can}, which is also achievable through a simple randomized algorithm \citep{johnson1973approximation}. Recent work proves there are barriers to the approximation power of GNNs for combinatorial problems including Max-Cut and Min-Vertex-Cover \citep{gamarnik2023barriers} for constant depth GNNs. Other approaches to obtaining approximation guarantees propose avoiding the dependence of the model on the size of the instance with a divide-and-conquer strategy; the problem is subdivided into smaller problems which are then solved with a neural network \citep{mccarty2021nn,kahng2023nn}. 

\textbf{Convex Relaxations and Machine Learning.} 
Convex relaxations are crucial in the design of approximation algorithms. In this work, we show how SDP-based approximation algorithms can be incorporated into the architecture of neural networks. We draw inspiration from the algorithms that are used for solving low-rank SDPs  \citep{burer03, wang2017imposing, wang2019low, boumal2020deterministic}. 
Beyond approximation algorithms, there is work on designing differentiable Max-SAT solvers via SDPs to facilitate symbolic reasoning in neural architectures \citep{wang2019satnet}. This approach uses a fixed algorithm for solving a novel SDP relaxation for Max-SAT, and aims to learn the structure of the SAT instance.  In our case, the instance is given, but our algorithm is learnable, and we seek to predict the solution. SDPs have found numerous applications in machine learning including neural network verification \citep{brown2022unified}, differentiable learning with discrete functions \citep{karalias2022neural}, kernel methods \citep{rudi2020finding, jethava2013lovasz} and quantum information tasks \citep{krivachy2021high}. Convex relaxation are also instrumental in integer programming where branch-and-bound tree search is guaranteed to terminate with optimal integral solutions to Mixed Integer Linear Programs (MILP).  Proposals for incorporating neural networks into the MILP pipeline include providing a \enquote{warm start} \citep{benidis2023solving} to the solver, learning branching heuristics \citep{gasse2019exact, nair2020solving, gupta2020hybrid, paulus2022learning}, and learning cutting plane protocols \citep{paulus2022learning}. A recent line of work studies the capabilities of neural networks to solve linear programs \citep{chen2022representing, qian2023exploring}. It is shown that GNNs can represent LP solvers, which may in turn explain the success of learning branching heuristics \citep{qian2023exploring}. In a similar line of work, neural nets are used to learn branching heuristics for CDCL SAT solvers \citep{selsam2019guiding, kurin2020can, wang2021neurocomb}. Finally, convex optimization has also found applications \citep{numeroso2023dual} in the neural algorithmic reasoning paradigm \citep{velivckovicneural, velivckovic2022clrs} where neural networks are trained to solve problems by learning to emulate discrete algorithms in higher dimensional spaces. 

\textbf{Learning frameworks for CO.}
A common approach to neural CO is to use supervision either in the form of execution traces of expert algorithms or labeled solutions \citep{li2018combinatorial, selsam2018learning, prates2019learning, vinyals2015pointer, joshi2019efficient, joshi2020learning, gasse2019exact,ibarz2022generalist, georgiev2023neural}.
Obtaining labels for combinatorial problems can be computationally costly which has led to the development of neural network pipelines that can be trained without access to labels or partial solutions. This includes approaches based on Reinforcement Learning \citep{ahn2020learning, bother2022s, barrett2020exploratory, barrett2022learning, bello2016neural, khalil2017learning, yolcu2019learning, chen2019learning}, and other self-supervised methods  \citep{brusca2023maximum, karalias2022neural, karalias2020erdos, tonshoff2022one, schuetz2022combinatorial, schuetz2022graph, amizadeh2019pdp, dai2020framework, sun2022annealed, wang2022unsupervised,  amizadeh2018learning, gaile2022unsupervised}. Our work falls into the latter category since only the problem instance is sufficient for training and supervision signals are not required. For a complete overview of the field, we refer the reader to the relevant survey papers \citep{cappart2023combinatorial, bengio2021machine}.

\section{Optimal Approximation Algorithms with Neural Networks} 
We begin by showing that solving the Max-Cut problem using a vector (low-rank SDP) relaxation and a simple projected gradient descent scheme amounts to executing a message-passing algorithm on the input graph. We then  generalize this insight to the entire class of Max-CSPs. We reformulate the UGC-optimal SDP for Max-CSP in \sdpref{alg:sdp-vector-simp}. Our main \thmref{thm:main} exhibits a message passing algorithm (\Algref{alg:message-passing-informal}) for solving \sdpref{alg:sdp-vector-simp}. We then capture \Algref{alg:message-passing-informal} via a message passing GNN with learnable weights (see \defref{def:gnn} for definition of Message Passing GNN). Thus, by construction OptGNN captures algorithms with UGC-optimal approximation guarantees for Max-CSP.  Furthermore, we prove that OptGNN is efficiently PAC-learnable (see \lemref{lem:pac-learn-informal}) as a step towards explaining its empirical performance.  

\subsection{Solving Combinatorial Optimization Problems with Message Passing}
In the Max-Cut problem, we are given a graph $G = (V,E)$ with $N$ vertices $V$ and edges $E$.  The goal is to divide the vertices into two sets that maximize the number of edges going between them. This corresponds to the following quadratic integer program.  
\begin{align*}
\max_{x_1,x_2,...,x_N} \quad & \sum_{(i,j) \in E}  \frac{1}{2}(1 - x_ix_j) \quad 
\text{subject to:} \quad x_i^2 = 1 \quad \forall i \in [N] 
\end{align*}
The global optimum of the integer program is the Max-Cut. Noting that discrete variables are not amenable to the tools of continuous optimization, a standard technique is to 'lift' the quadratic integer problem: replace the integer variables $x_i$ with vectors $v_i \in \R^r$ and constrain $v_i$ to lie on the unit sphere. 
\begin{align} \label{eq:4}
\min_{v_1,v_2,\dots,v_N}\quad & 
-\sum_{(i,j) \in E}  \frac{1}{2}(1 - \langle v_i,v_j\rangle) \quad
\text{subject to:} \quad \|v_i\| = 1 \quad \forall i \in [N] \quad  v_i \in \mathbb{R}^r.
\end{align}

This nonconvex relaxation of the problem admits an efficient algorithm \cite{burer03}. Furthermore, all local minima are approximately global minima \citep{GLMspurious16} and variations of stochastic gradient descent converge to its optimum \citep{bhojanapalli2018smoothed,jin2017escape} under a variety of smoothness and compactness assumptions.  Specifically, for large enough $r$   \citep{boumal2020deterministic,o2022burer}, simple algorithms such as block coordinate descent \citep{erdogdu2019convergence} can find an approximate global optimum of the objective. 
Projected gradient descent is a natural approach for solving the minimization problem in \eqref{eq:4}. In iteration $t$ (for $T$ iterations), update vector $v_i$ 
as
\begin{align}
{v}_i^{t+1} &= \text{NORMALIZE} \left( v_i^t - \eta \sum\nolimits_{j \in N(i)} v_j^t \right), \label{eq:gradupdate1}
\end{align}

where $\text{NORMALIZE}$ enforces unit Euclidean norm, $\eta \in \R^{+}$ is an adjustable step size, and 
$N(i)$ 
the neighborhood of node $i$.  The gradient updates to the vectors are local, each vector is updated by aggregating information from its neighboring vectors (i.e., it is a message-passing algorithm).

\textbf{OptGNN for Max-Cut.}
Our main contribution in this paper builds on the following observation. We may generalize the dynamics described above by considering an overparametrized version of the gradient descent updates in equations \ref{eq:gradupdate1}. Let $M_1,M_2,...,M_T \in \R^{r \times 2r}$ be a set of $T$ learnable matrices corresponding to $T$ layers of a neural network.  Then for layer $t$ and 
embedding $v_i$ we define a GNN update   
\begin{align}
{v}_i^{t+1} & \defeq \text{NORMALIZE} \left( M_t\left(\begin{bmatrix}
v_i^t \\
\sum_{j \in N(i)} v_j^t
\end{bmatrix}\right) \right) \label{eq:neural sdp aggregate}
\end{align}
More generally, we can write the dynamics as ${v}_i^{t+1}  \defeq \text{NORMALIZE}( M_t(\text{AGG}(v_i^t, \{v_j^t\}_{j \in N(i)}))
$,
where $\text{AGG}$ is a function of the node embedding and its neighboring embeddings.  We will present a message passing \algref{alg:message-passing-informal} that generalizes the dynamics of \ref{eq:gradupdate1} to the entire class of Max-CSPs (see example derivations in Appendix~\ref{sec:vertex-cover} and Appendix~\ref{sec:max3sat}), which provably converges in polynomial iterations for a reformulation of the canonical SDP relaxation of \cite{raghavendra2008optimal} (see \sdpref{alg:sdp-vector-simp}). Parametrizing this general message-passing algorithm will lead to the definition of OptGNN (see \defref{def:OptGNN-maxcsp}).


\subsection{Message Passing for Max-CSPs}

Given a set of constraints over variables, Max-CSP asks to find a variable assignment that maximizes the number of satisfied constraints.  
Formally, a Constraint Satisfaction Problem $\Lambda = (\mathcal{V}, \mathcal{P}, q)$ consists of a set of $N$ variables $\mathcal{V} \defeq \{x_i\}_{i \in [N]}$ each taking values in an alphabet $[q]$ and a set of predicates $\mathcal{P} \defeq \{P_z\}_{z \subset \mathcal{V}}$ where each predicate is a payoff function over $k$ variables $X_z = \{x_{i_1}, x_{i_2}, ..., x_{i_k}\}$. Here we refer to $k$ as the arity of the Max-k-CSP.   We adopt the normalization that each predicate $P_z$ returns outputs in $[0,1]$.  We index each predicate $P_z$ by its domain $z$.  The goal of Max-k-CSP is to maximize the payoff of the predicates.  
\begin{equation} \label{eq:csp-obj}
\text{OPT}(\Lambda) \defeq \max_{(x_1,...,x_N) \in [q]^N} \frac{1}{|\mathcal{P}|}\sum_{P_z \in \mathcal{P}} P_z(X_z),
\end{equation}
where we normalize by the number of constraints so that the total payoff is in $[0,1]$.  Therefore we can unambiguously define an $\epsilon$-approximate assignment as an assignment achieving a payoff of $\text{OPT} - \epsilon$. 
Since our result depends on a message-passing algorithm, we will need to define an appropriate graph structure over which messages will be propagated. To that end, we will leverage the constraint graph of the CSP instance: 
Given a Max-k-CSP instance $\Lambda = (\mathcal{V}, \mathcal{P}, q)$ a \emph{constraint graph} $G_\Lambda = (V,E)$ is comprised of vertices $V = \{v_{\phi,\zeta}\}$ for every subset of variables $\phi \subseteq z$ for every predicate $P_z \in \mathcal{P}$ and every assignment $\zeta \in [q]^{|z|}$ to the variables in $z$.  The edges $E$ are between any pair of vectors $v_{\phi,\zeta}$ and $v_{\phi',\zeta'}$ such that the variables in $\phi$ and $\phi'$ appear in a predicate together. For instance, for a SAT clause $(x_1 \vee x_2) \wedge x_1 \wedge x_3$ there are four nodes $v_{1}, v_{12}, v_3$ and $v_\emptyset$ with a complete graph between $\{v_1,v_{12},v_\emptyset\}$ and $v_3$ an isolated node.        

Let $\text{SDP}(\Lambda)$ be the optimal value of the \sdpref{alg:sdp-vector-simp} on instance $\Lambda$. 
The \emph{approximation ratio} for the Max-k-CSP problem achieved by the \sdpref{alg:sdp-vector-simp} is
$\min_{\Lambda \in \text{Max-k-CSP}}\frac{\text{OPT}(\Lambda)}{\text{SDP}(\Lambda)}$,
where the minimization is taken over all instances $\Lambda$ with arity $k$. There is no polynomial time algorithm that can achieve a larger approximation ratio assuming the truth of the UGC \citet{raghavendra2008optimal}. 
We construct our message passing algorithm as follows. First we introduce the definition of the vector form SDP and its associated quadratically penalized Lagrangian.  
\begin{definition} [Quadratically Penalized Lagrangian] 
Any standard form SDP $\Lambda$ can be expressed as the following vector form SDP for some matrix $V=[v_1, v_2, \dots, v_N] \in R^{N \times N}$.
\begin{align} 
\label{eq:sdp-vector}
\min_{V \in \R^{N \times N} } \quad &  \langle C, V^TV\rangle \quad 
\text{subject to:} \quad \langle A_i, V^TV\rangle = b_i \quad \forall i \in [\mathcal{F}].
\end{align}
For any SDP in vector form we define the $\rho$ quadratically penalized Lagrangian to be 
\begin{align}
\label{eq:penalized-lagrangian}
\mathcal{L}_\rho(V) \defeq \langle C, V^TV\rangle + \rho\sum_{i \in \mathcal{F}} \left(\langle A_i, V^TV\rangle - b_i \right)^2.
\end{align}
\end{definition}
We show that gradient descent on this Lagrangian $\mathcal{L}_\rho(V)$ for the Max-CSP \sdpref{alg:sdp-vector-simp} takes the form of a message-passing algorithm on the constraint graph that can provably converge to an approximate global optimum for the SDP (see \algref{alg:message-passing-informal}). We see that gradient descent on $\mathcal{L}_\rho$ takes the form of a simultaneous message passing update on the constraint graph.  See \eqref{eq:main-loss} and \algref{alg:message-passing} for analytic form of the Max-CSP message passing update.  See appendix \ref{sec:vertex-cover} and \ref{sec:max3sat} for analytic form of Min-Vertex-Cover and Max-3-SAT message passing updates.
Our main theorem is then following. 
\begin{theorem} \label{thm:main} [Informal] Given a Max-k-CSP instance $\Lambda$ represented in space $\Phi = O(|\mathcal{P}|q^k)$, there is a message passing \Algref{alg:message-passing} on constraint graph $G_\Lambda$ with a per iteration update time of $O(\Phi)$ that computes in $O(\epsilon^{-4} \Phi^4)$ iterations an $\epsilon$-approximate solution (solution satisfies constraints to error $\epsilon$ achieving objective value within $\epsilon$ of optimum) to \sdpref{alg:sdp-vector-simp}.  For the formal theorem and proof see \thmref{thm:main-theorem}. 
\end{theorem}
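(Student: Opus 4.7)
The plan is to split the theorem into three checks: (a) the gradient of $\mathcal{L}_\rho$ is local with respect to $G_\Lambda$, so its update is a genuine message-passing rule; (b) one iteration runs in $O(\Phi)$ arithmetic operations; and (c) after $O(\eps^{-4}\Phi^4)$ iterations the iterate is simultaneously $\eps$-feasible and $\eps$-optimal for \sdpref{alg:sdp-vector-simp}. Parts (a) and (b) are essentially bookkeeping; the weight of the proof lies in (c).

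For (a) I would differentiate column by column. Using $\nabla_V\langle C, V^TV\rangle = 2VC$ and $\nabla_V(\langle A_i, V^TV\rangle - b_i)^2 = 4(\langle A_i, V^TV\rangle - b_i)VA_i$,
\[
\nabla_{v_u}\mathcal{L}_\rho(V) \;=\; 2\sum_{w}C_{uw}v_w \;+\; 4\rho\sum_{i\in\mathcal{F}}\bigl(\langle A_i, V^TV\rangle - b_i\bigr)\sum_{w}(A_i)_{uw}v_w.
\]
In the Raghavendra SDP the cost $C$ and each constraint matrix $A_i$ are supported on pairs of indices $(u,w)$ whose underlying sub-assignments belong to the same predicate, and by construction $G_\Lambda$ places an edge on exactly those pairs. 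Hence $\nabla_{v_u}\mathcal{L}_\rho$ depends only on immediate $G_\Lambda$-neighbours of $u$, giving the message-passing step of \algref{alg:message-passing-informal}.

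For (b) I would account costs as follows. Each constraint matrix is either a normalization or a marginal-consistency equation local to one predicate and has $O(1)$ nonzeros, and there are $O(|\mathcal{P}|q^{2k}) = O(\Phi)$ such constraints with $k$ fixed. A single pass over this sparse representation computes every residual $\langle A_i, V^TV\rangle - b_i$, and a second pass scatters residual-weighted contributions back into each column of the gradient. Both passes together take $O(\Phi)$ time, as advertised.

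The main obstacle is (c), which I would attack by carefully combining three ingredients. First, choose the factorization width $r = \Theta(\sqrt{\Phi})$ so that the Burer--Monteiro landscape result of \citep{boumal2020deterministic,o2022burer} certifies that every approximate second-order stationary point of $\mathcal{L}_\rho$ yields an SDP-space solution whose objective and feasibility gaps are bounded by the first- and second-order tolerances. Second, choose the penalty weight $\rho = \mathrm{poly}(\Phi)\cdot\eps^{-1}$ large enough that any $\eps'$-approximate stationary point is automatically $\eps$-feasible, by balancing the quadratic penalty against the $O(1)$ cost term through a standard KKT/regularization comparison. Third, bound the gradient- and Hessian-Lipschitz constants of $\mathcal{L}_\rho$ on the sublevel set containing the initialization; both are polynomial in $\Phi$ and $\rho$, because every $A_i$ has bounded Frobenius norm and each index occurs in only $O(\Phi)$ constraints. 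Feeding these parameters into a perturbed gradient-descent theorem of the type in \citep{jin2017escape}, which reaches an $(\eps',\sqrt{\eps'})$-second-order critical point in $\tilde O(L/\eps'^2)$ steps, and choosing $\eps' = \mathrm{poly}(\eps/\Phi)$, yields the advertised $O(\eps^{-4}\Phi^4)$ iteration bound. The delicate point is the interaction between the penalty-induced blow-up in smoothness constants and the landscape guarantee; I would handle it by tracking $\rho$ explicitly through every constant and by observing that the penalty is a smooth quadratic, so standard no-spurious-minima proofs for Burer--Monteiro transfer with only polynomial overhead. Combining (a)--(c) gives the theorem.
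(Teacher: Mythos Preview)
Your decomposition into (a) locality, (b) per-iteration cost, and (c) convergence matches the paper's structure, and your use of the perturbed gradient descent theorem of \cite{jin2017escape} for the iteration count is exactly what the paper does. Parts (a) and (b) are fine.

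Part (c), however, has a genuine gap in the landscape step. You propose factorization width $r=\Theta(\sqrt{\Phi})$ and invoke the Burer--Monteiro landscape results of \cite{boumal2020deterministic,o2022burer}. Those results are stated for the \emph{constrained} problem on the smooth manifold $\{V:\langle A_i,V^TV\rangle=b_i\}$, not for the unconstrained quadratically penalized Lagrangian $\mathcal{L}_\rho$ that \algref{alg:message-passing-informal} actually descends. Once you move to the penalty formulation, the relevant landscape tool is \cite{bhojanapalli2018smoothed} (their Lemma~3), and the deterministic version the paper uses requires the factorization to be \emph{full rank}, i.e.\ $V\in\R^{\Phi\times\Phi}$. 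With low rank one only gets a smoothed-analysis statement for generically perturbed SDPs, which does not give the worst-case claim in the theorem. The paper therefore takes $r=\Phi$, applies Bhojanapalli et al.\ to conclude that an $(\epsilon',\gamma^2)$-SOSP of $\mathcal{L}_\rho$ satisfies $\lambda_{\min}(\nabla\mathcal{H}_\rho(\hat X))\ge -\gamma\sqrt{\epsilon'}$, and then bounds the optimality gap by convexity of $\mathcal{H}_\rho$ in $X$.

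The second, smaller divergence is in how you close the penalty-to-feasibility loop. You propose a ``standard KKT/regularization comparison'' to argue that large $\rho$ forces $\epsilon$-feasibility. The paper instead uses a problem-specific ingredient: the robustness theorem of Raghavendra--Steurer (\thmref{thm:robustness}), which says that an SDP solution violating the constraints by $\tau$ only changes the objective by $O(\sqrt{\tau}\cdot\mathrm{poly}(kq))$. This is what lets them set $\rho=\mathrm{poly}(\epsilon^{-1},2^k)$ and conclude in \lemref{lem:precision} that the penalized optimum is within $\epsilon$ of $\mathrm{SDP}(\Lambda)$. A generic KKT argument would need an a~priori bound on the optimal dual variables, which is not immediate here; the robustness theorem is the substitute.
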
 



\subsection{OptGNN for Max-CSP} Next we define OptGNN for Max-CSP. See \ref{sec:vertex-cover} and \ref{sec:max3sat} for OptGNN for Vertex Cover and 3-SAT. 
\begin{definition} [OptGNN for Max-CSP] \label{def:OptGNN-maxcsp}Let $\Lambda$ be a Max-CSP instance on a constraint graph $G_{\Lambda}$ with $N$ nodes.  Let $U$ be an input matrix of dimension $r \times N$ for $N$ nodes with embedding dimension $r$. Let $\mathcal{L}_\rho$ be the penalized lagrangian loss defined as in \eqref{eq:penalized-lagrangian} associated with the Max-CSP instance $\Lambda$. Let $M$ be the OptGNN weights which are a set of matrices $M \defeq \{M_{1},M_2,...,M_{T}\} \in \R^{r \times 2r}$.  Let $\text{LAYER}_{M_i}: \R^{r \times N} \rightarrow \R^{r \times N}$ be the function $\text{LAYER}_{M_i}(U) = M_i (\text{AGG}(U))$, where \\ $\text{AGG}: \R^{r \times N} \rightarrow \R^{2r \times N}$ is the aggregation function $
\text{AGG}(U) \defeq  [U,\nabla \mathcal{L}_\rho(U)]$. We define $\text{OptGNN}_{(M,\Lambda)}: \R^{r \times N} \rightarrow \R$  to be the function
\begin{align}
\label{eq:newOptGNN}
\text{OptGNN}_{(M,\Lambda)}(U) = 
\mathcal{L}_\rho \circ \text{LAYER}_{M_T} \circ \dots \circ \circ\text{LAYER}_{M_1}(U).
\end{align}
\end{definition}
The per iteration update time of OptGNN is $O(\Phi r^{\omega})$ where $\omega$ is the matrix multiplication exponent.   
We update the parameters of OptGNN by inputting the output of the final layer $\text{LAYER}_{M_T}$ into the Lagrangian $\mathcal{L}_\rho$ and backpropagate its derivatives.  We emphasize the data is the instance $\Lambda$ and not the collection of vectors $U$ which can be chosen entirely at random.  The form of the gradient $\nabla \mathcal{L}_\rho$ is a message passing algorithm over the nodes of the constraint graph $G_\Lambda$.  Therefore, OptGNN is a message passing GNN over $G_\Lambda$ (see \defref{def:gnn}).  This point is of practical importance as it is what informs out implementation of the OptGNN architecture.  We then arrive at the following corollary.
\begin{corollary} [Informal] \label{cor:OptGNN}
Given a Max-k-CSP instance $\Lambda$ represented in space $\Phi = O(|\mathcal{P}|q^k)$, there is an OptGNN$_{(M,\Lambda)}$  with $T = O(\epsilon^{-4} \Phi^4)$ layers, and embeddings of dimension $\Phi$ such that there is an instantiation of learnable parameters $M = \{M_{t}\}_{t \in [T]}$ that outputs a set of vectors $V$ satisfying the constraints of \sdpref{alg:sdp-vector-simp} and approximating its optimum to error $\epsilon$.  See formal statement \ref{cor:optgnn}
\end{corollary}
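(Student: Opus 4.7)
The plan is to derive Corollary~\ref{cor:OptGNN} as a direct consequence of Theorem~\ref{thm:main} by exhibiting an explicit instantiation of OptGNN weights that makes the network simulate, layer-for-layer, the message-passing algorithm guaranteed by the theorem. The key observation is that the OptGNN layer $\text{LAYER}_{M_t}(U) = M_t\cdot\text{AGG}(U)$ with aggregation $\text{AGG}(U) = [U,\nabla\mathcal{L}_\rho(U)]\in\R^{2r\times N}$ concatenates precisely the two quantities one needs to form a single gradient-descent step on the penalized Lagrangian $\mathcal{L}_\rho$.

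First I would recall the form of Algorithm~\ref{alg:message-passing-informal}: every iteration performs a gradient step
$U^{t+1} = U^t - \eta\,\nabla\mathcal{L}_\rho(U^t)$
on the quadratically penalized Lagrangian for the SDP relaxation of the Max-CSP. I would then choose the learnable weight for every layer to be the block matrix
$M_t = \bigl[\,I_r \;\; -\eta I_r\,\bigr] \in \R^{r\times 2r}$.
With this choice, $\text{LAYER}_{M_t}(U) = M_t\cdot\text{AGG}(U) = U - \eta\,\nabla\mathcal{L}_\rho(U)$, so $T$ successive compositions of OptGNN layers reproduce the iterate sequence generated by $T$ iterations of the message-passing algorithm applied to the same initialization. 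Setting the embedding dimension to $r=\Phi$ (which by the standard Pataki--Barvinok rank bound suffices to represent an optimal solution of the Raghavendra SDP) and the depth to $T = O(\epsilon^{-4}\Phi^4)$, Theorem~\ref{thm:main} immediately implies that the intermediate embedding $V = \text{LAYER}_{M_T}\circ\cdots\circ\text{LAYER}_{M_1}(U^0)$ is an $\epsilon$-approximate solution of SDP~\ref{alg:sdp-vector-simp}. The per-iteration cost of evaluating a single layer is dominated by the computation of $\nabla\mathcal{L}_\rho$, which the theorem already bounds by $O(\Phi)$; the additional left-multiplication by $M_t$ contributes only an $r^\omega$ factor, yielding the claimed $O(\Phi r^\omega)$ per-layer cost.

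The main subtlety I expect to navigate is any projection or normalization step implicit in Algorithm~\ref{alg:message-passing-informal} that is not literally present in the linear form $M_t\cdot\text{AGG}$ in Definition~\ref{def:OptGNN-maxcsp}. I would handle this either by folding the projection into $\text{AGG}$ (as is already done via NORMALIZE in the Max-Cut warm-up of equation~\ref{eq:neural sdp aggregate}), or by appealing to the fact that the quadratic penalty in $\mathcal{L}_\rho$ already drives $\epsilon$-feasibility through its gradient, so that unconstrained gradient descent suffices to reach the $\epsilon$-approximate regime guaranteed by Theorem~\ref{thm:main}. Modulo this bookkeeping, the corollary reduces to a reparametrization statement: the algorithm of Theorem~\ref{thm:main} lives in the OptGNN hypothesis class.
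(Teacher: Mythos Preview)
Your proposal is correct and is exactly the ``inspection'' the paper's one-line proof gestures at: setting $M_t = [\,I_r \;\; -\eta I_r\,]$ makes each OptGNN layer a gradient-descent step on $\mathcal{L}_\rho$, after which Theorem~\ref{thm:main} supplies the $\epsilon$-approximation guarantee at depth $T = O(\epsilon^{-4}\Phi^4)$ and rank $r=\Phi$. The one subtlety your discussion misses is not normalization (you correctly note the quadratic penalty handles unit-norm feasibility, and indeed the formal Algorithm~\ref{alg:message-passing} has no projection step) but rather the \emph{noise injection}: the formal algorithm is perturbed gradient descent, adding Gaussian noise when the gradient is small to escape saddles, and this stochastic step cannot be encoded by a fixed $M_t$; the paper's formal Corollary~\ref{cor:optgnn} absorbs this into a ``with probability $1-\delta$ over random noise injections'' clause that the informal statement you are proving simply suppresses, so your argument is adequate at the stated level of rigor.
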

Moving on from our result on representing approximation algorithms, we ask whether OptGNN is learnable.  That is to say, does OptGNN approximate the value of \sdpref{alg:sdp-vector-simp} when given a polynomial amount of data? We provide a perturbation analysis to establish the polynomial sample complexity of PAC-learning OptGNN. The key idea is to bound the smoothness of the polynomial circuit $\text{AGG}$ used in the OptGNN layer which is a cubic polynomial analogous to linear attention. We state the informal version below.  For the formal version see \lemref{lem:pac-learn}.
\begin{lemma} [PAC learning] \label{lem:pac-learn-informal}
Let $\mathcal{Q}$ be a dataset of Max-k-CSP instances over an alphabet of size $[q]$ with each instance represented in space $\Phi$.  Here the dataset $\mathcal{D} \defeq \Lambda_1,\Lambda_2,...,\Lambda_{\Gamma} \sim \mathcal{D}$ is drawn i.i.d from a distribution over instances $\mathcal{D}$. Let $M$ be a set of parameters $M = \{M_1,M_2,...,M_T\}$ in a parameter space $\Theta$. Then for $T = O(\epsilon^{-4}\Phi^4)$, for $\Gamma = O(\epsilon^{-4}\Phi^6 \log^4(\delta^{-1}))$, 
let $\widehat{M}$ be the empirical loss minimizing weights for arbitrary choice of initial embeddings $U$ in a bounded norm ball.  Then we have that $\text{OptGNN}$ is $(\Gamma,\epsilon,\delta)$-PAC learnable.  That is to say the empirical loss minimizer $\text{EMP}(\mathcal{Q})$ is within $\epsilon$ from the distributional loss with probability greater than $1-\delta$: 
\begin{align*}
\Pr\left[\left|\textup{\text{EMP}}(\mathcal{Q}) - \E_{\Lambda \sim \mathcal{D}}[\text{OptGNN}_{(\widehat{M},\Lambda)}(U)]
\right| \leq \epsilon \right] \geq 1-\delta.
\end{align*}
\end{lemma}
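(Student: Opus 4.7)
The plan is to establish uniform convergence of empirical to distributional loss over the parameter space $\Theta$ via the standard three-step recipe: (i) bound the Lipschitz dependence of $\text{OptGNN}_{(M,\Lambda)}(U)$ on its weight matrices $M$ on a bounded domain; (ii) construct an $\epsilon$-net of $\Theta$ using this Lipschitz bound; (iii) apply Hoeffding's inequality on each center of the net together with a union bound, then extend by Lipschitzness to all of $\Theta$.

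First I would analyze the smoothness of a single layer. The aggregation $\text{AGG}(U) = [U;\nabla \mathcal{L}_\rho(U)]$ is a cubic polynomial in $U$: each penalty term $(\langle A_i, U^T U\rangle - b_i)^2$ in $\mathcal{L}_\rho$ is quartic, so its gradient is cubic in $U$. For any $U$ in a Frobenius-norm ball of radius $R$, both $\text{AGG}(U)$ and its Jacobian in $U$ are bounded by $\mathrm{poly}(R,\Phi)$, since there are at most $|\mathcal{F}|\le\Phi$ constraint terms each contributing boundedly to the gradient. Consequently each $\text{LAYER}_{M_t}$ is $L_{\mathrm{in}}$-Lipschitz in its input with $L_{\mathrm{in}} = \mathrm{poly}(\|M_t\|,R,\Phi)$, and $L_{\mathrm{wt}}$-Lipschitz in the weight matrix $M_t$ with $L_{\mathrm{wt}} = \|\text{AGG}(U)\|_F = \mathrm{poly}(R,\Phi)$. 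This is the smoothness bound on the ``cubic polynomial circuit'' flagged in the informal statement.

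Next I would compose these bounds across the $T$ layers. By the chain rule, the sensitivity of $\text{OptGNN}_{(M,\Lambda)}(U)$ to a perturbation in $M_t$ is controlled by $L_{\mathrm{wt}}^{(t)}\prod_{s>t} L_{\mathrm{in}}^{(s)}$. Restricting $\Theta$ to a bounded norm ball (the hypothesis class) and controlling per-layer embedding norms, either through a projection step analogous to $\text{NORMALIZE}$ or because $\mathcal{L}_\rho$ itself penalizes large iterates, the total Lipschitz constant $L_{\mathrm{total}}$ remains polynomial in $\Phi$, $R$, and $T$. Together with boundedness of the loss on this domain, this yields a loss that is both bounded and $L_{\mathrm{total}}$-Lipschitz in $M$, uniformly in the instance $\Lambda$.

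Finally I would assemble the generalization bound. The parameter space has dimension $2Tr^2$, so its covering number at scale $\epsilon/L_{\mathrm{total}}$ is $(O(L_{\mathrm{total}}/\epsilon))^{2Tr^2}$. Hoeffding at each center of the net, a union bound, and a Lipschitz extension to all of $\Theta$ give sample complexity polynomial in $\epsilon^{-1}$, $\Phi$, and $\log(1/\delta)$; substituting $T = O(\epsilon^{-4}\Phi^4)$ and $r = O(\Phi)$ and sharpening via standard chaining recovers the stated $O(\epsilon^{-4}\Phi^6\log^4(\delta^{-1}))$. The main technical obstacle is the composition step: a naive bound on a $T$-layer network with cubic nonlinearities scales as $R^{3T}$, so the crux of the proof lies in carefully exploiting the bounded-norm constraint on $M$ together with controlled growth of the embedding norms to keep $L_{\mathrm{total}}$ polynomial in $T$ rather than exponential.
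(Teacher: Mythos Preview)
Your three-step plan (Lipschitz bound $\to$ $\epsilon$-net of the weight space $\to$ Hoeffding plus union bound over net centers) is exactly the route the paper takes: its \lemref{lem:perturbation} carries out step (i)--(ii) by bounding $\|\text{OptGNN}_{M+U}(V)-\text{OptGNN}_{M}(V)\|$ via the Lipschitz gradient lemma for $\nabla\mathcal{L}_\rho$, and its \lemref{lem:agnostic} is the standard finite-hypothesis Hoeffding-plus-union-bound argument you describe in step (iii). Your observation that $\text{AGG}$ is a cubic polynomial in the embeddings and your chain-rule composition across layers also match the paper's analysis.

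The one place you diverge is your final paragraph, where you flag as ``the crux of the proof'' keeping $L_{\mathrm{total}}$ \emph{polynomial} in $T$. The paper does not attempt this, and you do not need to either: \lemref{lem:perturbation} in fact proves an \emph{exponential}-in-$T$ perturbation bound, $\|\text{OptGNN}_{M+U}(V)-\text{OptGNN}_{M}(V)\|\le O(\epsilon(B\rho r)^{2T})$, and simply absorbs it. This is harmless because the Lipschitz constant only enters the sample complexity through $\log|\mathcal{H}|$, so a net of mesh $\epsilon/r^{2T}$ contributes $O(T\log r)$ per parameter, which is polynomial in $T$ after the logarithm. Trying to force a polynomial-in-$T$ Lipschitz constant for a depth-$T$ composition of cubic maps would be the hard (and unnecessary) path; just accept the exponential blowup and let the log kill it.
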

We note that this style of perturbation analysis is akin to the VC theory on neural networks adapted to our unsupervised setting.  Although it's insufficient to explain the empirical success of backprop, we believe our analysis sheds light on how architectures that capture gradient iterations can successfully generalize. 

\begin{wrapfigure}[40]{L}{0.6\textwidth}
\begin{minipage}[t]{0.6\textwidth}
\begin{SDPOPT}[H]
\setcounter{algorithm}{0}
\caption{SDP for Max-k-CSP (Equivalent to UGC-optimal)}
\begin{algorithmic} \label{alg:sdp-vector-simp}
\STATE{SDP Vector Formulation} {$\Lambda = (\mathcal{V},\mathcal{P},\{0,1\})$}.
\\ Multilinear formulation of objective.
\begin{align}  
& \min_{V} \sum_{P_z \in \mathcal{P}} \sum_{s \subseteq z} \frac{w_s}{\mathcal{C}(s)}\sum_{g,g' \subseteq s: \zeta(g,g') = s} \langle v_{g}, v_{g'}\rangle \\
& \text{subject to:} 
\; \|v_{s}\|^2 = 1 \quad \forall s \subseteq \mathcal{S}(P) \text{, } \forall P \in \mathcal{P} \\
& \; \langle v_g,v_{g'} \rangle = \langle v_{h}, v_{h'}\rangle \quad \forall \zeta(g,g') = \zeta(h,h') \; \text{ s.t } \nonumber \\
& \quad \quad \quad \quad \quad \quad \quad \quad \quad \quad \; g \cup g' \subseteq \mathcal{S}(P), \;\forall P \in \mathcal{P} 
\end{align}
\STATE{\small\textbf{Notation: }For a Max-CSP instance $\Lambda$, there is a canonical boolean polynomial representing the objective denoted $P(x_1,...,x_N) = \sum_{P_z \in \mathcal{P}} P_z(X)$ where $P_z$ is the canonical polynomial associated to the predicate over the variables $z$.  Conversely, for each predicate $P \in \mathcal{P}$ we let $\mathcal{S}(P)$ denote the variables in the domain of $P$.  We write out the polynomial $P_z$ with coefficients $\{w_s\}_{s \subseteq z}$ as follows $P_z = \sum_{s \subseteq z} w_s \prod_{i \in s} x_i$.  We introduce the set notation $\zeta(A,B) \defeq A \cup B / A \cap B$ and use $\mathcal{C}(s)$ to denote the size of the set $\{g,g' \subseteq s: \zeta(g,g') = s\}$.}
\end{algorithmic}
\end{SDPOPT} 
\begin{algorithm}[H]
\small
\setcounter{algorithm}{0}
\caption{Message Passing for Max-CSP (informal).}
\label{alg:message-passing-informal}
\begin{algorithmic}
\STATE{\textbf{Input: Max-CSP Instance} $\Lambda$ with constraint graph $G_\Lambda$}
\STATE{$\mathbf{v}^0 \gets \text{Uniform}(\mathcal{S}^{n-1})$ \COMMENT{Initialization on the unit sphere}}
\FOR{$t=1,2, \dots,  T$}
\FOR{$v_i^t \in \mathbf{v}^t$ \COMMENT{message passing on the constraint graph}} 
\STATE{
    \begin{align}
    \label{eq:key-iteration-simplified}
    {v}_i^{t+1} &\gets v_i^{t} -  \eta  \sum_{e,e' \in E(i)} \frac{\partial \mathcal{Y}_\rho(e^t,e'^t) \nonumber}{\partial v_i^t}  
    \end{align}}
\ENDFOR
\ENDFOR
\STATE{\textbf{Output: vector solution $\mathbf{v}^t$ for \sdpref{alg:sdp-vector-simp}}}
\STATE{\small\textbf{Notation:}  $\mathcal{L}_\rho(V)$ can be written as the sum of functions across pairs of edges in the constraint graph $G_\Lambda$ i.e 
$\mathcal{L}_\rho(V) = \sum_{i \in V} \sum_{e,e' \in E(i)} \mathcal{Y}_\rho(e,e')$ where $E(i)$ is the set of edges involving vertex $i$, and $e$ denotes the vector embeddings of its two corresponding vertices.  For analytic form of the messages see \algref{alg:message-passing}}
\end{algorithmic}
\end{algorithm}
\end{minipage}
\end{wrapfigure}

\textbf{OptGNN in practice}.
\Figref{fig:OptGNNsummary} summarizes the 
OptGNN pipeline for solving CO problems.  OptGNN computes node embeddings $V \in \mathbb{R}^{N \times r}$ as per equation \ref{eq:newOptGNN} which feeds into the loss $\mathcal{L}_\rho$.  For pseudocode, please refer to the appendix (Max-Cut: \algref{alg:OptGNNpseudocodemaxcut} and general SDPs: \algref{alg:OptGNNpseudocodegeneral}).  We use Adam \citep{kingma2014adam} to update the parameter matrices $M$. Given a training distribution $\mathcal{D}$, the network is trained in a completely unsupervised fashion by minimizing $\mathbb{E}_{G \sim \mathcal{D}}[\mathcal{L}(\mathbf{v};G)]$ with a standard automatic differentiation package like PyTorch \citep{paszke2019pytorch}. A practical benefit of our approach is that users do not need to reimplement the network to handle each new problem. Users need only implement the appropriate loss, and our implementation uses automatic differentiation to compute the messages in the forward pass. At inference time, the output embeddings must be rounded to a discrete solution. To do this, we select a random hyperplane vector $y \in \mathbb{R}^r$, and for each node with embedding vector $v_i$, we calculate its discrete assignment $x_i \in \{-1,1\}$ as $x_i = \text{sign}(v_i^\top y)$. We use multiple hyperplanes and pick the best resulting solution.

\textbf{Obtaining neural certificates.} We construct dual certificates of optimality from the output embeddings of OptGNN. The certificates provide a bound on the optimal solution of the SDP.  The key idea is to estimate the dual variables of \sdpref{alg:sdp-vector-simp}.  Since we use a quadratic penalty for constraints, the natural dual estimate is one step of the augmented method of Lagrange multipliers on the SDP solution which can be obtained straightforwardly from the primal problem variables $V$.  We then analytically bound the error in satisfying the KKT conditions of \sdpref{alg:sdp-vector-simp}. See \ref{sec:certificate} for derivations and extended discussion, and \autoref{fig:cert100} for an experimental demonstration.

\textbf{Adaptation to Data vs. Theoretical Result}.  The theory result concerns worst case optimality where the embedding dimension is as large as $N$.  In practice we deploy OptGNN with varying choices of the embedding dimension to obtain the best performance.  For low dimensional embeddings, the corresponding low rank SDP objective is NP-hard to optimize. As such, training OptGNN is like training any other GNN for CO in that the weights adapt to the data. This is highlighted in our experiments where OptGNN generally outperforms SDP solvers.

\section{Experiments}
First, we provide a comprehensive experimental evaluation of the OptGNN pipeline on several problems and datasets and compare it against heuristics, solvers, and neural baselines. We then describe a model ablation study, an out-of-distribution performance (OOD) study, and an experiment with our neural certification scheme.
We empirically test the performance of OptGNN on NP-Hard combinatorial optimization problems: \textit{Maximum Cut}, \textit{Minimum Vertex Cover}, and \textit{Maximum 3-SAT}. We obtain results for several datasets and compare against greedy algorithms, local search, a state-of-the-art MIP solver (Gurobi), and various neural baselines. For Max-3-SAT and Min-Vertex-Cover we adopt the quadratically penalized Lagrangian loss of their respective SDP relaxations.  For details of the 
setup see Appendix \ref{app:experimental_setup}.

\begin{table}
\centering
\small
\caption{Performance of OptGNN, Greedy, and Gurobi 0.1s, 1s, and 8s on Maximum Cut. For each approach and dataset, we report the average cut size measured on the test slice. Here, higher score is better. In parentheses, we include the average runtime in \textit{milliseconds} for OptGNN. \\}
\label{tab:cut_baselines}
\begin{tabular}{llrrr}
\toprule
            Dataset &         OptGNN &  Greedy & {Gurobi}& Gurobi  \\
              &             &      &     0.1s &     1.0s  \\
\midrule
BA\textsuperscript{a} (400,500) &   2197.99 (66) & 1255.22 &  2208.11 &  2208.11  \\
ER\textsuperscript{a} (400,500) & 16387.46 (225) & 8622.34 & 16476.72 & 16491.60  \\
HK\textsuperscript{a} (400,500) &   2159.90 (61) & 1230.98 &  2169.46 &  2169.46  \\
WC\textsuperscript{a} (400,500) &   1166.47 (78) &  690.19 &  1173.45 &  1175.97  \\
       \midrule 
     ENZYMES\textsuperscript{b} &     81.37 (14) &   48.53 &    81.45 &    81.45  \\
      COLLAB\textsuperscript{b} &   2622.41 (22) & 1345.70 &  2624.32 &  2624.57  \\
   \midrule 
REDDIT-M-12K\textsuperscript{c} &    568.00 (89) &  358.40 &   567.71 &   568.91  \\
 REDDIT-M-5K\textsuperscript{c} &   786.09 (133) &  495.02 &   785.44 &   787.48  \\
\bottomrule
\end{tabular}
\end{table}

\begin{table}
\centering
\small
\caption{Average number of unsatisfied clauses for Max-3-SAT on random instances with $N=100$ variables and clause ratios $r = 4.00, 4.15, 4.30$. Standard deviation of the ratio over the test set is reported in superscript. In parentheses, we report the average time per instance on the test set in seconds. \\}
\label{tab:sat_baselines}
\begin{tabular}{llll}
\toprule
            Dataset &         $r = $ 4.00 & $r = $ 4.15 & $r = $ 4.30 \\
 \midrule 
Erd\H{o}sGNN &  5.46$^{\pm\text{1.91}}$ (0.01) & 6.14$^{\pm\text{2.01}}$ (0.01) & 6.79$^{\pm\text{2.03}}$ (0.01) \\
 \midrule 
Walksat (100 restarts) &  0.14$^{\pm\text{0.36}}$ (0.12) & 0.36$^{\pm\text{0.52}}$ (0.25)  & 0.68$^{\pm \text{0.65}}$ (0.40) \\ 
Walksat (1 restart) &  0.94$^{\pm\text{0.92}}$ (0.01) & 1.46$^{\pm\text{1.11}}$ (0.01) & 1.97$^{\pm \text{1.28}}$ (0.01) \\
Survey Propagation & 3.32$^{\pm\text{0.81}}$ (0.001) &  3.87$^{\pm\text{0.79}}$ (0.001) & 3.94 $^{\pm\text{0.93}}$ (0.001)
\\ 
\midrule
OptGNN 
& 4.46$^{\pm \text{1.68}}$ (0.01) & 5.15$^{\pm \text{1.76}}$ (0.01) & 5.84$^{\pm \text{2.18}}$ (0.01) \\ 
\midrule
Autograd SDP
& 6.85$^{\pm \text{2.33}}$ (6.80) & 7.52$^{\pm \text{2.38}}$ (6.75) & 8.32$^{\pm \text{2.50}}$ (6.77)\\ 
Low-Rank SDP (\citep{wang2019low}) & 12.38$^{\pm \text{1.06}}$(0.66) & 13.32$^{\pm \text{1.09}}$ (0.67) & 14.27$^{\pm \text{1.08}}$ (0.69) \\
\bottomrule
\end{tabular}
\end{table}

\begin{figure}[!htb]
   \begin{subfigure}[t]{0.9\textwidth}
     \small
     \centering
     \includegraphics[width=1.1\linewidth]{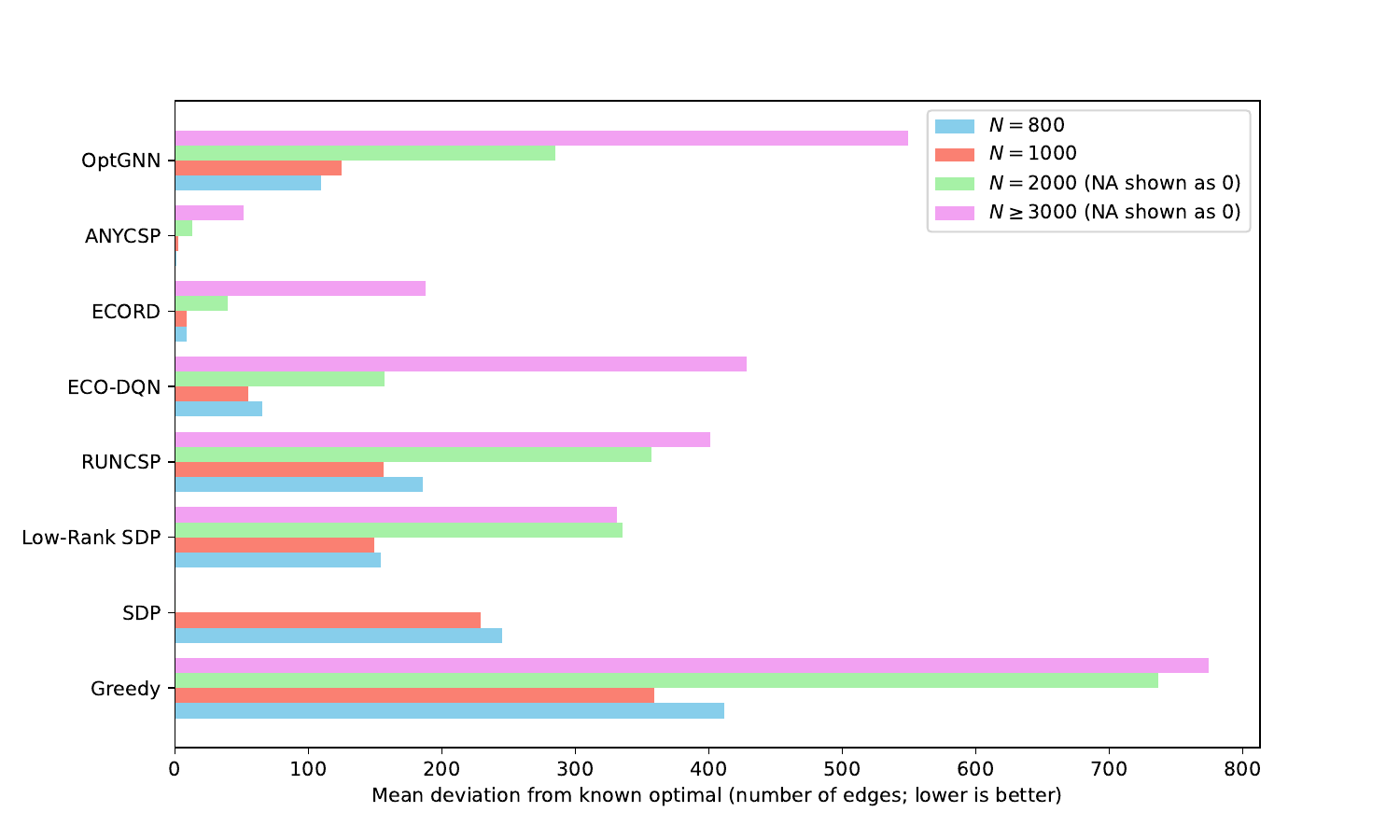}
     \caption{Comparison on GSET Max-Cut instances against state-of-the-art neural baselines. Numbers reported are the mean (over the graphs in the test set) deviations from the best-known Max-Cut values, reported in \cite{benlic2013breakout}.}\label{fig:cut_neural}
   \end{subfigure}
   \hfill
   \begin{subfigure}[t]{0.9\textwidth}
     \centering
     \includegraphics[width=1.1\linewidth]{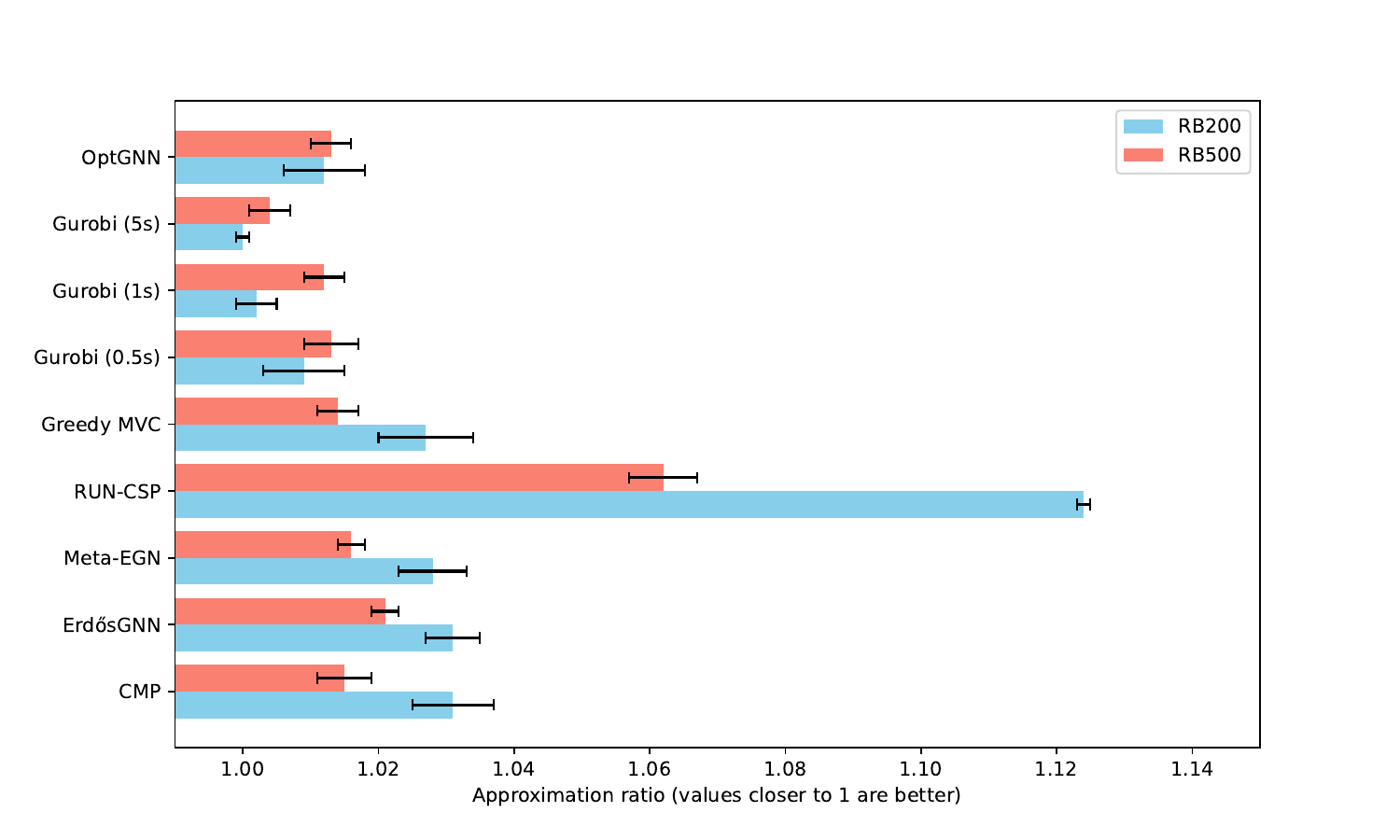}
     \caption{Average approximation ratio and standard deviation over the test set for vertex covers on forced RB instances. A ratio of 1.000 represents finding the minimum vertex cover.}\label{fig:forced-rb}
   \end{subfigure}
   \caption{Results for Max-Cut and Minimum Vertex Cover.}
\end{figure}



\textbf{Min-Vertex-Cover experiments.}
We evaluated OptGNN on forced RB instances, which are hard vertex cover instances from the RB random CSP model that contain hidden optimal solutions \citep{xu2007random}. We use two distributions specified in prior work \citep{wang2023unsupervised}, RB200 and RB500. The results are in \autoref{fig:forced-rb}, which also includes the performance of several neural and classical baselines as reported in \citep{wang2023unsupervised,brusca2023maximum}. OptGNN consistently outperforms state-of-the-art unsupervised baselines on this task and is able to match the performance of Gurobi with a 0.5s time limit.\\
\textbf{Max-3-SAT experiment and ablation.}
The purpose of this experiment is twofold: to demonstrate the viability of OptGNN on the Max-3-SAT problem and to examine the role of overparameterization in OptGNN.
We generate 3-SAT formulae on the fly with 100 variables and a random number of clauses in the $[400, 430]$ interval, and train OptGNN for 100,000 iterations. We then test on instances with 100 variables and $\{400,415,430\}$ clauses. The results are \autoref{tab:sat_baselines}. 
We compare with WalkSAT, a classic local search algorithm for SAT, a low-rank SDP solver \citep{wang2019satnet}, Survey Propagation \citep{braunstein2005survey}, and ErdosGNN \cite{karalias2020erdos}, a neural baseline trained in the same way. We also compare with a simple baseline reported as \enquote{Autograd} that directly employs gradient descent on the penalized Lagrangian using the autograd functionality of Pytorch. For details see \ref{sec:autograd}. OptGNN is able to outperform Erd\H{o}sGNN consistently and improves significantly over Autograd, which supports the overparameterized message passing of OptGNN. OptGNN performs better than the low-rank SDP solver, though does not beat WalkSAT/Survey Prop. It is worth noting that the performance of OptGNN could likely be further improved without significant computational cost by applying a local search post-processing step to its solutions but we did not pursue this further in order to emphasize the simplicity of our approach.

\textbf{Max-Cut experiments.} \autoref{tab:cut_baselines} presents a comparison between OptGNN, a greedy algorithm, and Gurobi for Max-Cut. OptGNN clearly outperforms greedy on all datasets and is competitive with Gurobi when Gurobi is restricted to a similar runtime. For results on more datasets see \autoref{sec:complete-results}.
Following the experimental setup of ANYCSP \cite{tonshoff2022one}, we also tested OptGNN on the GSET benchmark instances \citep{benlic2013breakout}.
We trained an OptGNN for 20k iterations on generated Erd\H{o}s-Renyi graphs $\mathcal{G}_{n,p}$ for $n\in [400,500]$ and $p=0.15$. \autoref{fig:cut_neural} shows the results. We have included an additional low-rank SDP baseline to the results, while the rest of the baselines are reproduced as they appear in the original ANYCSP paper. These include state-of-the-art neural baselines, the Goemans-Williamson algorithm, and a greedy heuristic.
We can see that OptGNN outperforms the SDP algorithms and the greedy algorithm, while also being competitive with the neural baselines. However, OptGNN does not manage to outperform ANYCSP, which to the best of our knowledge achieves the current state-of-the-art results for neural networks.

\begin{wrapfigure}{R}{0.5\textwidth}
    \centering
    \includegraphics[width=0.5\textwidth]{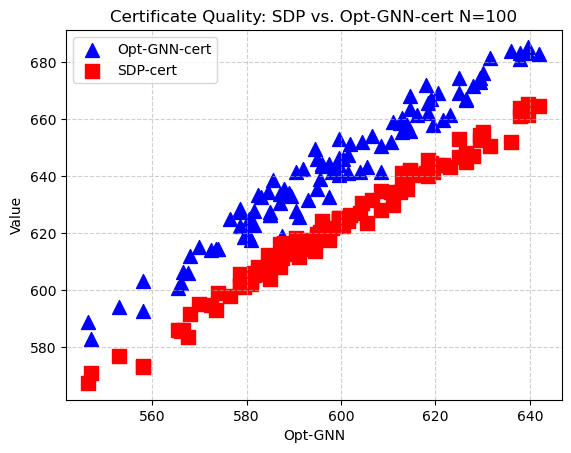}
    \caption{Experimental comparison of SDP versus OptGNN Dual Certificates on random graphs of 100 nodes for the Max-Cut problem. Our OptGNN certificates track closely with the SDP certificates while taking considerably less time to generate.}
    \label{fig:cert100}
\end{wrapfigure}

\textbf{Out of distribution generalization.}
We test OptGNN's ability to perform on data distributions (for the same optimization problem) that it's not trained on. The results can be seen in table \ref{tab:generalization} and \autoref{sec:oodtest}. The results show that the model is capable of performing well even on datasets it has not been trained on.

\textbf{Model ablation.}
We train modern GNN architectures from the literature with the same loss function and compare them against OptGNN. Please see Appendix \ref{sec:VC_ablation} for more details and results on multiple datasets for two different problems. Overall, OptGNN is the best performing model on both problems across all datasets.

\textbf{Experimental demonstration of neural certificates.} \label{sec:certificates-experiment}
Next, we provide a simple experimental example of our neural certificate scheme on small synthetic instances.
Deploying this scheme on Max-Cut on random graphs, we find this dual certificate to be remarkably tight. \figref{fig:cert100} shows an example. For $100$ node graphs with $1000$ edges our certificates deviate from the SDP certificate by about $20$ nodes but are dramatically faster to produce.  The runtime is dominated by the feedforward of OptGNN which is $0.02$ seconds vs. the SDP solve time which is $0.5$ seconds on cvxpy.  See \ref{sec:certificate} for extensive discussion and additional results.

\section{Conclusion}
We presented OptGNN, a GNN that can capture provably optimal message passing algorithms for a large class of combinatorial optimization problems. OptGNN achieves the appealing combination of obtaining approximation guarantees while also being able to adapt to the data to achieve improved results. Empirically, we observed that the OptGNN architecture achieves strong performance on a wide range of datasets and on multiple problems. OptGNN opens up several directions for future exploration, such as the design of powerful rounding procedures that can secure approximation guarantees, the construction of neural certificates that improve upon the ones we described in Appendix \ref{sec:certificate}, and the design of neural SDP-based branch and bound solvers.

 \section{Acknowledgment} 
 The authors would like to thank Ankur Moitra, Sirui Li, and Zhongxia Yan for insightful discussions in the preparation of this work.  Nikolaos Karalias is funded by the SNSF, in the context of the project "General neural solvers for combinatorial optimization and algorithmic reasoning" (SNSF grant number: P500PT\_217999). Stefanie Jegelka and Nikolaos Karalias acknowledge support from NSF AI Institute TILOS (NSF CCF-2112665). Stefanie Jegelka acknowledges support from NSF award 2134108.



\clearpage
\bibliography{icml_bib}
\bibliographystyle{iclr2024_conference}

\nocite{langley00}


\newpage
\appendix
\onecolumn

\section{Min-Vertex-Cover OptGNN}
\label{sec:vertex-cover}
Min-Vertex-Cover can be written as the following integer program 
\begin{align} 
\text{Minimize:} \quad & \sum_{i \in [N]} \frac{1+x_i}{2} \\
\text{Subject to:} \quad & (1-x_i)(1-x_j) = 0 \quad \forall (i,j) \in E \\
& x_i^2 = 1 \quad \forall i \in [N]
\end{align}
To deal with the constraint on the edges $(1-x_i)(1-x_j) = 0$, we add a quadratic penalty to the objective with a penalty parameter $\rho > 0$ yielding
\begin{align} 
\text{Minimize:} \quad & \sum_{i \in [N]} \frac{1+x_i}{2} + \rho \sum_{(i,j) \in E} (1 - x_i - x_j + x_ix_j)^2 \\
\text{Subject to:} \quad & x_i^2 = 1 \quad \forall i \in [N]
\end{align}

Analogously to Max-Cut, we adopt a natural low rank vector formulation for vectors $\mathbf{v} = \{v_i\}_{i \in [N]}$ in $r$ dimensions. 
\begin{align} 
\text{Minimize:} \quad &  \sum_{i \in [N]} \frac{1+\langle v_i,v_{\emptyset}\rangle}{2} + \rho\sum_{(i,j) \in E} (1 - \langle v_i,v_{\emptyset}\rangle - \langle v_j,v_{\emptyset}\rangle + \langle v_i,v_j \rangle)^2\\
\text{Subject to:} \quad & \|v_i\| = 1 \quad v_i \in \R^r \quad \forall i \in [N]
\end{align}

Now we can design a simple projected gradient descent scheme as follows.  For iteration $t $ in max iterations $T$, and for vector $v_i$ in $\mathbf{v}$ we perform the following update.   

\begin{equation}
\hat{v}_i^{t+1} \defeq v_i^t - \eta \big( v_{\emptyset} + 2\rho \sum_{j \in N(i)}(1 - \langle v_i^t,v_{\emptyset}\rangle - \langle v_j^t,v_{\emptyset}\rangle + \langle v_i^t,v_j^t \rangle)(-v_{\emptyset} + v_j^t)  \big)
\end{equation}
\begin{equation}
v_i^{t+1} \defeq \frac{\hat{v}_i^{t+1}}{ \|\hat{v}_i^{t+1}\|}
\end{equation}

We can then define a $\text{OptGNN}_{(M,G)}$ analogously with learnable matrices $M = \{M_{t}\}_{t \in [T]} \in \R^{r \times 2r}$ which are each sets of $T$ learnable matrices corresponding to $T$ layers of neural network. 
Let the message from node $v_j$ to node $v_i$ be
\[\text{MESSAGE}[v_j \rightarrow v_i] \defeq 2\rho (1 - \langle v_i^t,v_{\emptyset}\rangle - \langle v_j^t,v_{\emptyset}\rangle + \langle v_i^t,v_j^t \rangle)(-v_{\emptyset} + v_j^t)  \big)\]
Let the aggregation function $\text{AGG}$ be defined as 
\[
\text{AGG}(\{v_j^t\}_{j \in N(i)}) \defeq \begin{bmatrix}
v_i^t \\
\sum_{j \in N(i)} \text{MESSAGE}[v_j \rightarrow v_i]
\end{bmatrix}
\]
Then for layer $t$ in max iterations $T$, for $v_i$ in $\mathbf{v}$, we have   
\begin{equation} \label{eq:vc-pgd}
\hat{v}_i^{t+1} \defeq M\left( \text{AGG}(v_i^t)\right) 
\end{equation}
\begin{equation}
v_i^{t+1} \defeq \frac{\hat{v}_i^{t+1}}{ \|\hat{v}_i^{t+1}\|}
\end{equation}
This approach can be straightforwardly adopted to compute the maximum clique and the maximum independent set.
\newpage 
\section{Max-3-SAT OptGNN} \label{sec:max3sat}
Our formulation for Max 3-SAT directly translates the $\text{OptGNN}$ architecture from \defref{def:OptGNN-maxcsp}.
Let $\Lambda$ be a 3-SAT instance with a set of clauses $\mathcal{P}$ over a set of binary literals $\mathcal{V} =\{x_1,x_2,...,x_N\} \in \{-1,1\}$.  Here $-1$ corresponds to assigning the literal to False whilst $1$ corresponds to True. Each clause $P_z \in \mathcal{P}$ can be specified by three literals $(x_i,x_j,x_k)$ and a set of three 'tokens'  $(\tau_i,\tau_j,\tau_k) \in \{-1,1\}^3$ which correspond to whether the variable $x_i,x_j,x_k$ are negated in the clause.   For instance the clause $(x_1 \vee \neg x_2 \vee x_3)$ is translated into three literals $(x_1,x_2,x_3)$ and tokens $(1,-1,1)$.    

The 3-SAT problem on instance $\Lambda$ is equivalent to maximizing the following polynomial   
\begin{align}
\sum_{(x_i,x_j,x_k,\tau_i,\tau_j,\tau_k) \in \mathcal{P}} \big(1 - \frac{1}{8}(1+\tau_i x_i)(1+\tau_j x_j)(1+\tau_k x_k)\big)
\end{align}
Subject to the constraint $x_1,x_2,...,x_N \in \{1,-1\}$.  Now we're reading to define the $\text{OptGNN}$ for 3-SAT
\paragraph{Objective:} First we define the set of vector embeddings. For every literal $x_i$ we associate an embedding $v_i \in \R^r$.  For every pair of literals that appear in a clause $(x_i,x_j)$ we associate a variable $v_{ij} \in \R^r$.  Finally we associate a vector $v_\emptyset$ to represent $1$.  Then the unconstrained objective $\text{SAT}$ is defined as   

\begin{multline}
\text{SAT}(\Lambda) \defeq \sum_{(x_i,x_j,x_k,\tau_i,\tau_j,\tau_k) \in \mathcal{P}}\frac{1}{8}\Bigg(-\tau_i \tau_j \tau_k \frac{1}{3}\left[\langle v_i,v_{jk} \rangle + \langle v_j,v_{ik} \rangle + \langle v_k,v_{ij}\rangle\right]\\
- \tau_i\tau_j\frac{1}{2}\left[\langle v_i,v_j \rangle + \langle v_{ij},v_{\emptyset} \rangle\right] - \tau_i\tau_k\frac{1}{2}\left[\langle v_i,v_k \rangle + \langle v_{ik},v_{\emptyset} \rangle\right] - \tau_j\tau_k\frac{1}{2}\left[\langle v_j,v_k \rangle + \langle v_{jk},v_{\emptyset} \rangle\right]\\ - \tau_i\langle v_i,v_{\emptyset}\rangle - \tau_j\langle v_j,v_{\emptyset}\rangle - \tau_k\langle v_k,v_\emptyset\rangle + 7 \Bigg)
\end{multline}

\paragraph{Constraints: } The constraints are then as follows.  For every clause involving variables $(x_i,x_j,x_k)$ we impose the following constraints on $v_i,v_j,v_k,v_{ij},v_{ik},v_{jk}$ and $v_\emptyset$.  Note that these constraints are exactly the ones listed in \algref{alg:sdp-vector} which we organize here for convenience. The naming convention is the degree of the polynomial on the left to the degree of the polynomial on the right.    
\begin{enumerate}
  \item \textbf{pair-to-pair} 
  \begin{align}
      \langle v_i,v_j\rangle = \langle v_{ij}, v_{\emptyset}\rangle\\
      \langle v_i,v_k\rangle = \langle v_{ik}, v_{\emptyset}\rangle\\
      \langle v_j,v_k\rangle = \langle v_{jk}, v_{\emptyset}\rangle
  \end{align}
  \item \textbf{triplets-to-triplet}
  \begin{align}
      \langle v_{ij},v_k \rangle =
      \langle v_{ik},v_j \rangle =
    \langle v_{jk},v_i \rangle
  \end{align}
  \item \textbf{triplet-to-single}
  \begin{align}
  \langle v_i,v_{ij}\rangle = \langle v_j,v_{\emptyset} \rangle\\
  \langle v_j,v_{ij}\rangle = \langle v_i,v_{\emptyset}\rangle\\
  \langle v_j,v_{jk}\rangle = \langle v_k,v_{\emptyset}\rangle\\
  \langle v_k,v_{jk}\rangle = \langle v_j,v_{\emptyset}\rangle\\
  \langle v_i,v_{ik}\rangle = \langle v_k,v_{\emptyset}\rangle\\
  \langle v_k,v_{ik}\rangle = \langle v_i,v_{\emptyset}\rangle\\
  \end{align}
  \item \textbf{quad-to-pair} 
  \begin{align}
      \langle v_{ij},v_{jk}\rangle = \langle v_{i},v_{k}\rangle\\
      \langle v_{ij},v_{ik}\rangle = \langle v_{j},v_{k}\rangle\\
      \langle v_{ik},v_{jk}\rangle = \langle v_{i},v_{j}\rangle\\
  \end{align}
  \item \textbf{unit norm}
  \begin{align}
      \left\| v_i\right\| = \left\| v_j\right\| = \left\| v_k\right\| = \left\| v_{ij}\right\| = \left\| v_{ik}\right\| = \left\| v_{jk}\right\| = \left\| v_{\emptyset}\right\| = 1
  \end{align}
\end{enumerate}

Then for completeness we write out the full penalized objective.  We perform this exercise in such great detail to inform the reader of how this could be set up for other Max-CSP's.  
\begin{definition} [OptGNN for 3-SAT] Let $\Lambda$ be a 3-SAT instance.  Let $\mathcal{L}_\rho$ be defined as 
\begin{multline} \label{eq:83}
    \mathcal{L}_\rho(V) \defeq \sum_{(x_i,x_j,x_k,\tau_i,\tau_j,\tau_k) \in \mathcal{P}} \Biggr[-\frac{1}{8}\Bigg(-\tau_i \tau_j \tau_k \frac{1}{3}\left[\langle v_i,v_{jk} \rangle + \langle v_j,v_{ik} \rangle + \langle v_k,v_{ij}\rangle\right]\\
- \tau_i\tau_j\frac{1}{2}\left[\langle v_i,v_j \rangle + \langle v_{ij},v_{\emptyset} \rangle\right] - \tau_i\tau_k\frac{1}{2}\left[\langle v_i,v_k \rangle + \langle v_{ik},v_{\emptyset} \rangle\right] - \tau_j\tau_k\frac{1}{2}\left[\langle v_j,v_k \rangle + \langle v_{jk},v_{\emptyset} \rangle\right]\\ - \tau_i\langle v_i,v_{\emptyset}\rangle - \tau_j\langle v_j,v_{\emptyset}\rangle - \tau_k\langle v_k,v_\emptyset\rangle + 7 \Bigg) \\
+ \rho \left[ (\langle v_i,v_j\rangle - \langle v_{ij}, v_{\emptyset}\rangle)^2
      + (\langle v_i,v_k\rangle - \langle v_{ik}, v_{\emptyset}\rangle)^2 + 
      (\langle v_j,v_k\rangle -\langle v_{jk}, v_{\emptyset}\rangle)^2\right]\\
      + \rho\left[(\langle v_{ij},v_k \rangle -
      \langle v_{ik},v_j \rangle)^2 + (\langle v_{ik},v_j \rangle -\langle v_{jk},v_i \rangle)^2 + (\langle v_{jk},v_i \rangle - \langle v_{ij},v_k \rangle )^2\right]\\
      + \rho \left[(\langle v_{ij},v_{jk}\rangle - \langle v_{i},v_{k}\rangle)^2 + 
      (\langle v_{ij},v_{ik}\rangle - \langle v_{j},v_{k}\rangle)^2 + 
      (\langle v_{ik},v_{jk}\rangle - \langle v_{i},v_{j}\rangle)^2\right]\\
      \rho\big[(\langle v_i,v_{ij}\rangle - \langle v_j,v_{\emptyset} \rangle)^2 + 
  (\langle v_j,v_{ij}\rangle - \langle v_i,v_{\emptyset}\rangle)^2 + 
  (\langle v_j,v_{jk}\rangle - \langle v_k,v_{\emptyset}\rangle)^2 + 
  (\langle v_k,v_{jk}\rangle - \langle v_j,v_{\emptyset}\rangle)^2\\ + 
  (\langle v_i,v_{ik}\rangle - \langle v_k,v_{\emptyset}\rangle)^2 + 
  (\langle v_k,v_{ik}\rangle - \langle v_i,v_{\emptyset}\rangle)^2 \big]\\
      + \rho\left[ (\left\| v_i\right\| - 1)^2 + (\left\| v_j\right\| - 1)^2 + (\left\| v_k\right\| - 1)^2 + (\left\| v_{ij}\right\| - 1)^2 + (\left\| v_{ik}\right\| - 1)^2 + (\left\| v_{jk}\right\| - 1)^2 + (\left\| v_{\emptyset}\right\| - 1)^2\right]
      \Biggr]
\end{multline}

Then taking the gradient of $\mathcal{L}$ gives us the precise form of the message passing.  We list the forms of the messages from adjacent nodes in the constraint graph $G_\Lambda$.  
\paragraph{Message: Pair to Singles for Single in Pair.} This is the message $\text{MESSAGE}[v_{ij} \rightarrow v_i]$  for each pair node $v_{ij}$ to a single $v_i$ node.  
\begin{align}
\text{MESSAGE}[v_{ij} \rightarrow v_i] =
      |\{c \in \mathcal{P}: x_i,x_j \in c\}|2\rho (\langle v_i,v_{ij}\rangle - \langle v_j,v_\emptyset\rangle) v_{ij}
\end{align}
\paragraph{Message: Pair to Singles for Single not in Pair.} This is the message $\text{MESSAGE}[v_{ij} \rightarrow v_k]$  for each pair node $v_{ij}$ to a single $v_k$ node. 
\begin{multline}
\text{MESSAGE}[v_{ij} \rightarrow v_k] = \sum_{c \in \mathcal{P}: (x_i,x_j,x_k) \in c} \Biggr[-\frac{1}{8}\Bigg(-\tau_i \tau_j \tau_k \frac{1}{3} v_{ij}\Bigg)\\ + 2\rho \left[(\langle v_k,v_{ij}\rangle - \langle v_i,v_{jk}\rangle) v_{ij} + (\langle v_i,v_{jk}\rangle - \langle v_k,v_{ij}\rangle)(-v_{ij})\right] \Biggr]
\end{multline}
\paragraph{Message: Single to Pair for Single in Pair.} This is the message $\text{MESSAGE}[v_{i} \rightarrow v_{ij}]$ for each single node $v_{i}$ to a pair $v_k$
\begin{align}
\text{MESSAGE}[v_{i} \rightarrow v_{ij}] =
      |\{c \in \mathcal{P}: x_i,x_j \in c\}|2\rho (\langle v_i,v_{ij}\rangle - \langle v_j,v_\emptyset\rangle) v_{i}
\end{align}
\paragraph{Message: Single to Pair for Single not in Pair.} This is the message $\text{MESSAGE}[v_{ij} \rightarrow v_k]$  for each pair node $v_{ij}$ to a single $v_k$ node. 
\begin{align}
\text{MESSAGE}[v_{ij} \rightarrow v_k] =\sum_{c \in \mathcal{P}: (x_i,x_j,x_k) \in c} \Biggr[-\frac{1}{8}\Bigg(-\tau_i \tau_j \tau_k \frac{1}{3} v_{k}\Bigg)\\  + 2\rho \left[(\langle v_k,v_{ij}\rangle - \langle v_i,v_{jk}\rangle) v_{k} + (\langle v_k,v_{ij}\rangle - \langle v_j,v_{ik}\rangle)v_{k}\right] \Biggr]
\end{align}
\paragraph{Message: Single to Single} This is the message $\text{MESSAGE}[v_{i} \rightarrow v_j]$  for each single node $v_{i}$ to a single $v_j$ node. 
\begin{align}
\text{MESSAGE}[v_{i} \rightarrow v_j] = \sum_{c \in \mathcal{P}: (x_i,x_j,x_k) \in c} \Biggr[-\frac{1}{8}\Bigg(-\tau_i \tau_j \frac{1}{2} v_{i}\Bigg)\\ + 2\rho \left[(\langle v_i,v_{j}\rangle - \langle v_{ij},v_{\emptyset}\rangle) v_{i} + (\langle v_{ik},v_{jk}\rangle - \langle v_i,v_j\rangle)(-v_i)\right] \Biggr]
\end{align}
\paragraph{Message: Pair to Pair} This is the message $\text{MESSAGE}[v_{ij} \rightarrow v_{jk}]$  for each pair node $v_{ij}$ to a pair $v_{jk}$ node. 
\begin{align}
\text{MESSAGE}[v_{ij} \rightarrow v_{jk}] =  |\{c \in \mathcal{P}: x_i,x_j,x_k \in c\}|2\rho \left[(\langle v_{ij},v_{jk}\rangle - \langle v_{i},v_{k}\rangle) v_{ij}\right] 
\end{align}
Then the $\text{OptGNN}_{(M,G_\Lambda)}$ is defined with the following functions UPDATE, AGGREGATE, and NONLINEAR.   
For any node $v$, the OptGNN update is 
\begin{align}
\text{AGGREGATE}(\{v_{\zeta}\}_{\zeta \in N(v)}) &\defeq \sum_{v_\zeta \in N(v)}\text{MESSAGE}[v_\zeta \rightarrow v] \\
\text{UPDATE}(v) &= M\left(\begin{bmatrix}
v \\
\text{AGGREGATE}(\{v_{\zeta}\}_{\zeta \in N(v)})
\end{bmatrix}\right) \\ 
\text{NONLINEAR}(v) &= \frac{v}{\|v\|} 
\end{align}
\end{definition}
Finally, we note that many of the signs in the forms of messages could have been chosen differently i.e $\langle v_i,v_j\rangle - \langle v_{ij}, v_\emptyset\rangle$ produces different gradients from $\langle v_{ij}, v_\emptyset\rangle - \langle v_i,v_j\rangle$.  We leave small choices like this to the reader.

\section{Optimality of Message Passing for Max-CSP} \label{sec:message-passing}

Our primary theoretical result is that a polynomial time message passing algorithm  on an appropriately defined constraint graph computes the approximate optimum of \sdpref{alg:sdp-vector} which is notable for being an SDP that achieves the Unique Games optimal integrality gap.  

Our proof roadmap is simple. First, we design an SDP relaxation \sdpref{alg:sdp-vector} for Max-k-CSP that is provably equivalent to the SDP of \cite{raghavendra2008optimal} and therefore inherits its complexity theoretic optimality.  Finally, we design a message passing algorithm to approximately solve \sdpref{alg:sdp-vector} in polynomial time to polynomial precision. Our message passing algorithm has the advantage of being formulated on an appropriately defined constraint graph.  For a Max-k-CSP instance $\Lambda$ with $N$ variables, $|\mathcal{P}|$ predicates, over an alphabet of size $q$, it takes $|\mathcal{P}|q^k$ space to represent the Max-CSP.  To declutter notation, we let $\Phi$ be the size of the Max-CSP which is equal to $|\mathcal{P}|q^k$ .  Our message passing algorithm achieves an additive $\epsilon$ approximation in time $poly(\epsilon^{-1},\Phi,\log(\delta^{-1}))$ which is then polynomial in the size of the CSP and inverse polynomial in the precision.      

Here we briefly reiterate the definition of Max-k-CSP.  A Max-k-CSP instance $\Lambda = (\mathcal{V}, \mathcal{P}, q)$ consists of a set of $N$ variables $\mathcal{V} \defeq \{x_i\}_{i \in [N]}$ each taking values in an alphabet $[q]$ and a set of predicates $\mathcal{P} \defeq \{P_z\}_{z \subset \mathcal{V}}$ where each predicate is a payoff function over $k$ variables denoted $z = \{x_{i_1}, x_{i_2}, ..., x_{i_k}\}$.  Here we refer to $k$ as the arity of the Max-k-CSP, and we adopt the normalization that each predicate $P_z$ returns outputs in $[0,1]$.  We index each predicate $P_z$ by its domain $z$ and we will use the notation $\mathcal{S}(P)$ to denote the domain of a predicate $P$.  The goal of Max-k-CSP is to maximize the payoff of the predicates.   

\begin{equation} \label{eq:csp-obj}
\max_{(x_1,...,x_N) \in [q]^N} \frac{1}{|\mathcal{P}|}\sum_{P_z \in \mathcal{P}} P_z(X_z) 
\end{equation}

Where $X_z$ denotes the assignment of variables $\{x_i\}_{i \in z}$.  

There is an SDP relaxation of \eqref{eq:csp-obj} that is the "qualitatively most powerful assuming the Unique Games conjecture" \cite{raghavendra2008optimal}. More specifically, the integrality gap of the SDP achieves the Unique Games optimal approximation ratio.  Furthermore, there exists a rounding that achieves its integrality gap.

\paragraph{SDP Reformulation: } Next we will introduce the SDP formulation we adopt in this paper.  For the sake of exposition and notational simplicity, we will work with binary Max-k-CSP's where $q = \{0,1\}$.  The extension to general $q$ is straightforward and detailed in the appendix.

We will adopt the standard pseudoexpectation and pseudodistribution formalism in describing our SDP.  
Let $\pE_\mu[\mathbf{x}]$ be a matrix in dimension $\R^{(N+1)^{d/2} \times (N+1)^{d/2}}$ of optimization variables defined as follows

\begin{align}
\pE_\mu[\mathbf{x}] \defeq \pE_\mu[(1,x_1,x_2,...,x_N)^{\otimes d/2} \big((1,x_1,x_2,...,x_N)^{\otimes d/2}\big)^T]
\end{align}
Where we use $\otimes$ to denote tensor product.  It is convenient to think of  $\pE_\mu[\mathbf{x}]$ as a matrix of variables denoting the up to $d$ multilinear moments of a distribution $\mu$ over the variables $\mathcal{V}$.  A multilinear  polynomial is a polynomial of the form $X_\phi \defeq \prod_{i \in \phi} x_i$ for some subset of the variables $\phi \subseteq \mathcal{V}$.  We index the variables of the matrix $\pE_\mu[\mathbf{x}]$ by the multilinear moment that it represents.  Notice that this creates repeat copies as their are multiple entries representing the same monomial.  This is dealt with by constraining the repeated copies to be equal with linear equality constraints.    

Specifically, let $z$ be a subset of the CSP variables $z \subset \{x_i\}_{i \in [N]}$ of size $k$.  Let $X_z$ denote the multilinear moment $X_z \defeq \prod_{i \in z} x_i$.  Then $\pE_\mu[X_z]$ denotes the SDP variable corresponding to the multilinear moment $\E_{\mu}[X_{z}]$.  Of course optimizing over the space of distributions $\mu$ over $\mathcal{V}$ is intractable, and so we opt for optimizing over the space of low degree pseudodistributions and their associated pseudoexpecation functionals.  See \cite{barak2014sum} for references therein.       

In particular, for any subset of variables $X_{z} \defeq \{x_{i_1},...,x_{i_k}\} \in \mathcal{V}$ we let $\pE_\mu[\mathbf{x}]\big|_{z,d}$ denote the matrix of the up to degree up to $d$ multilinear moments of the variables in $z$.

\begin{align}
\pE_\mu[\mathbf{x}]\big|_{z,d} \defeq \pE_\mu[(1,x_{i_1},x_{i_2},...,x_{i_k})^{\otimes d/2} \big((1,x_{i_1},x_{i_2},...,x_{i_k})^{\otimes d/2}\big)^T]
\end{align}   
We refer to the above matrix as a degree $d$ pseudoexpectation funcitonal over $X_z$.  Subsequently, we describe a pseudoexpectation formulation of our SDP followed by a vector formulation.
\paragraph{Multilinear Formulation:}A predicate for a boolean Max-k-CSP $P_z(X_z)$ can be written as a multilinear polynomial

\begin{align} \label{eq:defpz}
P_z(X_z) \defeq \sum_{\tau = (\tau_1,...,\tau_k) \in \{-1,1\}^k} w_{z,\tau}\prod_{x_i \in z} \frac{1+\tau_i x_i}{2} \defeq \sum_{s \subseteq z} y_sX_s
\end{align}
For some real valued weights $w_{z,\tau}$ and $y_s$ which are simply the fourier coefficients of the function $P_z$.  Then the pseudoexpectation formulation of our SDP is as follows
\begin{equation} \label{eq:sdp-obj}
\max_{\pE_{\mu}[\mathbf{x}]} \frac{1}{|\mathcal{P}|}\sum_{P_z \in \mathcal{P}} \pE_{\mu}[P_z(X_z)] 
\end{equation}

subject to the following constraints 
\begin{enumerate}
    \item \textbf{Unit: } $\pE_\mu[1] = 1$,  $\pE_\mu[x_i^2] = 1$ for all $x_i \in \mathcal{V}$, and $\pE_\mu[\prod_{i \in s} x_i^2 \prod_{j \in s'} x_j] = \pE_\mu[\prod_{j \in s'} x_j]$ for all $s,s' \subseteq \mathcal{S}(P)$ for every predicate $P \in \mathcal{P}$ such that $2s + s' \leq k$.  In expectation, the squares of all multilinear polynomials are equal to $1$.     
    \item \textbf{Positive Semidefinite: } $\pE_\mu[\mathbf{x}]|_{\mathcal{V},2} \succeq 0$ i.e the degree two pseudoexpectation is positive semidefinite. 
 $\pE_\mu[\mathbf{x}]\big|_{z,2k} \succeq 0$ for all $z = \mathcal{S}(P)$ for all $P \in \mathcal{P}$. The moment matrix for the multilinear polynomials corresponding to every predicate is positive semidefinite.     
\end{enumerate}

Equivalently we can view the SDP in terms of the vectors in the cholesky decomposition of $\pE_\mu[\mathbf{x}]$.  We rewrite the above SDP accordingly.  For this purpose it is useful to introduce the notation $\zeta(A,B) \defeq A \cup B / A \cap B$.  It is also useful to introduce the notation $\mathcal{C}(s)$ for the size of the set $\{g,g' \subseteq s: \zeta(g,g') = s\}$.    

\begin{SDPOPT}
\caption{SDP for Max-k-CSP (Equivalent to UGC-optimal)}
\begin{algorithmic} \label{alg:sdp-vector}
\STATE{SDP Vector Formulation} {$\Lambda = (\mathcal{V},\mathcal{P},\{0,1\})$}. Multilinear formulation of objective.
\begin{align}  
\min  & \frac{1}{|\mathcal{P}|}\sum_{P_z \subset \mathcal{P}} \pE_{\mu}[-P_z(X_z)]  \defeq \sum_{P_z \in \mathcal{P}} \sum_{s \subseteq z} w_s\frac{1}{|\mathcal{C}(s)|}\sum_{g,g' \subseteq s: \zeta(g,g') = s} \langle v_{g}, v_{g'}\rangle \\
\text{subject to:} 
& \; \|v_{s}\|^2 = 1 \quad \forall s \subseteq \mathcal{S}(P) \text{, } \forall P \in \mathcal{P} \\
& \; \pE_\mu[X_{\zeta(g,g')}] \defeq \langle v_g,v_{g'} \rangle & \nonumber \\
& \quad \quad \quad \quad \quad \; \; = \langle v_{h}, v_{h'}\rangle \quad \forall \zeta(g,g') = \zeta(h,h') \; \text{ s.t } \; g \cup g' \subseteq \mathcal{S}(P), \;\forall P \in \mathcal{P} 
\end{align}
\STATE{First constraint is the square of multilinear polynomials are unit.\\
Second constraint are degree $2k$ SoS constraints for products of multilinear polynomials.}
\end{algorithmic}
\end{SDPOPT} 



\begin{lemma}
For Max-k-CSP instance $\Lambda$, The SDP of \sdpref{alg:sdp-vector} is at least as tight as the SDP of \cite{raghavendra2008optimal}.  
\end{lemma}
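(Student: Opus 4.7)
The plan is to show that every feasible solution of \sdpref{alg:sdp-vector} can be mapped to a feasible solution of the Raghavendra SDP with the same objective value; since both are minimizations over the relaxed feasible set, this will imply our SDP is at least as tight.

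First I would recall the Raghavendra SDP: it consists of (i) a vector $v_{i,a}\in\R^N$ for every variable--assignment pair $(x_i,a)\in\mathcal V\times[q]$, (ii) a local probability distribution $\mu_P$ on $[q]^{\mathcal S(P)}$ for every predicate $P\in\mathcal P$, together with the constraints that inner products $\langle v_{i,a},v_{j,b}\rangle$ equal the pairwise marginals of any local distribution that contains both $x_i$ and $x_j$, that $\{v_{i,a}\}_a$ are orthogonal for fixed $i$ with $\sum_a\|v_{i,a}\|^2=1$, and that the global Gram matrix is PSD. The objective is $\frac1{|\mathcal P|}\sum_P \E_{\mu_P}[P(X_{\mathcal S(P)})]$.

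Next I would construct the Raghavendra solution from a feasible $V$ of \sdpref{alg:sdp-vector}. For each predicate $P$ with $\mathcal S(P)=z$ of size $k$, the vectors $\{v_g\}_{g\subseteq z}$ assemble into a degree-$2k$ pseudoexpectation matrix $\tilde\E_\mu[\mathbf x]|_{z,2k}$ that is PSD (as the Gram matrix of the $v_g$'s), satisfies $\tilde\E_\mu[x_i^2]=1$ by the unit-norm constraint, and is consistent across equivalent multilinear monomials by the equality constraints $\langle v_g,v_{g'}\rangle=\langle v_h,v_{h'}\rangle$ whenever $\zeta(g,g')=\zeta(h,h')$. Because $z$ contains only $k$ boolean variables and we have a SoS-feasible pseudoexpectation of degree $2k$ satisfying the boolean identities, standard SoS-exactness on the hypercube (the $2^k$-dimensional space of multilinear polynomials on $z$ is spanned, so PSD + consistency force an actual atomic distribution) produces a genuine probability distribution $\mu_P$ on $\{-1,1\}^z$ whose moments match $\{\langle v_g,v_{g'}\rangle\}$. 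I would then convert from the $\pm1$ parametrization to the alphabet $[q]$ using the standard indicator/Fourier bijection sketched in equation (\ref{eq:defpz}). Finally, to obtain the Raghavendra variable--value vectors I would take $v_{i,a}$ to be the appropriate scaled projection of the degree-$1$ vector $v_{\{i\}}$ (for $q=2$, $v_{i,1}=(v_\emptyset+v_{\{i\}})/2$ and $v_{i,-1}=(v_\emptyset-v_{\{i\}})/2$, with the obvious extension for $q>2$ using the Fourier basis).

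The remaining verifications are routine. Consistency of pairwise marginals across overlapping predicates follows from the cross-predicate equality constraints $\langle v_g,v_{g'}\rangle=\langle v_h,v_{h'}\rangle$ for $\zeta(g,g')=\zeta(h,h')$, which forces any lower-degree multilinear moment to be well defined independently of which predicate we extracted it from; this transfers to agreement of the marginals of the $\mu_P$. Orthogonality and norm conditions on $\{v_{i,a}\}_a$ follow from the unit constraints $\|v_{\{i\}}\|=\|v_\emptyset\|=1$ and the $\pm1$-to-$[q]$ change of basis. Global PSD-ness holds because all these vectors live in the single Hilbert space spanned by the $V$ we started from. Finally, the objective values match by expanding $P_z(X_z)$ in the multilinear basis as in (\ref{eq:defpz}) and using $\E_{\mu_P}[X_s]=\langle v_g,v_{g'}\rangle$ for any $g,g'$ with $\zeta(g,g')=s$.

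The main obstacle I anticipate is the step that turns the local degree-$2k$ PSD pseudoexpectation into an \emph{honest} probability distribution $\mu_P$ on $[q]^k$: this is where the fact that our SDP carries a full degree-$2k$ moment matrix on each predicate's variables (rather than only degree $2$) is essential, and where Raghavendra's local-distribution variables are recovered. Once that exactness argument is in hand, the remainder is bookkeeping about marginal consistency and change of basis.
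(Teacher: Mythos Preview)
Your proposal is correct and follows essentially the same approach as the paper's proof. The paper's argument is a two-sentence sketch: Raghavendra's SDP is degree-$2$ SoS augmented with $k$-local distributions on each predicate's domain, and \sdpref{alg:sdp-vector} captures the former via the unit-norm Cholesky vectors and the latter because degree-$2k$ SoS on $k$ boolean variables is exact (hence yields an actual local distribution $\mu_P$). You have spelled out in considerably more detail the same mapping---extracting $\mu_P$ from the local pseudoexpectation, converting to variable--value vectors, and checking marginal consistency---and you have correctly identified the only nontrivial step (SoS-exactness on the $k$-variable hypercube) as the crux.
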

\begin{proof}
The SDP of \cite{raghavendra2008optimal} is a degree $2$ SoS SDP augmented with $k$-local distributions for every predicate $P \in \mathcal{P}$.  By using the vectors of the cholesky decomposition and constraining them to be unit vectors  we automatically capture degree $2$ SoS.  To capture $k$ local distributions we simply enforce degree $2k$ SoS on the boolean hypercube for the domain of every predicate.  This can be done with the standard vector formulation written in \sdpref{alg:sdp-vector}.  See \cite{barak2014sum} for background and references.       
\end{proof}
Moving forward, the goal of \Algref{alg:message-passing} is to minimize the loss $\mathcal{L}_\rho(\mathbf{v})$ which is a function of the Max-CSP instance $\Lambda$.  
\begin{multline} \label{eq:main-loss}
\mathcal{L}_\rho(\mathbf{v}) = \frac{1}{|\mathcal{P}|}\Bigg[\sum_{P_z \in \mathcal{P}} \sum_{s \subseteq z} y_s\frac{1}{|\mathcal{C}(s)|}\sum_{\substack{g,g' \subseteq s\\ \text{s.t }\zeta(g,g') = s}} \langle v_{g}, v_{g'}\rangle\\ 
+ \rho \Bigg[\sum_{P_z \in \mathcal{P}} \sum_{\substack{g,g',h,h' \subseteq z\\ \text{s.t } \zeta(g,g') = \zeta(h,h')}} \big(\langle v_g,v_{g'}\rangle - \langle v_h, v_{h'}\rangle\big)^2 \\
+\sum_{v_s \in \mathbf{v}} (\| v_s\|^2 - 1)^2\Bigg]\Bigg]
\end{multline}
The loss $\mathcal{L}_\rho$ has gradient of the form 
\begin{multline} \label{eq:loss-grad}
\frac{\partial \mathcal{L}_\rho(\mathbf{v})}{\partial v_w}  = \frac{1}{|\mathcal{P}|}\Bigg[\sum_{\substack{P_z \in \mathcal{P}\\ \text{s.t } w \subseteq z}} \sum_{\substack{s \subseteq z\\\text{s.t } w \subseteq s}} y_s\frac{1}{|\mathcal{C}(s)|}\sum_{\substack{w' \subseteq s\\ \text{s.t } \zeta(w,w') = s}} v_{w'}\\ 
+ 2\rho \Bigg[\sum_{\substack{P_z \in \mathcal{P}\\ \text{s.t } w \subseteq z}} \sum_{\substack{w',h,h' \subseteq s\\ \text{s.t } \zeta(w,w') = \zeta(h,h')}} \big(\langle v_w,v_{w'}\rangle - \langle v_h, v_{h'}\rangle\big) v_w' \\
+(\| v_w\|^2 - 1)v_w\Bigg]\Bigg]
\end{multline} 
Noticing that the form of the gradient depends only on the vectors in the neighborhood of the constraint graph $G_\Lambda$ we arrive at our message passing algorithm.  The key to our proof is bounding the number of iterations required to optimize \eqref{eq:main-loss} to sufficient accuracy to be an approximate global optimum of \sdpref{alg:sdp-vector}.  

\begin{algorithm}[!h] 
\caption{Message Passing for Max-CSP}
\label{alg:message-passing} 
\begin{algorithmic}[1]
\STATE \textbf{Inputs: Max-CSP instance $\Lambda$}\\
\STATE $n \gets \Phi \log(\delta^{-1})$
\STATE $\eta, \psi, \sigma \gets n^{-2}$ \quad \quad \quad \quad \quad \quad  \quad \quad \quad\quad \quad \quad \COMMENT{Initialize step size, noise threshold, and noise variance}
\STATE $\mathbf{v}^0 = \{v_s\}_{s \subseteq z: P_z \in \mathcal{P}} \gets Uniform(\mathcal{S}^{n-1})$ \quad \quad \COMMENT{Initialize vectors to uniform on the unit sphere}
\FOR{$t \in [O(\epsilon^{-4}\Phi^4\log(\delta^{-1}))]$} 
\FOR{$v_w^t \in \mathbf{v}^t$  \COMMENT{Iterate over vectors and update each vector with neighboring vectors in constraint graph}} 
\STATE  
\begin{align} 
\label{eq:key-iteration}
v_w^{t+1} &\gets v_w^{t} - \eta \frac{1}{|\mathcal{P}|}\Biggr[\sum_{\substack{P_z \in \mathcal{P}\\\text{s.t } w \subseteq z}} \sum_{\substack{s \subseteq z\\\text{s.t } w \subseteq s}} y_s\frac{1}{|\mathcal{C}(s)|}\sum_{\substack{w' \subseteq s\\ \text{s.t }\zeta(w,w') = s}} v_{w'}^t   \\ 
&+ 2\rho \Bigg[\sum_{\substack{P_z \in \mathcal{P}\\ \text{s.t }w \subseteq z}} \sum_{\substack{w',h,h' \subseteq s\\ \text{s.t } \zeta(w,w') = \zeta(h,h')}} \big(\langle v_w^t,v_{w'}^t\rangle - \langle v_h^t, v_{h'}^t\rangle\big) v_w'^t 
+(\| v_w^t\|^2 - 1)v_w^t\Bigg]\Biggr]  \nonumber
\end{align}
\IF {$\| v_w^{t+1} - v_w^t\| \leq \psi$} 
    \STATE $\zeta \gets N(0,\sigma I)$
\ELSE 
    \STATE $\zeta \gets 0$
\ENDIF
\STATE $v_w^{t+1} \gets v_w^{t+1} + \zeta$ 
\ENDFOR
\ENDFOR
\STATE \textbf{return}{ $\mathbf{v}^t$}
\STATE \textbf{Output: vectors corresponding to solution to }\sdpref{alg:sdp-vector}
\end{algorithmic}
\end{algorithm}


\begin{theorem} \label{thm:main-theorem}
\Algref{alg:message-passing} computes in $O(\epsilon^{-4}\Phi^4\log(\delta^{-1}))$ iterations a set of vectors $\mathbf{v} \defeq\{\hat{v}_s\}$ for all $s \subseteq \mathcal{S}(P)$ for all $P \in \mathcal{P}$ that satisfy the constraints of \sdpref{alg:sdp-vector} to error $\epsilon$ and approximates the optimum of \sdpref{alg:sdp-vector} to error $\epsilon$ with probability $1-\delta$ 
\[\big|\sum_{P_z \in \mathcal{P}} \pE_{\hat{\mu}}[P_z(X_z)] - \text{SDP}(\Lambda)\big| \leq \epsilon\]
where $\text{SDP}(\Lambda)$ is the optimum of \sdpref{alg:sdp-vector}.  
\end{theorem}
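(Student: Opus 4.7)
The plan is to analyze Algorithm~\ref{alg:message-passing} as a noisy gradient descent procedure on the penalized Lagrangian $\mathcal{L}_\rho$ from equation \ref{eq:main-loss}, and then transfer guarantees about this unconstrained problem back to the constrained SDP~\ref{alg:sdp-vector}. I would combine three ingredients: (i) a smoothness analysis of $\mathcal{L}_\rho$; (ii) a saddle-escape argument for perturbed gradient descent; and (iii) a Burer--Monteiro style argument showing that, when the embedding dimension $n$ is large enough compared to $\sqrt{\Phi}$, every approximate second-order stationary point (SOSP) of $\mathcal{L}_\rho$ corresponds to an approximately feasible, approximately optimal solution of the SDP.

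For (i), I would observe that $\mathcal{L}_\rho$ is a polynomial of degree at most $4$ in the vectors $\{v_w\}$, with gradient given explicitly in equation \ref{eq:loss-grad}. The unit-norm penalty $(\|v_w\|^2-1)^2$ drives $\|v_w\|$ close to $1$ whenever $\mathcal{L}_\rho$ is small, so I would first argue the iterates stay in a norm ball of $O(1)$ radius, using a descent-plus-small-noise argument with the noise scale $\sigma = n^{-2}$. Within this ball, the gradient Lipschitz constant $L$ and Hessian Lipschitz constant $\beta$ become polynomial in $\Phi$ and $\rho$. With the step size $\eta = n^{-2}$, the standard descent lemma gives per-iteration expected decrease $\Omega(\|\nabla \mathcal{L}_\rho\|^2 / L)$ outside a small-gradient region.

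For (ii), I would invoke a perturbed gradient descent framework in the spirit of \citep{jin2017escape}: adding isotropic Gaussian noise when the gradient norm falls below $\psi = n^{-2}$ --- which is exactly the branching rule in the algorithm --- ensures escape from strict saddle points in $\mathrm{poly}(L,\beta,\epsilon^{-1},\log \delta^{-1})$ iterations with probability $1-\delta$. Combined with the bounds from (i) and $\rho$ taken polynomial in $\Phi/\epsilon$, this budget works out to the advertised $O(\epsilon^{-4}\Phi^4\log \delta^{-1})$ iteration count required to reach an $\epsilon$-approximate SOSP of $\mathcal{L}_\rho$.

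For (iii), I would use that $n = \Phi \log \delta^{-1}$ exceeds $\sqrt{2m}$ (with $m$ the number of SDP equality constraints) to apply smoothed Burer--Monteiro results along the lines of \citep{boumal2020deterministic,bhojanapalli2018smoothed,o2022burer}. These results imply that an approximate SOSP of the low-rank factorization corresponds to an approximate KKT point of the underlying SDP. A standard quadratic-penalty argument then converts approximate stationarity of $\mathcal{L}_\rho$ with $\rho = \mathrm{poly}(\Phi/\epsilon)$ into a primal solution whose constraint violations are at most $\epsilon$ and whose objective value is within $\epsilon$ of $\mathrm{SDP}(\Lambda)$. The main obstacle I anticipate is the three-way tension between $\rho$, $n$, and $\epsilon$: increasing $\rho$ tightens constraint satisfaction but inflates $L$ and $\beta$, worsening the iteration count. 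Moreover, Burer--Monteiro guarantees are typically stated for iterates exactly on a manifold, while ours live in $\mathbb{R}^n$ and only approximately satisfy unit-norm and moment-equality constraints; carefully tracking how this near-feasibility error propagates through the SOSP-to-global argument, and verifying that all errors compose into the stated rate, will be the most delicate part of the proof.
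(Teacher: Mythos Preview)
Your proposal is correct and follows essentially the same route as the paper: smoothness bounds on $\mathcal{L}_\rho$ feed into the perturbed-gradient-descent result of \citep{jin2017escape} to reach an approximate SOSP, and the SOSP-to-global transfer is exactly the convexity-of-$\mathcal{H}_\rho$ argument from \citep{bhojanapalli2018smoothed} that you cite. The one step you underestimate is the final ``standard quadratic-penalty argument'': to conclude that small constraint violation implies the objective is within $\epsilon$ of $\mathrm{SDP}(\Lambda)$ you need a stability property of this particular SDP, and the paper invokes the CSP-specific robustness theorem of \citep{RSanycsp09} (their Theorem~4.6) rather than a generic penalty bound.
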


\begin{proof}
We begin by writing down the objective penalized by a quadratic on the constraints.    
\begin{multline} \label{eq:obj-penalized}
\mathcal{L}_\rho(\mathbf{v}) \defeq \frac{1}{|\mathcal{P}|}\Bigg[\sum_{P_z \in \mathcal{P}} \pE_\mu[P_z(X_z)]\\ + \rho \Bigg[\sum_{P_z \in \mathcal{P}} \sum_{\substack{g,g',h,h' \subseteq z\\ \text{s.t } \zeta(g,g') = \zeta(h,h')}} \big(\langle v_g,v_{g'}\rangle - \langle v_h, v_{h'}\rangle\big)^2 +
\sum_{v_s \in \mathbf{v}} (\| v_s\|^2 - 1)^2\Bigg]\Bigg]
\end{multline}

For any monomial $X_s = \prod_{i \in s} x_i$ in $P_z(X_z)$ we write     
\begin{align} \label{eq:pseudo-monomial}
\pE_\mu[X_s] \defeq \frac{1}{|\mathcal{C}(s)|}\sum_{\substack{g,g' \subseteq s\\\text{s.t } \zeta(g,g') = s}} \langle v_{g}, v_{g'}\rangle
\end{align}
Where $\mathcal{C}(s)$ is the size of the set $\{g,g' \subseteq s: \zeta(g,g') = s\}$.  In a small abuse of notation, we regard this as the definition of $\pE_\mu[X_s]$ but realize that we're referring to the iterates of the algorithm before they've converged to a pseudoexpectation.  Now recall \eqref{eq:defpz}, we can expand the polynomial $P_z(X_z)$ along its standard monomial basis 
 
\begin{align} \label{eq:coef-basis}
P_z(X_z) = \sum_{s \subseteq z} y_s X_s 
\end{align}

where we have defined coefficients $y_s$ for every monomial in $P_z(X_z)$.  Plugging \eqref{eq:pseudo-monomial} and \eqref{eq:coef-basis} into \eqref{eq:obj-penalized} we obtain  
\begin{multline}
\mathcal{L}_\rho(\mathbf{v}) = \frac{1}{|\mathcal{P}|}\Bigg[\sum_{P_z \in \mathcal{P}} \sum_{s \subseteq z} y_s\frac{1}{|\mathcal{C}(s)|}\sum_{\substack{g,g' \subseteq s\\ \text{s.t }\zeta(g,g') = s}} \langle v_{g}, v_{g'}\rangle\\ 
+ \rho \Bigg[\sum_{P_z \in \mathcal{P}} \sum_{\substack{g,g',h,h' \subseteq z\\ \text{s.t } \zeta(g,g') = \zeta(h,h')}} \big(\langle v_g,v_{g'}\rangle - \langle v_h, v_{h'}\rangle\big)^2 \\
+\sum_{v_s \in \mathbf{v}} (\| v_s\|^2 - 1)^2\Bigg]\Bigg]
\end{multline}
Taking the derivative with respect to any $v_w \in \mathbf{v}$ we obtain 

\begin{multline} \label{eq:sdp-penalized-grad}
\frac{\partial \mathcal{L}_\rho(\mathbf{v})}{\partial v_w}  = \frac{1}{|\mathcal{P}|}\Bigg[\sum_{\substack{P_z \in \mathcal{P}\\ \text{s.t } w \subseteq z}} \sum_{\substack{s \subseteq z\\\text{s.t } w \subseteq s}} y_s\frac{1}{|\mathcal{C}(s)|}\sum_{\substack{w' \subseteq s\\ \text{s.t } \zeta(w,w') = s}} v_{w'}\\ 
+ 2\rho \Bigg[\sum_{\substack{P_z \in \mathcal{P}\\ \text{s.t } w \subseteq z}} \sum_{\substack{w',h,h' \subseteq s\\ \text{s.t } \zeta(w,w') = \zeta(h,h')}} \big(\langle v_w,v_{w'}\rangle - \langle v_h, v_{h'}\rangle\big) v_w' \\
+(\| v_w\|^2 - 1)v_w\Bigg]\Bigg]
\end{multline} 
The gradient update is then what is detailed in \Algref{alg:message-passing}
\begin{align}
    v^{t+1}_w = v^t_w - \eta \frac{\partial \mathcal{L}_\rho(\mathbf{v})}{\partial v_w}
\end{align}
Thus far we have established the form of the gradient.  To prove the gradient iteration converges we reference the literature on convergence of perturbed gradient descent \citep{jin2017escape} which we rewrite in \thmref{thm:perturbed-gd}.  First we note that the \sdpref{alg:sdp-vector} has $\ell$ smooth gradient for $\ell \leq poly(\rho,B)$ and has $\gamma$ lipschitz Hessian for $\gamma = poly(\rho,B)$ which we arrive at in \lemref{lem:csp-lipschitz} and \lemref{lem:csp-hessian}.  The proofs of \lemref{lem:csp-lipschitz} and \lemref{lem:csp-hessian} are technically involved and form the core hurdle in arriving at our proof.  

Then by \thmref{thm:perturbed-gd} the iteration converges to an $(\epsilon',\gamma^2)$-SOSP \defref{def:sosp} in no more than $\Tilde{O}(\frac{1}{\epsilon'^2})$ iterations with probability $1-\delta$. 
Now that we've established that the basic gradient iteration converges to approximate SOSP, we need to then prove that the approximate SOSP are approximate global optimum. We achieve this by using the result of \cite{bhojanapalli2018smoothed} lemma 3 which we restate in \lemref{lem:smoothed} for convenience.      

The subsequent presentation adapts the proof of \lemref{lem:smoothed} to our setting.   
To show that $(\epsilon',\gamma^2)$-SOSP are approximately global optimum, we have to work with the original SDP loss as opposed to the nonconvex vector loss.  For subsequent analysis we need to define the penalized loss which we denote $\mathcal{H}_\rho(\pE[\mathbf{x}])$ in terms of the SDP moment matrix $\pE[\mathbf{x}]$.  
\begin{multline} \label{eq:obj-sdp-penalized}
\mathcal{H}_\rho(\pE[\mathbf{x}]) \defeq \frac{1}{|\mathcal{P}|}\Bigg[\sum_{P_z \in \mathcal{P}} \pE_{\hat{\mu}}[P_z(X_z)]\\ + \rho \Bigg[\sum_{P_z \in \mathcal{P}} \sum_{\substack{g,g',h,h' \subseteq z\\ \text{s.t } \zeta(g,g') = \zeta(h,h')}} \big(\pE_{\hat{\mu}}[X_{\zeta(g,g')}] - \pE_{\hat{\mu}}[X_{\zeta(h,h')}]\big)^2 +
\sum_{\substack{X_s \text{ s.t } s \subset \mathcal{S}(P)\\ |s| \leq k, \forall P \in \mathcal{P}}} (\pE_{\hat{\mu}}[X_s^2] - 1)^2\Bigg]\Bigg]
\end{multline}
Here we use the notation $\pE_{\hat{\mu}}[X_{\zeta(g,g')}]$ and $\pE_{\hat{\mu}}[X_{\zeta(h,h')}]$ to denote $\langle v_g,v_g'\rangle$ and $\langle v_h,v_h'\rangle$ respectively.    
Note that although by definition $\mathcal{H}_\rho(\pE[\mathbf{x}]) = \mathcal{L}_\rho(\mathbf{v})$ , their gradients and hessians are distinct because $\mathcal{L}_\rho(\mathbf{v})$ is overparameterized.  This is the key point.  It is straightforward to argue that SOSP of a convex optimization are approximately global, but we are trying to make this argument for SOSP of the overparameterized loss $\mathcal{L}_\rho(\mathbf{v})$.     

Let the global optimum of \sdpref{alg:sdp-vector} be denoted $\pE_{\Tilde{\mu}}[\Tilde{\mathbf{x}}]$ with a cholesky decomposition $\Tilde{\mathbf{v}}$.  Let $\hat{\mathbf{v}}$ be the set of vectors outputted by \Algref{alg:message-passing} with associated pseudoexpectation $\pE_{\hat{\mu}}[\hat{\mathbf{x}}]$.  Then, we can bound
\begin{align} \label{eq:convex-bound}
\mathcal{L}_\rho(\hat{\mathbf{v}}) - \mathcal{L}_\rho(\Tilde{\mathbf{v}}) = \mathcal{H
}_\rho(\pE[\hat{\mathbf{x}}]) - \mathcal{H
}_\rho(\pE[\Tilde{\mathbf{x}}])  \leq
\big\langle \nabla \mathcal{H
}_\rho(\pE[\hat{\mathbf{x}}]), \pE[\hat{\mathbf{x}}] - \pE[\Tilde{\mathbf{x}}]\big\rangle
\end{align}
Here the first equality is by definition, and the inequality is by the convexity of $\mathcal{H}_\rho$.  Moving on, we use the fact that the min eigenvalue of the hessian of overparameterized loss $\nabla^2 \mathcal{L}_\rho(\hat{\mathbf{v}}) \succeq -\gamma\sqrt{\epsilon'}$ implies the min eigenvalue of the gradient of the convex loss $\lambda_{\text{min}}(\nabla \mathcal{H
}_\rho(\pE[\hat{\mathbf{x}}])) \geq -\gamma\sqrt{\epsilon'}$.  This fact is invoked in \cite{bhojanapalli2018smoothed} lemma 3 which we adapt to our setting in \lemref{lem:smoothed}.  Subsequently, we adapt the lines of their argument in \lemref{lem:smoothed} most relevant to our analysis which we detail here for the sake of completeness.
\begin{multline}
\text{\ref{eq:convex-bound}} \leq  - \lambda_{\text{min}}(\nabla \mathcal{H
}_\rho(\pE[\hat{\mathbf{x}}])) \Tr(\pE[\hat{\mathbf{x}}]) - \big\langle \nabla \mathcal{H
}_\rho(\pE[\hat{\mathbf{x}}]), \pE[\Tilde{\mathbf{x}}] \big\rangle\\
 \leq - \lambda_{\text{min}}(\nabla \mathcal{H
}_\rho(\pE[\hat{\mathbf{x}}])) \Tr(\pE[\hat{\mathbf{x}}]) + \| \nabla \mathcal{H
}_\rho(\pE[\hat{\mathbf{x}}])\|_F \|\pE[\Tilde{\mathbf{x}}]\|_F\\
\leq \gamma \sqrt{\epsilon'} \Tr(\pE[\mathbf{x}]) +  \epsilon' \|\Tilde{\mathbf{v}}\|_F \leq \gamma \sqrt{\epsilon'}\Phi + \epsilon' \Phi \leq \epsilon\\
\end{multline}
Here the first inequality follows by a standard inequality of frobenius inner product, the second inequality follows by Cauchy-Schwarz, the third inequality follows by the $(\epsilon',\gamma^2)$-SOSP conditions on both the min eigenvalue of the hessian and the norm of the gradient, the final two inequalities follow from knowing the main diagonal of $\pE[\hat{\mathbf{x}}]$ is the identity and that every vector in $\tilde{\mathbf{v}}$ is a unit vector up to inverse polynomial error $poly(\rho^{-1},2^k)$.  For this last point see the proof in \lemref{lem:precision}.  Therefore if we set $\epsilon' = poly(\epsilon,2^{-k})$ we arrive at any $\epsilon$ error.  
Therefore we have established our estimate $\hat{v}$ is approximates the global optimum of the quadratically penalized objective i.e 
$\mathcal{H
}_\rho(\pE[\hat{\mathbf{x}}]) - \mathcal{H
}_\rho(\pE[\Tilde{\mathbf{x}}]) \leq \epsilon$.  
To finish our proof, we have to bound the distance between the global optimum of the quadratically penalized objective $\mathcal{H}_\rho(\pE[\Tilde{\mathbf{x}}])$ and SDP$(\Lambda)$ the optimum of \sdpref{alg:sdp-vector}.  This is established for $\rho$ a sufficiently large $poly(\epsilon^{-1},2^k)$ in \lemref{lem:precision}. This concludes our proof that the iterates of \Algref{alg:message-passing} converge to the solution of \sdpref{alg:sdp-vector}.    
\end{proof}

In our proof above, the bound on approximate SOSP being approximate global optimum is built on the result of \cite{bhojanapalli2018smoothed} which we rephrase for our setting below.  
\begin{lemma}\label{lem:smoothed} [\cite{bhojanapalli2018smoothed} lemma 3 rephrased]
Let $\mathcal{L}_\rho(\cdot)$ be defined as in \eqref{eq:obj-penalized} and let $\mathcal{H}_\rho(\cdot)$ be defined as in \eqref{eq:obj-sdp-penalized}.  Let $U \in \R^{\Phi \times \Phi}$ be an $(\epsilon,\gamma)$-SOSP of $\mathcal{L}_\rho(\cdot)$, then 
\[\lambda_{\text{min}}(\nabla \mathcal{H
}_\rho(\pE[\hat{\mathbf{x}}])) \geq - \gamma\sqrt{\epsilon}\]
Furthermore, the  global optimum $\mathbf{\Tilde{X}}$ obeys the optimality gap 

\[\mathcal{H}_\rho(UU^T) - \mathcal{H}_\rho(\mathbf{\Tilde{X}}) \leq \gamma\sqrt{\epsilon} \Tr(\mathbf{\Tilde{X}}) + \frac{1}{2}\epsilon \|U\|_F\]

\end{lemma}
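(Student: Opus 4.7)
\textbf{Proof proposal for \lemref{lem:smoothed}.} The plan is to view $\mathcal{L}_\rho$ as the Burer--Monteiro reformulation of $\mathcal{H}_\rho$, i.e.\ $\mathcal{L}_\rho(U) = \mathcal{H}_\rho(UU^T)$, and translate the SOSP conditions on $\mathcal{L}_\rho$ into statements about $\nabla \mathcal{H}_\rho$ at the PSD point $X := UU^T$. The first observation I would record is that $\mathcal{H}_\rho$ is convex in the moment matrix $\pE[\mathbf{x}]$: the objective part is linear in the moments, while every penalty term in \eqref{eq:obj-sdp-penalized} is the square of an affine function of $\pE[\mathbf{x}]$ and is therefore convex. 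Next I would compute the standard chain-rule identities for $g(U) := \mathcal{H}_\rho(UU^T)$, namely
\begin{align*}
\nabla g(U) &= 2\, \nabla \mathcal{H}_\rho(UU^T)\, U, \\
\nabla^2 g(U)[\Delta,\Delta] &= 2\langle \nabla \mathcal{H}_\rho(UU^T),\, \Delta\Delta^T\rangle + 2\,\nabla^2 \mathcal{H}_\rho(UU^T)[U\Delta^T{+}\Delta U^T,\, U\Delta^T{+}\Delta U^T].
\end{align*}
Since $\mathcal{H}_\rho$ is convex, the Hessian term above is non-negative for every $\Delta$.

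For the first claim, I would produce a test direction that nulls the Hessian-of-$\mathcal{H}_\rho$ term. Because $U \in \R^{\Phi \times \Phi}$ is a square factor, the Burer--Monteiro overparameterization guarantees that $U$ admits an (essentially) null right-direction $u$ — in the exact rank-deficient case $Uu = 0$, and in the generic case one invokes a Barvinok--Pataki style rank bound or a smoothing/perturbation step as in Bhojanapalli--Boumal--Jain to obtain an approximate null direction absorbed into the error. Let $v$ be a unit eigenvector of $\nabla \mathcal{H}_\rho(UU^T)$ with eigenvalue $\lambda_{\min}$, and set $\Delta = v u^T / \|u\|$. Then $\Delta\Delta^T = vv^T$ and $U\Delta^T + \Delta U^T = 0$, so
\begin{equation*}
\nabla^2 g(U)[\Delta,\Delta] = 2\lambda_{\min}(\nabla\mathcal{H}_\rho(UU^T)).
\end{equation*}
Applying the second-order SOSP condition $\nabla^2 g(U) \succeq -\gamma\sqrt{\epsilon}\,I$ together with $\|\Delta\|_F = 1$ yields $\lambda_{\min}(\nabla \mathcal{H}_\rho(UU^T)) \geq -\tfrac{1}{2}\gamma\sqrt{\epsilon}$, which (absorbing the factor of two) gives the claimed bound.

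For the optimality gap, I would use convexity of $\mathcal{H}_\rho$ to linearize at $X = UU^T$:
\begin{equation*}
\mathcal{H}_\rho(UU^T) - \mathcal{H}_\rho(\widetilde{\mathbf{X}}) \;\leq\; \langle \nabla \mathcal{H}_\rho(UU^T),\; UU^T - \widetilde{\mathbf{X}}\rangle.
\end{equation*}
The first inner product is bounded using the first-order SOSP condition $\|\nabla g(U)\|_F \leq \epsilon$: since $\nabla g(U) = 2\nabla \mathcal{H}_\rho(UU^T) U$,
\begin{equation*}
\langle \nabla \mathcal{H}_\rho(UU^T), UU^T\rangle = \tfrac{1}{2}\langle \nabla g(U), U\rangle \leq \tfrac{1}{2}\|\nabla g(U)\|_F\,\|U\|_F \leq \tfrac{1}{2}\epsilon\,\|U\|_F.
\end{equation*}
The second is bounded using the min-eigenvalue result just established together with $\widetilde{\mathbf{X}} \succeq 0$:
\begin{equation*}
-\langle \nabla \mathcal{H}_\rho(UU^T), \widetilde{\mathbf{X}}\rangle \;\leq\; -\lambda_{\min}(\nabla\mathcal{H}_\rho(UU^T))\,\Tr(\widetilde{\mathbf{X}}) \;\leq\; \gamma\sqrt{\epsilon}\,\Tr(\widetilde{\mathbf{X}}).
\end{equation*}
Summing the two bounds recovers $\mathcal{H}_\rho(UU^T) - \mathcal{H}_\rho(\widetilde{\mathbf{X}}) \leq \gamma\sqrt{\epsilon}\,\Tr(\widetilde{\mathbf{X}}) + \tfrac{1}{2}\epsilon\,\|U\|_F$.

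The main obstacle I anticipate is making the rank-deficiency / null-direction argument rigorous when $U$ is exactly square: one must argue that any SOSP either is genuinely rank-deficient or can be perturbed to one with only a $\gamma\sqrt{\epsilon}$ degradation in the Hessian bound. The cleanest route is to cite the Barvinok--Pataki bound, which guarantees that SDP optima with $m$ constraints admit a factor of rank $O(\sqrt{m})$; since our embedding dimension $\Phi$ dominates this, rank deficiency is automatic and the test direction $u$ exists exactly, eliminating the need for an additional perturbation argument. The remaining steps are then routine: convexity of $\mathcal{H}_\rho$, Cauchy--Schwarz, and PSD trace-gradient inequalities.
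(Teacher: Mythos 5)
The paper itself does not prove this lemma: it is presented as a rephrasing of Lemma~3 of \cite{bhojanapalli2018smoothed} and cited directly, so there is no paper proof to compare against step by step. What you have done is reconstruct a proof from scratch, and the \emph{shape} of your reconstruction is right: the chain--rule identities for $g(U)=\mathcal{H}_\rho(UU^T)$, the choice of test direction $\Delta = vu^T/\|u\|$ with $Uu=0$ to kill the $\nabla^2\mathcal{H}_\rho$ term (so that $\nabla^2 g(U)[\Delta,\Delta]=2\langle\nabla\mathcal{H}_\rho(UU^T),vv^T\rangle$ and $\|\Delta\|_F=1$), and the convexity--linearization plus Cauchy--Schwarz plus PSD trace inequality for the optimality--gap bound are all the standard ingredients, and the gap calculation $\langle\nabla\mathcal{H}_\rho,UU^T\rangle=\frac12\langle\nabla g(U),U\rangle\le\frac12\epsilon\|U\|_F$ and $-\langle\nabla\mathcal{H}_\rho,\tilde X\rangle\le -\lambda_{\min}(\nabla\mathcal{H}_\rho)\Tr(\tilde X)$ close exactly as claimed.

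The genuine gap is the one you yourself flag, and your proposed fix does not close it. You invoke Barvinok--Pataki to argue that $U$ must be rank-deficient so that a null right-direction $u$ exists, but Barvinok--Pataki bounds the rank of \emph{extreme points of the feasible region} of the SDP (equivalently, of optimal solutions produced by interior-point methods), not the rank of an arbitrary approximate SOSP of the factored, penalized, unconstrained objective $\mathcal{L}_\rho(U) = \mathcal{H}_\rho(UU^T)$. An SOSP of $\mathcal{L}_\rho$ has no reason to be an extreme point of anything; in particular nothing a priori prevents $U$ from being full rank, in which case $Uu=0$ has no nonzero solution and your test direction does not exist. This is precisely the case that Bhojanapalli--Boumal--Jain handle with a separate argument: when $U\in\R^{\Phi\times\Phi}$ has full rank the lift map $\Delta\mapsto U\Delta^T+\Delta U^T$ is \emph{surjective} onto the symmetric matrices, and the first-order condition $\|\nabla g(U)\|_F = 2\|\nabla\mathcal{H}_\rho(UU^T)\,U\|_F \le\epsilon$ already forces $\|\nabla\mathcal{H}_\rho(UU^T)\|_F\le\epsilon/(2\sigma_{\min}(U))$ and hence bounds $\lambda_{\min}$ directly — but this bound degrades as $\sigma_{\min}(U)\to 0$, which is why their paper resorts to smoothed analysis (random perturbation of the cost matrix) to rule out problematic near-degenerate SOSPs with high probability. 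To make your proof airtight you would need either to (i) split into the full-rank/rank-deficient cases and carry the $\sigma_{\min}(U)$ dependence through in the full-rank branch (accepting that the bound is not uniform), or (ii) import the smoothing argument, or (iii) simply cite the external lemma as the paper does.
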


The following \lemref{lem:precision} establishes that for a sufficiently large penalty parameter $\rho = poly(\epsilon^{-1},2^k)$ the optimum of the penalized problem and the exact solution to \sdpref{alg:sdp-vector} are close.  

\begin{lemma} \label{lem:precision}
Let $\Lambda$ be a Max-k-CSP instance, and let SDP$(\Lambda)$ be the optimum of \sdpref{alg:sdp-vector}.  Let $\mathcal{L}_\rho(\mathbf{v})$ be the quadratically penalized objective 
\begin{multline} \label{eq:loss-normalize}
\mathcal{L}_\rho(\mathbf{v}) \defeq \frac{1}{|\mathcal{P}|}\Bigg[\sum_{P_z \in \mathcal{P}} \sum_{s \subseteq z} y_s\frac{1}{|\mathcal{C}(s)|}\sum_{\substack{g,g' \subseteq s\\ \text{s.t } \zeta(g,g') = s}} \langle v_{g}, v_{g'}\rangle 
+ \rho \Bigg[\sum_{P_z \in \mathcal{P}} \sum_{\substack{g,g',h,h' \subseteq z\\ \zeta(g,g') = \zeta(h,h')}} \big(\langle v_g,v_{g'}\rangle - \langle v_h, v_{h'}\rangle\big)^2 \\
+\sum_{v_s \in \mathbf{v}} (\| v_s\|^2 - 1)^2\Bigg]\Bigg]
\end{multline}
Let $\Tilde{v}$ be the argmin of the unconstrained minimization 
\[\Tilde{v} \defeq \argmin_{\mathbf{v} \in \R^{|\mathcal{P}|^2 (2^{2k})}} \mathcal{L}_\rho(\mathbf{v}) \]
Then we have   
\[\mathcal{L}_\rho(\Tilde{\mathbf{v}}) - \text{SDP}(\Lambda) \leq \epsilon\]
for $\rho = poly(\epsilon^{-1},2^k)$ 
\end{lemma}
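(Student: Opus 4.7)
The stated bound $\mathcal{L}_\rho(\tilde{\mathbf{v}}) - \mathrm{SDP}(\Lambda) \le \epsilon$ is the easy direction of a two-sided approximation, and I would actually establish it with slack $0$ (not merely $\epsilon$) and without using any particular value of $\rho$. The strategy is to exhibit a single point in the ambient unconstrained space at which the penalized objective equals $\mathrm{SDP}(\Lambda)$, then invoke minimality of $\tilde{\mathbf{v}}$.

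\textbf{Step 1 (exhibit a feasible witness).} Let $\mathbf{v}^\star = \{v_s^\star\}$ be an optimal feasible solution of \sdpref{alg:sdp-vector}. Such a minimizer exists because the feasible region of that SDP is compact (every vector lies on the unit sphere and the consistency constraints cut out a closed set) and the objective is linear in the pairwise inner products. Embed these vectors into $\R^{|\mathcal{P}|^2(2^{2k})}$ by padding their coordinates with zeros; the embedding preserves all norms and inner products, so it remains feasible for the SDP constraints.

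\textbf{Step 2 (penalty vanishes on feasible points).} By feasibility, $\|v_s^\star\|^2 = 1$ for every allowed $s$, and $\langle v_g^\star, v_{g'}^\star\rangle = \langle v_h^\star, v_{h'}^\star\rangle$ whenever $\zeta(g,g') = \zeta(h,h')$. Consequently every summand inside the $\rho[\cdots]$ block of \eqref{eq:loss-normalize} evaluates to zero, so
\[
\mathcal{L}_\rho(\mathbf{v}^\star) \;=\; \frac{1}{|\mathcal{P}|}\sum_{P_z \in \mathcal{P}} \sum_{s \subseteq z} y_s \frac{1}{|\mathcal{C}(s)|} \sum_{\substack{g,g' \subseteq s \\ \zeta(g,g') = s}} \langle v_g^\star, v_{g'}^\star\rangle \;=\; \mathrm{SDP}(\Lambda),
\]
the last equality being the definition of the SDP objective at an optimum.

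\textbf{Step 3 (invoke optimality of $\tilde{\mathbf{v}}$).} Since $\tilde{\mathbf{v}}$ is by definition the unconstrained minimizer of $\mathcal{L}_\rho$ on $\R^{|\mathcal{P}|^2(2^{2k})}$, which contains $\mathbf{v}^\star$, we obtain
\[
\mathcal{L}_\rho(\tilde{\mathbf{v}}) \;\le\; \mathcal{L}_\rho(\mathbf{v}^\star) \;=\; \mathrm{SDP}(\Lambda),
\]
whence $\mathcal{L}_\rho(\tilde{\mathbf{v}}) - \mathrm{SDP}(\Lambda) \le 0 \le \epsilon$ for every $\epsilon \ge 0$, which is the claim.

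The only genuine subtlety worth checking is that the infimum defining $\tilde{\mathbf{v}}$ is attained: this follows because the unit-norm penalty terms $(\|v_s\|^2 - 1)^2$ grow quartically in the component norms, so $\mathcal{L}_\rho$ is coercive with compact sublevel sets, and continuity yields a minimizer. The hypothesis $\rho = \mathrm{poly}(\epsilon^{-1}, 2^k)$ plays no role in this direction; it would be essential only when bounding the complementary gap $\mathrm{SDP}(\Lambda) - \mathcal{L}_\rho(\tilde{\mathbf{v}})$, where one must argue that residual infeasibility at $\tilde{\mathbf{v}}$ cannot drive the objective too far below $\mathrm{SDP}(\Lambda)$. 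That converse is the genuinely nontrivial direction, but it is not what is claimed in the statement as written, so the above three-step argument suffices.
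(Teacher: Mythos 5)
Your argument is correct for the stated inequality, and you have accurately identified that it holds with slack zero and makes no use of the hypothesis on $\rho$: feasibility of the SDP optimizer $\mathbf{v}^\star$ zeroes every penalty term, so $\mathcal{L}_\rho(\tilde{\mathbf{v}}) \le \mathcal{L}_\rho(\mathbf{v}^\star) = \text{SDP}(\Lambda)$ immediately. But this also means you have diagnosed a discrepancy: the lemma is almost certainly intended to assert the two-sided bound $\bigl|\mathcal{L}_\rho(\tilde{\mathbf{v}}) - \text{SDP}(\Lambda)\bigr|\le\epsilon$ (or at least the reverse inequality). The paper's own proof ignores the direction you handled and concentrates entirely on bounding the constraint residuals $\tau_i \defeq |\langle A_i,\tilde{X}\rangle - b_i|$ at the penalized minimizer, via $\rho\,\tfrac{1}{|\mathcal{F}|}\sum_i\tau_i^2 \le \tfrac{1}{|\mathcal{F}|}\langle C,\tilde{X}-X^*\rangle$ combined with the robustness theorem (\thmref{thm:robustness}); that machinery buys you the statement that the penalized objective cannot land more than $\epsilon$ \emph{away from} $\text{SDP}(\Lambda)$, which is vacuous in the direction you proved. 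And the downstream invocation at the end of the proof of \thmref{thm:main-theorem} explicitly asks for a bound on the \emph{distance} between $\mathcal{H}_\rho(\pE[\tilde{\mathbf{x}}])$ and $\text{SDP}(\Lambda)$, not a one-sided inequality.

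So your three steps are fine as far as they go, but to reconstruct a proof the paper can actually use you should carry out the converse you correctly flagged as the genuine work: $\text{SDP}(\Lambda) - \mathcal{L}_\rho(\tilde{\mathbf{v}}) \le \epsilon$ once $\rho = \mathrm{poly}(\epsilon^{-1}, 2^k)$. The outline, which is what the paper's proof is doing: the same optimality comparison you used in Step 3, $\mathcal{H}_\rho(\tilde{X}) \le \mathcal{H}_\rho(X^*)$, rearranges to $\rho\,\tfrac{1}{|\mathcal{F}|}\sum_i\tau_i^2 \le \tfrac{1}{|\mathcal{F}|}\langle C, \tilde{X}-X^*\rangle$; the robustness theorem bounds the right-hand side by $\mathrm{poly}(k)\bigl(\tfrac{1}{|\mathcal{F}|}\sum_i\tau_i\bigr)^{1/2}$; a Cauchy--Schwarz and rearrangement give $\tfrac{1}{|\mathcal{F}|}\sum_i\tau_i^2 \le \mathrm{poly}(k)\,\rho^{-4/3}$; and a second application of \thmref{thm:robustness} together with a Markov argument on the small fraction of grossly-violated constraints controls $\text{SDP}(\Lambda) - \langle C,\tilde{X}\rangle$, after which adding back the nonnegative penalty term yields the reverse inequality. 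Only at this point does the hypothesis $\rho = \mathrm{poly}(\epsilon^{-1}, 2^k)$ do any work.
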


\begin{proof}
We begin the analysis with the generic equality constrained semidefinite program of the form 
\begin{align} 
\label{eq:sdp-generic} 
\text{Minimize:} \quad & \langle C, X\rangle \\
\text{Subject to:} \quad & \langle A_{i}, X\rangle = b_i \quad \forall i \in \mathcal{F} \\
& X \succeq 0 \\
& X \in \mathbb{R}^{d \times d}
\end{align}

For an objective matrix $C$ and constraint matrices $\{A_i\}_{i \in \mathcal{F}}$ in some constraint set $\mathcal{F}$.  We will invoke specific properties of \sdpref{alg:sdp-vector} to enable our analysis.  First we define the penalized objective in this generic form
\[\mathcal{H}_\rho(X) \defeq \langle C,X\rangle + \rho\sum_{i \in \mathcal{F}}(\langle A_i,X\rangle - b_i)^2 \]
Let $\Tilde{X}$ be the minimizer of the penalized problem.  
\[\Tilde{X} \defeq \argmin_{X \in \R^{d \times d}}\mathcal{L}_\rho(X)\]
Let  $X^*$ be the minimizer of the constrained problem \eqref{eq:sdp-generic}.  
Let $\tau_i$ be the error $\Tilde{X}$ has in satisfying constraint $\langle A_i, \Tilde{X}\rangle = b_i$. 
\[\tau_i \defeq |\langle A_i,\Tilde{X}\rangle - b_i|\]
We will show that $\tau_i$ scales inversely with $\rho$.  That is, $\tau_i \leq poly(k,\rho^{-1})$.  Notice that the quadratic penalty on the violated constraints must be smaller than the decrease in the objective for having violated the constraints.  So long as the objective is not too sensitive i.e 'robust' to perturbations in the constraint violations the quadratic penalty should overwhelm the decrease in the objective.  To carry out this intuition, we begin with the fact that the constrained minimum is larger than the penalized minimum.  
\begin{align}
\mathcal{H}_\rho(X^*) - \mathcal{H}_\rho(\Tilde{X}) \leq 0
\end{align}
Applying the definitions of $\mathcal{H}_\rho(X^*)$ and $\mathcal{H}_\rho(\Tilde{X})$ we obtain
\begin{align}
\langle C,X^* \rangle - (\langle C,\Tilde{X} \rangle + \rho\sum_{i \in \mathcal{F}} \tau_i^2) \leq 0
\end{align}
Rearranging LHS and RHS and dividing by $|\mathcal{F}|$ we obtain 
\begin{align} \label{eq:51}
\rho \frac{1}{|\mathcal{F}|}\sum_{i \in \mathcal{F}} \tau_i^2 \leq \frac{1}{|\mathcal{F}|} \langle C,\Tilde{X} - X^*\rangle
\end{align}
We upper bound the RHS using the robustness theorem of \cite{RSanycsp09} restated in the appendix \thmref{thm:robustness} which states that an SDP solution that violates the constraints by a small perturbation changes the objective by a small amount.  Thus we obtain,  
\begin{align} \label{eq:52}
RHS \leq \Big(\frac{1}{|\mathcal{F}|}\sum_{i \in \mathcal{F}} \tau_i \Big)^{1/2} poly(k)
\end{align}
Therefore combining \eqref{eq:51} with \eqref{eq:52} we obtain 
\[\rho \frac{1}{|\mathcal{F}|}\sum_i \tau_i^2 \leq \Big(\frac{1}{|\mathcal{F}|}\sum_i \tau_i \Big)^{1/2} poly(k)\]
Taking Cauchy-Schwarz on the RHS we obtain 
\[\rho \frac{1}{|\mathcal{F}|}\sum_i \tau_i^2 \leq \Big(\frac{1}{|\mathcal{F}|}\sum_i \tau_i^2 \Big)^{1/4}poly(k)\]
Noticing that $\frac{1}{|\mathcal{F}|}\sum_i \tau_i^2$ appears in both LHS and RHS we rearrange to obtain 
\[\frac{1}{|\mathcal{F}|}\sum_i \tau_i^2 \leq  \frac{poly(k)}{\rho^{4/3}} \leq \epsilon\]

Where for $\rho = \frac{poly(k)}{\epsilon^{3/4}}$ we have the average squared error of the constraints is upper bounded by $\epsilon$.  Then via Markov's no more than $\sqrt{\epsilon}$ fraction of the constraints can be violated by more than squared error $\sqrt{\epsilon}$.  We label these 'grossly' violated constraints. The clauses involved in these grossly violated constraints contributes no more than $poly(k)\sqrt{\epsilon} $ to the objective.  On the other hand, the  constraints that are violated by no more than squared error $\sqrt{\epsilon}$ contributed no more than $poly(k)\epsilon^{1/8} $ to the objective which follows from the robustness theorem \thmref{thm:robustness}.  Taken together we conclude that   
\[\mathcal{L}_\rho(\Tilde{\mathbf{v}}) - \text{SDP}(\Lambda) \leq \epsilon\]
For $\rho = poly(k,\epsilon^{-1})$ as desired.  
\end{proof}

Finally we show it's not hard to generalize our algorithm to alphabets of size $[q]$.  

\paragraph{Notation for General Alphabet.}
For any predicate $P \in \mathcal{P}$, let $\mathcal{D}(P)$ be the set of all variable assignment tuples indexed by a set of variables $s \subseteq \mathcal{S}(P)$ and an assignment $\tau \in [q]^{|s|}$.  Let $x_{(i,a)}$ denote an assignment of value $a \in [q]$ to variable $x_i$.   

\begin{SDPOPT}
\setcounter{algorithm}{2}
\caption{SDP Vector Formulation for Max-k-CSP General Alphabet (Equivalent to UGC optimal)}
\label{alg:sdp-vector-q}
\begin{algorithmic}
\STATE {SDP Vector Formulation General Alphabet $\Lambda = (\mathcal{V},\mathcal{P},q)$.\\  {Pseudoexpectation formulation of the objective.}} 
\begin{align} \label{eq:sdp-vector-q}
\min_{x_1,x_2,\dots,x_N}  & \sum_{P_z \subset \mathcal{P}} \pE_{\mu}[-P_z(X_z)]  \\ \nonumber
 \\ \text{subject to:} \quad & \pE_\mu[(x_{(i,a)}^2 - x_{(i,a)})\prod_{(j,b) \in \phi} x_{(j,b)}] = 0 \quad \forall i \in \mathcal{V}\text{, } \forall a \in [q]\text{, }\forall \phi \subseteq \mathcal{D}(P) \text{, } \forall P \in \mathcal{P}\\
\quad & \pE_\mu[(\sum_{a \in [q]} x_{ia} - 1)\prod_{(j,b) \in \phi} x_{(j,b)}] = 0\quad \forall i \in \mathcal{V}\text{, } \forall \phi \subseteq \mathcal{D}(P) \text{, }\forall P \in \mathcal{P}, \\
\quad & \pE_\mu[x_{(i,a)}x_{(i,a')} \prod_{(j,b) \in \phi} x_{(j,b)}] = 0 \quad \forall i \in \mathcal{V} \text{, } \forall a \neq a' \in [q]\text{, } \forall \phi \subseteq \mathcal{D}(P) \text{, }\forall P \in \mathcal{P}, \\
\quad & \pE[SoS_{2kq}(X_\phi)] \geq 0 \quad \forall \phi \subseteq \mathcal{D}(P) \text{, }\forall P \in \mathcal{P}, \\ 
\quad & \pE[SoS_2(\mathbf{x})] \geq 0.
  \end{align}
\STATE {First constraint corresponds to booleanity of each value in the alphabet. \\
Second constraint corresponds to a variable taking on only one value in the alphabet.\\
Third constraint corresponds to a variable taking on only one value in the alphabet.\\
Fourth constraint corresponds to local distribution on the variables in each predicate.
\\
Fifth constraint corresponds to the positivity of every degree two sum of squares of polynomials.}
\end{algorithmic}
\end{SDPOPT}

\begin{lemma} \label{lem:alphabet}
There exists a message passing algorithm that computes in $poly(\epsilon^{-1},\Phi,\log(\delta^{-1}))$ iterations a set of vectors $\mathbf{v} \defeq\{\hat{v}_{(i,a)}\}$ for all $(i,a) \in \phi$, for all $\phi \subseteq \mathcal{D}(P)$, for all $P \in \mathcal{P}$ that satisfy the constraints of \Algref{alg:sdp-vector-q} to error $\epsilon$ and approximates the optimum of \Algref{alg:sdp-vector-q} to error $\epsilon$ with probability $1-\delta$ 
\[\big|\sum_{P_z \in \mathcal{P}} \pE_{\hat{\mu}}[P_z(X_z)] - \text{SDP}(\Lambda)\big| \leq \epsilon\]
where $\text{SDP}(\Lambda)$ is the optimum of \Algref{alg:sdp-vector-q}. 

\end{lemma}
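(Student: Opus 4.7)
\textbf{Proof proposal for \lemref{lem:alphabet}.} The plan is to mirror, essentially verbatim, the proof roadmap of \thmref{thm:main-theorem}, with the alphabet $[q]$ handled by introducing one vector per (variable, value) indicator. Concretely, for each monomial $\prod_{(i,a) \in \phi} x_{(i,a)}$ that appears in the objective or in any constraint of \Algref{alg:sdp-vector-q}, I would introduce a vector $v_\phi$ and realize the pseudoexpectation via Cholesky factors exactly as in \sdpref{alg:sdp-vector}. The predicate polynomials $P_z$ can be expanded as multilinear polynomials in the indicators $x_{(i,a)}$, and each pseudomoment $\pE_\mu[X_\phi]$ is then rewritten as an average of inner products $\langle v_g, v_{g'}\rangle$ over splits $g \cup g' = \phi$, $g \cap g' = \emptyset$, analogously to \eqref{eq:pseudo-monomial}. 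The booleanity, mutual-exclusion, sum-to-one, and local-distribution constraints from \Algref{alg:sdp-vector-q} each translate into a collection of equality constraints $\langle v_g, v_{g'}\rangle - \langle v_h, v_{h'}\rangle = c$ with $c \in \{0,1\}$ between adjacent vectors in the constraint graph $G_\Lambda$, and the SoS constraints correspond to the implicit PSD-ness of the Gram matrix $V^T V$.

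Next I would write down the quadratically penalized Lagrangian $\mathcal{L}_\rho$ in the form of \eqref{eq:penalized-lagrangian} (equivalently \eqref{eq:main-loss}), summing the objective together with $\rho$ times the squared violation of each of the four constraint families above plus $(\|v_\phi\|^2 - 1)^2$ unit-norm penalties on every vector. Taking partial derivatives as in \eqref{eq:loss-grad} yields a gradient at $v_w$ depending only on vectors $v_{w'}$ sharing a predicate with $w$; this is precisely the promised message-passing update on $G_\Lambda$. The per-iteration cost is $O(\Phi)$ since only $O(q^k)$ vectors are incident on each predicate and there are $|\mathcal{P}|$ predicates, with $\Phi = O(|\mathcal{P}|q^k)$.

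The convergence analysis then proceeds in three steps, exactly parallel to \thmref{thm:main-theorem}. First, I would bound the smoothness $\ell$ and Hessian-Lipschitz constant $\gamma$ of $\mathcal{L}_\rho$ by $\text{poly}(\rho, B, q, k)$ by expanding the gradient and Hessian term-by-term; the only new wrinkle relative to \lemref{lem:csp-lipschitz} and \lemref{lem:csp-hessian} is that sums now range over $q$-ary assignments, which enlarges $\Phi$ but does not change the structure of the bounds. Second, I would apply \thmref{thm:perturbed-gd} to the perturbed gradient iteration to obtain, in $\tilde O(\epsilon'^{-2})$ steps, an $(\epsilon', \gamma^2)$-SOSP with probability at least $1 - \delta$. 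Third, I would invoke \lemref{lem:smoothed} to convert this SOSP of the overparameterized loss $\mathcal{L}_\rho$ into an $\epsilon$-approximate global minimizer of the convex penalized loss $\mathcal{H}_\rho$ on pseudomoment matrices; the chain of inequalities in the proof of \thmref{thm:main-theorem} carries over verbatim after replacing $\Phi$ and $2^k$ by their $q$-ary analogues $\Phi = |\mathcal{P}|q^k$ and $q^k$.

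Finally, to pass from the optimum of the penalized $\mathcal{H}_\rho$ back to $\text{SDP}(\Lambda)$, I would replay \lemref{lem:precision} with $\rho = \text{poly}(\epsilon^{-1}, q^k)$. The main obstacle I anticipate is the analogue of the robustness step in \eqref{eq:52}: \lemref{lem:precision} cited the robustness theorem \thmref{thm:robustness} of \cite{RSanycsp09}, which is already stated for general Max-CSPs over arbitrary alphabets, but one must verify that the constraint family in \Algref{alg:sdp-vector-q} (booleanity, sum-to-one, mutual-exclusion, $2kq$-SoS) satisfies the hypotheses so that small total squared violation implies small objective deviation. Once this robustness bound is available, the same $\rho$-calibration argument $\rho \frac{1}{|\mathcal{F}|}\sum_i \tau_i^2 \leq (\frac{1}{|\mathcal{F}|}\sum_i \tau_i^2)^{1/4}\text{poly}(k,q)$ gives $\rho = \text{poly}(\epsilon^{-1}, q^k)$ as the required setting, completing the proof.
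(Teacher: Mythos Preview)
Your proposal is correct and follows essentially the same route as the paper: the paper's proof of \lemref{lem:alphabet} is a brief sketch that says ``entirely parallel to the proof of \thmref{thm:main-theorem},'' translating \Algref{alg:sdp-vector-q} into vector form via Cholesky, adding the equality constraints as quadratic penalties, running perturbed gradient descent on $G_\Lambda$ to an $(\epsilon,\gamma)$-SOSP, and then invoking the precision/robustness argument with $\rho = \text{poly}(\epsilon^{-1}, q^k)$. Your write-up is in fact more detailed than the paper's own proof, and your flagged concern about verifying the hypotheses of \thmref{thm:robustness} for the $q$-ary constraint family is a fair caveat that the paper simply asserts without elaboration.
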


\begin{proof}
The proof is entirely parallel to the proof of \thmref{thm:main-theorem}.  We can write \Algref{alg:sdp-vector-q} entirely in terms of the vector of its cholesky decomposition where once again we take advantage of the fact that SoS degree $2kq$ distributions are actual distributions over subsets of $kq$ variables over each predicate.  Given the overparameterized vector formulation, we observe that once again we are faced with equality constraints that can be added to the objective with a quadratic penalty.  Perturbed gradient descent induces a message passing algorithm over the constraint graph $G_\Lambda$, and in no more than $poly(\epsilon^{-1}\Phi)$ iterations reaches an $(\epsilon,\gamma)$-SOSP.  The analysis of optimality goes along the same lines as \lemref{lem:precision}.  For sufficiently large penalty $\rho = poly(\epsilon^{-1},q^k)$ the error in satisfying the constraints is $\epsilon$ and the objective is robust to small perturbations in satisfying the constraint.  That concludes our discussion of generalizing to general alphabets.       
\end{proof}

\subsection{Neural Certification Scheme}
\label{sec:certificate}
An intriguing aspect of OptGNN is that the embeddings can be interpreted as the solution to a low rank SDP which leaves open the possibility that the embeddings can be used to generate a dual certificate i.e a lower bound on a convex relaxation.  First, we define the primal problem 
\begin{align} 
\label{eq:sdp-generic}
\text{Minimize:} \quad &  \langle C, X\rangle \\
\text{Subject to:} \quad & \langle A_i, X\rangle = b_i \quad \forall i \in [\mathcal{F}] \\
& X \succeq 0
\end{align}

\begin{lemma}
\label{lem:certificate}
Let OPT be the minimizer of the SDP \eqref{eq:sdp-generic}.  Then for any $\Tilde{X} \in \R^{N \times N} \succeq 0$ and any $\lambda^* \in \R^{|\mathcal{F}|}$, we define $F_{\lambda^*}(X)$ to be
\[F_{\lambda^*}(\Tilde{X}) \defeq  \langle C,\Tilde{X}\rangle + \sum_{i \in \mathcal{F}} \lambda^*_i \left(\langle A_i,\Tilde{X}\rangle - b_i \right) \]
We require SDP to satisfy a bound on its trace $Tr(X) \leq \mathcal{Y}$ for some $\mathcal{Y} \in \R^+$. Then the following is a lower bound on OPT.  
\[
OPT \geq F_{\lambda^*}(\Tilde{X}) - \langle \nabla F_{\lambda^*}(\Tilde{X}), \Tilde{X} \rangle + \lambda_{\min}\left(\nabla F_{\lambda^*}(\Tilde{X}) \right)\mathcal{Y}
\]

\end{lemma}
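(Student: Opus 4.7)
The plan is to unfold $F_{\lambda^*}$ as an affine function of $X$ and then apply a weak-duality style bound, using the trace constraint to control the only term that still depends on the unknown optimizer.

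First I would observe that since $F_{\lambda^*}(X) = \langle C,X\rangle + \sum_{i\in\mathcal{F}} \lambda^*_i (\langle A_i,X\rangle - b_i)$ is affine in $X$, the gradient
\[
Z \defeq \nabla F_{\lambda^*}(\Tilde{X}) = C + \sum_{i\in\mathcal{F}} \lambda^*_i A_i
\]
is independent of the point $\Tilde{X}$, and the exact first-order identity $F_{\lambda^*}(X) = F_{\lambda^*}(\Tilde{X}) + \langle Z, X-\Tilde{X}\rangle$ holds for every $X$. Let $X^*$ be a primal optimum of \eqref{eq:sdp-generic}; feasibility gives $\langle A_i,X^*\rangle = b_i$, hence $F_{\lambda^*}(X^*) = \langle C,X^*\rangle = \text{OPT}$. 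Substituting $X=X^*$ into the first-order identity yields the key equation
\[
\text{OPT} = F_{\lambda^*}(\Tilde{X}) - \langle Z,\Tilde{X}\rangle + \langle Z, X^*\rangle.
\]

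The remaining task is to lower bound $\langle Z, X^*\rangle$ using only the hypotheses $X^* \succeq 0$ and $\Tr(X^*) \leq \mathcal{Y}$. Writing a Cholesky factorization $X^* = \sum_j x_j x_j^\top$ gives
\[
\langle Z, X^*\rangle = \sum_j x_j^\top Z x_j \geq \lambda_{\min}(Z) \sum_j \|x_j\|^2 = \lambda_{\min}(Z)\,\Tr(X^*).
\]
Plugging $\Tr(X^*) \leq \mathcal{Y}$ through the trace factor (which is the valid direction in the regime of interest, where $\lambda_{\min}(Z) \leq 0$) yields $\langle Z, X^*\rangle \geq \lambda_{\min}(Z)\,\mathcal{Y}$. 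Combining with the displayed equation gives the claimed bound.

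The only conceptual subtlety — and the place I would be most careful — is the sign of $\lambda_{\min}(Z)$. If $\lambda_{\min}(Z) > 0$, then $\lambda_{\min}(Z)\Tr(X^*) \geq \lambda_{\min}(Z)\mathcal{Y}$ reverses, so the stated inequality becomes loose (though still a valid lower bound if one replaces $\lambda_{\min}(Z)$ with $\min(\lambda_{\min}(Z),0)$, since then $\langle Z,X^*\rangle \geq 0$ trivially). In the certificate regime this poses no obstacle: a non-trivial dual estimate from OptGNN lives where $Z$ has a non-positive minimum eigenvalue, precisely the case in which the trace bound $\mathcal{Y}$ is needed to keep the Lagrangian relaxation $\min_{X\succeq 0,\,\Tr(X)\leq \mathcal{Y}} \langle Z,X\rangle$ finite. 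No other step requires nontrivial work beyond routine linear algebra.
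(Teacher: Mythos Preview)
Your argument is correct and closely parallels the paper's, with one structural difference worth noting. The paper begins with a weak-duality chain, passing through the Lagrangian and defining $X^*$ as the minimizer of $\min_{V}F_{\lambda^*}(VV^T)$, then uses convexity of $F_{\lambda^*}$ to bound $F_{\lambda^*}(\Tilde{X})-F_{\lambda^*}(X^*)$ from above by $\langle \nabla F_{\lambda^*}(\Tilde{X}),\Tilde{X}-X^*\rangle$, followed by the same eigenvalue--trace inequality you use. You instead take $X^*$ to be the \emph{primal feasible} optimum, so that $F_{\lambda^*}(X^*)=\text{OPT}$ exactly, and exploit that $F_{\lambda^*}$ is affine to replace the convexity inequality by an identity. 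Your route is slightly cleaner: it sidesteps the question of whether the unconstrained Lagrangian minimizer exists (which is delicate precisely when $\lambda_{\min}(Z)<0$), and it makes explicit the sign caveat on $\lambda_{\min}(Z)$ that the paper handles implicitly by taking $\Tr(X^*)=N$ rather than $\leq N$. Both arguments rest on the same key step $\langle Z,X^*\rangle\geq\lambda_{\min}(Z)\Tr(X^*)$ for $X^*\succeq 0$.
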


\begin{proof}
Next we introduce lagrange multipliers $\lambda \in \mathbb{R}^k$ and $Q \succeq 0$ to form the lagrangian 
\begin{align*}
\mathcal{L}(\lambda,Q,X) = \langle C,X\rangle + \sum_{i \in \mathcal{F}} \lambda_i \left(\langle A_i,X\rangle - b_i \right) - \langle Q,X\rangle
\end{align*}

We lower bound the optimum of OPT defined to be the minimizer of \eqref{eq:sdp-generic}
\begin{align}
OPT \defeq \min_{X \succeq 0}\max_{\lambda \in \R,Q \succeq 0}\mathcal{L}(\lambda,Q,X) 
\geq & \min_{V \in \mathbb{R}^{N \times N}}\max_{\lambda}\langle C,VV^T\rangle + \sum_{i \in \mathcal{F}} \lambda_i \left(\langle A_i,VV^T\rangle - b_i\right)\\
=& \max_{\lambda}\min_{V \in \mathbb{R}^{N \times N}}\langle C,VV^T\rangle + \sum_{i \in \mathcal{F}} \lambda_i \left(\langle A_i,VV^T\rangle - b_i\right) \\
 \label{eq:sdp-overparam}  
=& \min_{V \in \mathbb{R}^{N \times N}}\langle C,VV^T\rangle + \sum_{i \in \mathcal{F}} \lambda^*_i \left(\langle A_i,VV^T\rangle - b_i\right).
\end{align}
Where in the first inequality we replaced $X \succeq 0$ with $VV^T$ which is a lower bound as every psd matrix admits a cholesky decomposition.  In the second inequality we flipped the order of min and max, and in the final inequality we chose a specific set of dual variables $\lambda^* \in \R^{|\mathcal{F}|}$ which lower bounds the maximization over dual variables.  The key is to find a good setting for $\lambda^*$. 

Next we establish that for any choice of $\lambda^*$ we can compute a lower bound on inequality \ref{eq:sdp-overparam} as follows. 
Let $F_{\lambda^*}(VV^T)$ be defined as the funciton in the RHS of \ref{eq:sdp-overparam}. 
\begin{align*}
F_{\lambda^*}(VV^T) \defeq  \langle C,X\rangle + \sum_{i \in \mathcal{F}} \lambda^*_i \left(\langle A_i,X\rangle - b_i\right) 
\end{align*}
Then \eqref{eq:sdp-overparam} can be rewritten as 
\begin{align*}
OPT \geq \min_{V \in \mathbb{R}^{N \times N}}F_{\lambda^*}(VV^T) \defeq  \langle C,X\rangle + \sum_{i \in \mathcal{F}} \lambda^*_i \left(\langle A_i,X\rangle - b_i\right) 
\end{align*}
Now let $V^*$ be the minimizer of \eqref{eq:sdp-overparam}  and let $X^* = V^* (V^*)^T$.  We have by convexity that 
\begin{align}
F_{\lambda^*}(X) -  F_{\lambda^*}(X^*) \leq \langle \nabla F_{\lambda^*}(X), X - X^* \rangle = \langle \nabla F_{\lambda^*}(X), X\rangle + \langle - \nabla F_{\lambda^*}(X) ,X^* \rangle \\
\leq \langle \nabla F_{\lambda^*}(X), X\rangle - \lambda_{min}\left(\nabla F_{\lambda^*}(X) \right)\mathrm{Tr}(X^*)\\
\leq \langle \nabla F_{\lambda^*}(X), X\rangle - \lambda_{min}(\nabla F_{\lambda^*}(X) )N
\end{align}
In the first inequality we apply the convexity of $F_{\lambda^*}$.  In the second inequality we apply a standard inequality of frobenius inner product.  In the last inequality we use the fact that $\mathrm{Tr}(X^*) = N$.  
Rearranging we obtain for any $X$
\begin{align}
OPT \geq F_\lambda(X^*)
\geq F_{\lambda^*}(X)  - \langle \nabla F_{\lambda^*}(X), X\rangle + \lambda_{min}\left(\nabla F_{\lambda^*}(X) \right)N
\end{align}
Therefore it suffices to upper bound the two terms above $\langle \nabla F_{\lambda^*}(X), X\rangle$ and $\lambda_{min}(\nabla F_{\lambda^*}(X))$ which is an expression that holds for any $X$.  Given the output embeddings $\Tilde{V}$ of OptGNN (or indeed any set of vectors $\Tilde{V}$) let $\Tilde{X} = \Tilde{V}\Tilde{V}^T$. Then we have concluded   
\begin{align}
OPT \geq F_\lambda(X^*) \geq F_{\lambda^*}(\Tilde{X}) - \langle \nabla F_{\lambda^*}(\Tilde{X}), \Tilde{X} \rangle + \lambda_{\min}(\nabla F_{\lambda^*}(\Tilde{X})) N
\end{align}

as desired.  
\end{proof}
Up to this point, every manipulation is formal proof.  Subsequently we detail how to estimate the dual variables $\lambda^*$.  Although any estimate will produce a bound, it won't produce a tight bound.  To be clear, solving for the optimal $\lambda^*$ would be the same as building an SDP solver which would bring us back into the expensive primal dual procedures that are involved in solving SDP's.  We are designing quick and cheap ways to output a dual certificate that may be somewhat looser.  Our scheme is simply to set $\lambda^*$ such that $\|\nabla F_{\lambda^*}(\Tilde{X})\|$ is minimized, ideally equal to zero.  The intuition is that if $(\Tilde{X}, \lambda^*)$ were a primal dual pair, then the lagrangian would have a derivative with respect to $X$ evaluated at $\Tilde{X}$ would be equal to zero.   
Let $H_\lambda(V)$ be defined as follows 
\begin{align*}
H_{\lambda^*}(\Tilde{V}) \defeq  \langle C,\Tilde{V}\Tilde{V}^T\rangle + \sum_{i \in \mathcal{F}} \lambda^*_i \left(\langle A_i,\Tilde{V}\Tilde{V}^T\rangle - b_i\right) 
\end{align*}
We know the gradient of $H_{\lambda}(\Tilde{V})$
\begin{align*}
\nabla H_\lambda(\Tilde{V}) = 2\left(C + \sum_{i \in \mathcal{F}} \lambda^*_i A_i\right)\Tilde{V} = 2 \nabla F_\lambda(\Tilde{V}\Tilde{V}^T) \Tilde{V}
\end{align*}

Therefore it suffices to find a setting of $\lambda^*$ such that  $\|\nabla F_\lambda(\Tilde{X}) \Tilde{V}\|$ is small, ideally zero.  This would be a simple task, indeed a regression, if not for the unfortunate fact that OptGNN explicitly projects the vectors in $\Tilde{V}$ to be unit vectors.  This creates numerical problems such that minimizing the norm of $\|\nabla F_\lambda(\Tilde{X}) \Tilde{V}\|$ does not produce a $\nabla F_\lambda(\Tilde{X})$ with a large minimum eigenvalue.  

To fix this issue, let $R_{\eta,\rho} (V)$ denote the penalized lagrangian with quadratic penalties for constraints of the form $\langle A_i,X \rangle = b_i$ and linear penalty $\eta_i$ for constraints along the main diagonal of $X$ of the form $\langle e_ie_i^T, X\rangle = 1$.
\begin{align*}
    R_{\eta,\rho} (V) \defeq  \langle C, VV^T\rangle + \sum_{i \in \mathcal{J}} \rho (\langle A_i, VV^T\rangle - b_i)^2 + \sum_{i=1}^N \eta_i(\langle e_ie_i^T, VV^T\rangle - 1)
\end{align*}

Taking the gradient of $R_{\eta,\rho} (V)$ we obtain 
\begin{align*}
    \nabla R_{\eta,\rho} (V) \defeq  2CV + \sum_{i \in \mathcal{J}} 2\rho (\langle A_i, VV^T\rangle - b_i)A_i V + \sum_{i=1}^N 2\eta_i e_ie_i^TV
\end{align*}

Our rule for setting dual variables $\delta_i$ for $i \in \mathcal{J}$ is 
\[
\delta_i \defeq  2\rho \left( \langle A_i, \Tilde{V}\Tilde{V}^T \rangle - b_i \right)
\]

our rule for setting dual variables $\eta_j$ for $j \in [N]$ is 
\[
\eta_j \defeq \frac{1}{2} \left\| e_j^T \left( C + \sum_{i \in \mathcal{F}} 2\rho (\langle A_i, VV^T\rangle - b_i)A_i \right) V \right\|
\]

Then our full set of dual variables $\lambda^*$ is simply the concatenation $(\delta,\eta)$. Writing out everything explicitly we obtain the following matrix for $\nabla F_{\lambda^*}(\Tilde{V}\Tilde{V}^T)$

\begin{align*}
\nabla F_\lambda(\Tilde{V}\Tilde{V}^T) = C + \sum_{i \in \mathcal{F}} \rho (\langle A_i, \Tilde{V}\Tilde{V}^T\rangle - b_i) A_i + \sum_{j \in [N]} \frac{1}{2} \left\| e_j^T\left( C + \sum_{i \in \mathcal{F}} 2\rho (\langle A_i, \Tilde{V}\Tilde{V}^T\rangle - b_i)A_i \right) \Tilde{V} \right\| e_ie_i^T
\end{align*}

Plugging this expression into \lemref{lem:certificate} the final bound we evaluate in our code is 
\begin{multline}
OPT \geq \langle C, \Tilde{V}\Tilde{V}^T\rangle + \sum_{i \in \mathcal{F}} 2\rho (\langle A_i, \Tilde{V}\Tilde{V}^T\rangle - b_i)^2 \\
- \left\langle  C + \sum_{i \in \mathcal{F}}  \rho (\langle A_i, \Tilde{V}\Tilde{V}^T\rangle - b_i) A_i + \sum_{j \in [N]} \frac{1}{2}\left\|e_j^T\left(C + \sum_{i \in \mathcal{F}} 2\rho (\langle A_i, \Tilde{V}\Tilde{V}^T\rangle - b_i)A_i\right) \Tilde{V} \right\| e_ie_i^T, \Tilde{V}\Tilde{V}^T\right\rangle \\
+ \lambda_{\min}\left( C + \sum_{i \in \mathcal{F}}  \rho (\langle A_i, \Tilde{V}\Tilde{V}^T\rangle - b_i) A_i + \sum_{j \in [N]} \frac{1}{2}\left\|e_j^T\left(C + \sum_{i \in \mathcal{F}} 2\rho (\langle A_i, \Tilde{V}\Tilde{V}^T\rangle - b_i)A_i\right) \Tilde{V} \right\| e_ie_i^T  \right)N
\end{multline}

Which is entirely computed in terms of $\Tilde{V}$ the output embeddings of OptGNN.  The resulting plot is as follows.     

\begin{figure}[h]
    \centering
    \includegraphics[width=0.45\linewidth]{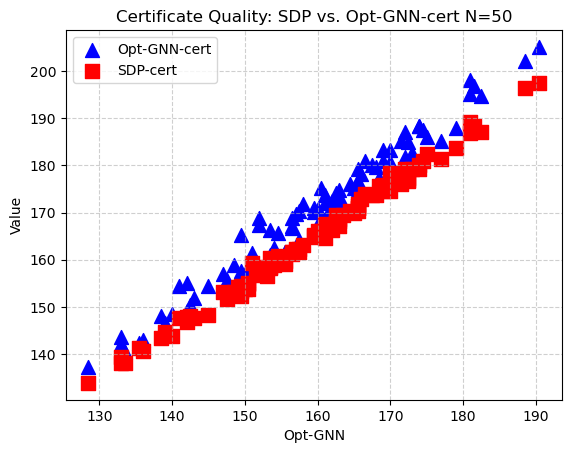}
    \caption{N=50 p=0.1 SDP vs OptGNN Dual Certificate}
    \label{fig:cert50}
\end{figure}

\textbf{Note:} The reason for splitting the set of dual variables is because the projection operator onto the unit ball is hard coded into the architecture of the lift network. Satisfying the constraint set via projection is different from the soft quadratic penalties on the remaining constraints and require separate handling.
\paragraph{Max-Cut Certificate}
For Max-Cut our dual variables are particularly simple as there are no constraints $\langle A_i,X\rangle = b_i$ for $b_i \neq 0$.  The dual variables for Max-Cut take on the form for all $i \in [N]$
\begin{align*}
    \lambda^*_i = \frac{1}{2}\left\| \sum_{j \in N(i)} w_{ij}v_j\right\|    
\end{align*}

It's certainly possible to come up with tighter certification schemes which we leave to future work.    

\textbf{Intuition: } Near global optimality one step of the augmented method of lagrange multipliers ought to closely approximate the dual variables.  After obtaining a estimate for the penalized lagrange multipliers we estimate the lagrange multipliers for the norm constraint by approximating $\nabla R_{\lambda}(V) = 0$.  The alternative would have been to solve the linear system for all the lagrange multipliers at once but this runs into numerical issues and degeneracies.   

\textbf{Certificate Experiment: } We run our certification procedure which we name OptGNN-cert and compare it to the SDP certificate.  Note, that mathematically we will always produce a larger (i.e inferior) dual certificate in comparison to the SDP because we are bounding the distance to the SDP optimum with error in the gradients and hessians of the output embeddings of OptGNN.  Our advantage is in the speed of the procedure.  Without having to go through a primal dual solver, the entire time of producing OptGNN-cert is in the time required to feedforward through OptGNN. 
In this case we train an OptGNN-Max-Cut with $10$ layers, on $1000$ Erdos-Renyi graphs, with $N=100$ nodes and edge density $p=0.1$.  We plot the OptGNN Max-Cut value (an actual integer cut) on the x-axis and in the y-axis we plot the dual certificate value on the same graph where we compare the SDP certificate with the OptGNN-cert. See \ref{fig:cert50} for the $N=50$ graphs and \ref{fig:cert100} for the $N=100$ graphs.   

Note of course the dual certificate for any technique must be larger than the cut value outputted by OptGNN so the scatter plot must be above the $x=y$ axis of the plot.  We see as is mathematically necessary, the OptGNN-cert is not as tight as the SDP certificate but certainly competitive and more importantly it is arrived at dramatically faster. Without any runtime optimizations, the OptGNN feedforward and certification takes no more than $0.02$ seconds whereas the SDP takes $0.5$ seconds on $N=100$ node graphs.       

\section{Experiment details} \label{app:experimental_setup}
In this section we give further details on our experimental setup.

\textbf{Datasets.}
Our experiments span a variety of randomly generated and real-world datasets. Our randomly generated datasets contain graphs from several random graph models, in particular Erd\H{o}s-R\'enyi (with $p=0.15$), Barab\'asi–Albert (with $m=4$), Holme-Kim (with $m=4$ and $p=0.25$), Watts-Strogatz (with $k=4$ and $p=0.25$), and forced RB (with two sets of parameters, RB200 and RB500). Our real-world datasets are ENZYMES, PROTEINS, MUTAG, IMDB-BINARY, COLLAB (which we will together call \textbf{TU-small}), and REDDIT-BINARY, REDDIT-MULTI-5K, and REDDIT-MULTI-12K (which we will call \textbf{TU-REDDIT}).

We abbreviate the generated datasets using their initials and the range of vertex counts. For example, by ER (50,100) we denote Erd\H{o}s-R\'enyi random graphs with a vertex count drawn uniformly at random from [50, 100]. In tables, we mark generated datasets with superscript \textsuperscript{a}, \textbf{TU-small} with \textsuperscript{b}, and \textbf{TU-REDDIT} with \textsuperscript{c}.

For \autoref{fig:forced_rb}, we display results for forced RB instances drawn from two different distributions. For RB200, we select $N$ uniformly in $[6, 15]$ and $K$ uniformly in $[12, 21]$. For RB500, we select $N$ uniformly in $[20, 34]$ and $K$ uniformly in $[10, 29]$.

In \autoref{tab:sat_baselines}, random 3-SAT instances are generated by drawing three random variables for each clause and negating each variable with $p = 0.5$. We trained on instances with 100 variables and clause count drawn uniformly from the interval $[400, 430]$, and tested on instances with 100 variables and 400, 415, and 430 clauses respectively.

\textbf{Baselines.}
We compare the performance of our approach against known classical algorithms. In terms of classical baselines, we run Gurobi with varying timeouts and include SDP results on smaller datasets.

We also include a greedy baseline, which is the function \texttt{one\_exchange} (for Max-Cut) or \texttt{min\_weighted\_vertex\_cover} (for Min-Vertex-Cover) from \texttt{networkx}~\citep{networkx}.


\textbf{Validation and test splits.}
For each dataset we hold out a validation and test slice for evaluation.
In our generated graph experiments we set aside 1000 graphs each for validation and testing. Each step of training ran on randomly generated graphs. For \textbf{TU-small}, we used a train/validation/test split of 0.8/0.1/0.1. For \textbf{TU-REDDIT}, we set aside 100 graphs each for validation and testing.

\textbf{Scoring.}
To measure a model's score on a graph, we first run the model on the graph with a random initial vector assignment to generate an SDP output, and then round this output to an integral solution using 1,000 random hyperplanes. For the graph, we retain the best score achieved by any hyperplane.

We ran validation periodically during each training run and retained the model that achieved the highest validation score. Then for each model and dataset, we selected the hyperparameter setting that achieved the highest validation score, and we report the average score measured on the test slice. Please see \ref{sec:hyperparams} for further details on the hyperparameter ranges used.

In \autoref{fig:forced_rb} and \autoref{tab:cut_neural}, at test time, we use 100 random initial vector assignments instead of just 1, and retain the best score achieved by any hyperplane on any random initial vector assignment. We use 1 random initial vector assignment for validation as in other cases.

\textbf{Hardware.}
Our training runs used 20 cores of an Intel Xeon Gold 6248 (for data loading and random graph generation) and a NVIDIA Tesla V100 GPU.
Our Gurobi runs use 8 threads on a Intel Xeon Platinum 8260.
Our KaMIS runs use an Intel Core i9-13900H.
Our LwD and \textsc{Dgl-TreeSearch} runs use an Intel Core i9-13900H and an RTX 4060.

\begin{figure}[]
\centering
\small
\begin{tabular}{lllllll}
\toprule
    Parameter & ER, BA, WS, HK & RB200 & RB500 & 3-SAT & \textbf{TU-small} & \textbf{TU-REDDIT} \\
\midrule
    Gradient steps & 20,000 & 200,000 & 100,000 & 100,000 & 100,000 & 100,000 \\
    Validation freq & 1,000 & 2,000 & 2,000 & 2,000 & 1,000 & 2,000 \\
    Batch size & 16 & 32 & 32 & 32 & 16 & 16 \\
    Ranks & 4, 8, 16, 32 & 32, 64 & 32, 64 & 32, 64 & 4, 8, 16, 32 & 4, 8, 16, 32 \\
    Layer counts & 8, 16 & 16 & 16 & 16 & 8, 16 & 8, 16 \\
    Positional encodings & RW & none & none & none & LE, RW & RW \\
\midrule
    \textbf{Run count} & 8 & 2 & 2 & 2 & 16 & 8 \\
\bottomrule
\end{tabular}
\caption{Hyperparameter range explored for each group of datasets. For each NN architecture, when training on a dataset, we explored every listed hyperparameter combination in the corresponding column.}
\label{tab:hyperparams}
\end{figure}

\textbf{Hyperparameters.}\label{sec:hyperparams}
We ran each experiment on a range of hyperparameters. See \autoref{tab:hyperparams} for the hyperparameter listing. For all training runs, we used the Adam optimizer~\cite{kingma2014adam} with a learning rate of 0.001. We used Laplacian eigenvector~\cite{dwivedi2020benchmarking} (LE) or random walk~\cite{dwivedi2021graph} (RW) positional encoding with dimensionality of half the rank, except for rank 32 where we used 8 dimensions. For Max-3-SAT, we set the penalty term $\rho = 0.003$.

\subsection{Max-Cut}
\textbf{Low-rank SDP}.We have included an additional baseline that solves a low-rank SDP using coordinate descent \citep{wang2019low} to the Max-Cut benchmark that originally appears in Any-CSP \citep{tonshoff2022one}. We have used the publicly available implementation provided by the authors. For each instance, the rank is automatically selected by the package to be $\ceil{\sqrt{2N}}$.

\subsection{Max-3-SAT} \label{sec:max3sat-extra-info}
\textbf{Erd\H{o}sGNN}. The Erd\H{o}sGNN baseline accepts as input a clause-variable bipartite graph of the SAT formula and the node degrees as input attributes. Each variable $x_i$ is set to TRUE with probability $p_i$. A graph neural network (GatedGCNN) is trained until convergence ($\sim$50k iterations) with randomly generated formulae with clause to variable ratiot in the range $[4,4.3]$. The neural network is trained to produce output probabilities $p_i$ by minimizing the expected number of unsatisfied clauses. The exact closed form of the expectation can be found in \citep{karalias2023probabilistic}. At inference time, the probabilities are rounded to binary assignments using the method of conditional expectation.

\textbf{Survey propagation}. We report the results of survey propagation \cite{braunstein2005survey}. Typically, algorithms like survey propagation are accompanied by a local search algorithm like WalkSAT. Here we report the results obtained by the algorithm without running any local search on its output.

\textbf{Low-rank SDP}. This is identical to the baseline that we used for Max-3-SAT (see Max-Cut section above).

\textbf{WalkSAT.} We use a publicly available python implementation of WalkSAT and we set the max number of variable flips to $4 \times$number of variables, for a total of 400 flips on the instances we tested.

\textbf{Autograd.}\label{sec:autograd}
The autograd comparison measures the performance of autograd on the same loss as OptGNN.
Starting with some initial node embeddings for the instance, the Lagrangian is computed for the problem and the node embeddings are updated by minimizing the Lagrangian using Adam. This process is run for several iterations. After it is concluded, the vectors are rounded with hyperplane rounding, yielding a binary assignment for the variables of the formula.

\textbf{Autograd parameters.} We use the Adam optimizer with learning rate 0.1 for 1000 epochs with penalty 0.01 and round with 10,000 hyperplanes, which is 10 times as many as that used by OptGNN.  


\subsection{Additional Results}
\label{sec:complete-results}

\subsubsection{Comparisons with Gurobi and Greedy}
\autoref{tab:cut_baselines_extended} and \autoref{tab:VC_baselines} contain comparisons on additional datasets with Gurobi and Greedy.

\begin{table}[!htbp]
\centering
\small
\begin{tabular}{llrrrr}
\toprule
            Dataset &      OptGNN & Greedy &  {Gurobi}& Gurobi & Gurobi \\
              &          &     &   0.1s &   1.0s &   8.0s \\
\midrule
 \midrule 
BA\textsuperscript{a} (50,100) &  42.88 (27) &  51.92 &  42.82 &  42.82 &  42.82 \\
BA\textsuperscript{a} (100,200) &  83.43 (25) & 101.42 &  83.19 &  83.19 &  83.19 \\
BA\textsuperscript{a} (400,500) & 248.74 (27) & 302.53 & 256.33 & 246.49 & 246.46 \\
 \midrule 
ER\textsuperscript{a} (50,100) &  55.25 (21) &  68.85 &  55.06 &  54.67 &  54.67 \\
ER\textsuperscript{a} (100,200) & 126.52 (18) & 143.51 & 127.83 & 123.47 & 122.76 \\
ER\textsuperscript{a} (400,500) & 420.70 (41) & 442.84 & 423.07 & 423.07 & 415.52 \\
 \midrule 
HK\textsuperscript{a} (50,100) &  43.06 (25) &  51.38 &  42.98 &  42.98 &  42.98 \\
HK\textsuperscript{a} (100,200) &  84.38 (25) & 100.87 &  84.07 &  84.07 &  84.07 \\
HK\textsuperscript{a} (400,500) & 249.26 (27) & 298.98 & 247.90 & 247.57 & 247.57 \\
 \midrule 
WC\textsuperscript{a} (50,100) &  46.38 (26) &  72.55 &  45.74 &  45.74 &  45.74 \\
WC\textsuperscript{a} (100,200) &  91.28 (21) & 143.70 &  89.80 &  89.80 &  89.80 \\
WC\textsuperscript{a} (400,500) & 274.21 (31) & 434.52 & 269.58 & 269.39 & 269.39 \\
       \midrule 
MUTAG\textsuperscript{b} &   7.79 (18) &  12.84 &   7.74 &   7.74 &   7.74 \\
     ENZYMES\textsuperscript{b} &  20.00 (24) &  27.35 &  20.00 &  20.00 &  20.00 \\
    PROTEINS\textsuperscript{b} &  25.29 (18) &  33.93 &  24.96 &  24.96 &  24.96 \\
    IMDB-BIN\textsuperscript{b} &  16.78 (18) &  17.24 &  16.76 &  16.76 &  16.76 \\
      COLLAB\textsuperscript{b} &  67.50 (23) &  71.74 &  67.47 &  67.46 &  67.46 \\
   \midrule 
REDDIT-BIN\textsuperscript{c} &  82.85 (38) & 117.16 &  82.81 &  82.81 &  82.81 \\
REDDIT-M-12K\textsuperscript{c} &  81.55 (25) & 115.72 &  81.57 &  81.52 &  81.52 \\
 REDDIT-M-5K\textsuperscript{c} & 107.36 (33) & 153.24 & 108.73 & 107.32 & 107.32 \\
\bottomrule
\end{tabular}
\caption{Performance of OptGNN, Greedy, and Gurobi 0.1s, 1s, and 8s on Min-Vertex-Cover. For each approach and dataset, we report the average Vertex-Cover size measured on the test slice. Here, lower score is better. In parentheses, we include the average runtime in \textit{milliseconds} for OptGNN.}
\label{tab:VC_baselines}
\end{table}


\begin{table}[!htbp]
\centering
\small
\begin{tabular}{llrrrr}
\toprule
            Dataset &         OptGNN &  Greedy & {Gurobi}& Gurobi & Gurobi \\
              &             &      &     0.1s &     1.0s &     8.0s \\
\midrule
 \midrule 
BA\textsuperscript{a} (50,100) &    351.49 (18) &  200.10 &   351.87 &   352.12 &   352.12 \\
BA\textsuperscript{a} (100,200) &    717.19 (20) &  407.98 &   719.41 &   719.72 &   720.17 \\
BA\textsuperscript{a} (400,500) &   2197.99 (66) & 1255.22 &  2208.11 &  2208.11 &  2212.49 \\
 \midrule 
ER\textsuperscript{a} (50,100) &    528.95 (18) &  298.55 &   529.93 &   530.03 &   530.16 \\
ER\textsuperscript{a} (100,200) &   1995.05 (24) & 1097.26 &  2002.88 &  2002.88 &  2002.93 \\
ER\textsuperscript{a} (400,500) & 16387.46 (225) & 8622.34 & 16476.72 & 16491.60 & 16495.31 \\
 \midrule 
HK\textsuperscript{a} (50,100) &    345.74 (18) &  196.23 &   346.18 &   346.42 &   346.42 \\
HK\textsuperscript{a} (100,200) &    709.39 (23) &  402.54 &   711.68 &   712.26 &   712.88 \\
HK\textsuperscript{a} (400,500) &   2159.90 (61) & 1230.98 &  2169.46 &  2169.46 &  2173.88 \\
 \midrule 
WC\textsuperscript{a} (50,100) &    198.29 (18) &  116.65 &   198.74 &   198.74 &   198.74 \\
WC\textsuperscript{a} (100,200) &    389.83 (24) &  229.43 &   390.96 &   392.07 &   392.07 \\
WC\textsuperscript{a} (400,500) &   1166.47 (78) &  690.19 &  1173.45 &  1175.97 &  1179.86 \\
       \midrule 
MUTAG\textsuperscript{b} &      27.95 (9) &   16.95 &    27.95 &    27.95 &    27.95 \\
     ENZYMES\textsuperscript{b} &     81.37 (14) &   48.53 &    81.45 &    81.45 &    81.45 \\
    PROTEINS\textsuperscript{b} &    102.15 (12) &   60.74 &   102.28 &   102.36 &   102.36 \\
    IMDB-BIN\textsuperscript{b} &     97.47 (11) &   51.85 &    97.50 &    97.50 &    97.50 \\
      COLLAB\textsuperscript{b} &   2622.41 (22) & 1345.70 &  2624.32 &  2624.57 &  2624.62 \\
   \midrule 
REDDIT-BIN\textsuperscript{c} &   693.33 (186) &  439.79 &   693.02 &   694.10 &   694.14 \\
REDDIT-M-12K\textsuperscript{c} &    568.00 (89) &  358.40 &   567.71 &   568.91 &   568.94 \\
 REDDIT-M-5K\textsuperscript{c} &   786.09 (133) &  495.02 &   785.44 &   787.48 &   787.92 \\
\bottomrule
\end{tabular}
\caption{Performance of OptGNN, Greedy, and Gurobi 0.1s, 1s, and 8s on Maximum Cut. For each approach and dataset, we report the average cut size measured on the test slice. Here, higher score is better. In parentheses, we include the average runtime in \textit{milliseconds} for OptGNN.}
\label{tab:cut_baselines_extended}
\end{table}


\subsubsection{Ratio tables}
In \autoref{tab:cut_ratios} and \autoref{tab:VC_ratios} we supply the performance of OptGNN as a ratio against the integral value achieved by Gurobi running with a time limit of 8 seconds. These tables include the standard deviation in the ratio. We note that for Maximum Cut, OptGNN comes within 1.1\% of the Gurobi 8s value, and for Min-Vertex-Cover, OptGNN comes within 3.1\%.

\begin{figure}[!htbp]
\centering

    \centering
    \begin{minipage}{0.45\textwidth}
        \centering
        \begin{tabular}{ll}
        \toprule
        Dataset & OptGNN \\
        \midrule
        BA\textsuperscript{a} (50,100) & 0.998 $\pm$ 0.002 \\
        BA\textsuperscript{a} (100,200) & 0.996 $\pm$ 0.003 \\
        BA\textsuperscript{a} (400,500) & 0.993 $\pm$ 0.003 \\
        \midrule
        ER\textsuperscript{a} (50,100) & 0.998 $\pm$ 0.002 \\
        ER\textsuperscript{a} (100,200) & 0.996 $\pm$ 0.002 \\
        ER\textsuperscript{a} (400,500) & 0.993 $\pm$ 0.001 \\
        \midrule
        HK\textsuperscript{a} (50,100) & 0.998 $\pm$ 0.002 \\
        HK\textsuperscript{a} (100,200) & 0.995 $\pm$ 0.003 \\
        HK\textsuperscript{a} (400,500) & 0.994 $\pm$ 0.003 \\
        \midrule
                WC\textsuperscript{a} (50,100) & 0.998 $\pm$ 0.003 \\
        \bottomrule
        \end{tabular}
    \end{minipage}%
    \hfill
    \begin{minipage}{0.45\textwidth}
        \centering
        \begin{tabular}{ll}
        \toprule
        Dataset & OptGNN \\
        \midrule
        WC\textsuperscript{a} (100,200) & 0.995 $\pm$ 0.003 \\
        WC\textsuperscript{a} (400,500) & 0.989 $\pm$ 0.003 \\
        \midrule
        MUTAG\textsuperscript{b} & 1.000 $\pm$ 0.000 \\
        ENZYMES\textsuperscript{b} & 0.999 $\pm$ 0.003 \\
        PROTEINS\textsuperscript{b} & 1.000 $\pm$ 0.002 \\
        IMDB-BIN\textsuperscript{b} & 1.000 $\pm$ 0.001 \\
        COLLAB\textsuperscript{b} & 0.999 $\pm$ 0.002 \\
        \midrule
        REDDIT-BIN\textsuperscript{c} & 1.000 $\pm$ 0.001 \\
        REDDIT-M-12K\textsuperscript{c} & 0.999 $\pm$ 0.002 \\
        REDDIT-M-5K\textsuperscript{c} & 0.999 $\pm$ 0.002 \\
        \bottomrule
        \end{tabular}
    \end{minipage}
\caption{Performance of OptGNN on Max-Cut compared to Gurobi running under an 8 second time limit, expressed as a ratio. For each dataset, we take the ratio of the integral values achieved by OptGNN and Gurobi 8s on each of the graphs in the test slice. We present the average and standard deviation of these ratios. Here, higher is better. This table demonstrates that OptGNN achieves nearly the same performance, missing on average 1.1\% of the cut value in the worst measured case.}
\label{tab:cut_ratios}
\end{figure}

\begin{figure}[!htbp]
\centering
\small

    \begin{minipage}{0.45\textwidth}

    \centering
    \begin{tabular}{ll}
    \toprule
    Dataset & OptGNN \\
    \midrule
    BA\textsuperscript{a} (50,100) & 1.001 $\pm$ 0.005 \\
    BA\textsuperscript{a} (100,200) & 1.003 $\pm$ 0.005 \\
    BA\textsuperscript{a} (400,500) & 1.008 $\pm$ 0.011 \\
    \midrule
    ER\textsuperscript{a} (50,100) & 1.010 $\pm$ 0.015 \\
    ER\textsuperscript{a} (100,200) & 1.031 $\pm$ 0.012 \\
    ER\textsuperscript{a} (400,500) & 1.013 $\pm$ 0.006 \\
    \midrule
    HK\textsuperscript{a} (50,100) & 1.002 $\pm$ 0.007 \\
    HK\textsuperscript{a} (100,200) & 1.004 $\pm$ 0.013 \\
    HK\textsuperscript{a} (400,500) & 1.007 $\pm$ 0.011 \\
    \midrule 
        WC\textsuperscript{a} (50,100) & 1.014 $\pm$ 0.016 \\
    \bottomrule
    \end{tabular}
    \end{minipage}%
    \begin{minipage}{0.45\textwidth}
        \centering
    \begin{tabular}{ll}
    \toprule
    Dataset & OptGNN \\
    \midrule

    WC\textsuperscript{a} (100,200) & 1.016 $\pm$ 0.013 \\
    WC\textsuperscript{a} (400,500) & 1.018 $\pm$ 0.007 \\
    \midrule
    MUTAG\textsuperscript{b} & 1.009 $\pm$ 0.027 \\
    ENZYMES\textsuperscript{b} & 1.000 $\pm$ 0.000 \\
    PROTEINS\textsuperscript{b} & 1.010 $\pm$ 0.021 \\
    IMDB-BIN\textsuperscript{b} & 1.002 $\pm$ 0.016 \\
    COLLAB\textsuperscript{b} & 1.001 $\pm$ 0.003 \\
    \midrule
    REDDIT-BIN\textsuperscript{c} & 1.000 $\pm$ 0.002 \\
    REDDIT-M-12K\textsuperscript{c} & 1.000 $\pm$ 0.001 \\
    REDDIT-M-5K\textsuperscript{c} & 1.000 $\pm$ 0.001 \\
    \bottomrule
    \end{tabular}
    \end{minipage}

\caption{Performance of OptGNN on Min-Vertex-Cover compared to Gurobi running under an 8 second time limit, expressed as a ratio. For each dataset, we take the ratio of the integral values achieved by OptGNN and Gurobi 8s on each of the graphs in the test slice. We present the average and standard deviation of these ratios. Here, lower is better. This table demonstrates that OptGNN achieves nearly the same performance, producing a cover on average 3.1\% larger than Gurobi 8s in the worst measured case.}
\label{tab:VC_ratios}
\end{figure}

\subsection{Model ablation study}\label{sec:VC_ablation}Here we provide the evaluations of several models that were trained on the same loss as OptGNN. We see that OptGNN consistently achieves the best performance among different neural architectures.
Note that while OptGNN was consistently the best model, other models were able to perform relatively well; for instance, GatedGCNN achieves average cut values within a few percent of OptGNN on nearly all the datasets (excluding COLLAB). This points to the overall viability of training using an SDP relaxation for the loss function.

\begin{table*}[!htbp]
\scriptsize
\centering
\begin{tabular}{llllll}
\toprule
         Dataset &            GAT &           GCNN &            GIN &      GatedGCNN &         OptGNN \\
\midrule
 \midrule 
ER\textsuperscript{a} (50,100) &    525.92 (25) &    500.94 (17) &    498.82 (14) &    526.78 (14) &    \textbf{528.95} (18) \\
ER\textsuperscript{a} (100,200) &   1979.45 (20) &   1890.10 (26) &   1893.23 (23) &   1978.78 (21) &   \textbf{1995.05} (24) \\
ER\textsuperscript{a} (400,500) & 16317.69 (208) & 15692.12 (233) & 15818.42 (212) & 16188.85 (210) & \textbf{16387.46} (225) \\
       \midrule 
MUTAG\textsuperscript{b} &     27.84 (19) &     27.11 (12) &     27.16 (13) &     \textbf{27.95} (14) &      \textbf{27.95} (9) \\
     ENZYMES\textsuperscript{b} &     80.73 (17) &     74.03 (12) &     73.85 (16) &      81.35 (9) &     \textbf{81.37} (14) \\
    PROTEINS\textsuperscript{b} &    100.94 (14) &     92.01 (19) &     92.62 (17) &    101.68 (10) &    \textbf{102.15} (12) \\
    IMDB-BIN\textsuperscript{b} &     81.89 (18) &     70.56 (21) &     81.50 (10) &      97.11 (9) &     \textbf{97.47} (11) \\
      COLLAB\textsuperscript{b} &   2611.83 (22) &   2109.81 (21) &   2430.20 (23) &   2318.19 (18) &   \textbf{2622.41} (22) \\
\bottomrule
\end{tabular}
\caption{Performance of various model architectures for selected datasets on Maximum Cut. Here, higher is better. GAT is the Graph Attention network \citep{velivckovic2018graph}, GIN is the Graph Isomorphism Network \citep{xu2018how}, GCNN is the Graph Convolutional Neural Network \citep{morris2019weisfeiler}, and GatedGCNN is the gated version \citep{li2015gated}.}
\label{tab:cut_models}
\end{table*}
\autoref{tab:VC_models} presents the performance of alternative neural network architectures on Min-Vertex-Cover.
\begin{table}[!htbp]
\scriptsize
\centering
\begin{tabular}{llllll}
\toprule
         Dataset &         GAT &        GCNN &         GIN &   GatedGCNN &      OptGNN \\
\midrule
 \midrule 
ER\textsuperscript{a} (50,100) &  58.78 (20) &  64.42 (23) &  64.18 (20) &  56.17 (14) &  \textbf{55.25} (21) \\
ER\textsuperscript{a} (100,200) & 129.47 (20) & 141.94 (17) & 140.06 (20) & 130.32 (20) & \textbf{126.52} (18) \\
ER\textsuperscript{a} (400,500) & 443.93 (43) & 444.12 (33) & 442.11 (31) & 440.90 (28) & \textbf{420.70} (41) \\
       \midrule 
MUTAG\textsuperscript{b} &   \textbf{7.79} (19) &   8.11 (16) &   7.95 (20) &   \textbf{7.79} (17) &   \textbf{7.79} (18) \\
     ENZYMES\textsuperscript{b} &  21.93 (24) &  25.42 (18) &  25.80 (28) &  20.28 (14) &  \textbf{20.00} (24) \\
    PROTEINS\textsuperscript{b} &  28.19 (23) &  31.07 (19) &  32.28 (21) &  \textbf{25.25} (19) &  25.29 (18) \\
    IMDB-BIN\textsuperscript{b} &  17.62 (21) &  19.22 (19) &  19.03 (23) &  16.79 (15) &  \textbf{16.78} (18) \\
      COLLAB\textsuperscript{b} &  68.23 (23) &  73.32 (17) &  73.82 (26) &  72.92 (13) &  \textbf{67.50} (23) \\
\bottomrule
\end{tabular}
\caption{Performance of various model architectures compared to OptGNN for selected datasets on Min-Vertex-Cover. Here, lower is better.}
\label{tab:VC_models}
\end{table}
\clearpage
\subsection{Effects of hyperparameters on performance}
\autoref{fig:hparam_vc} and \autoref{fig:hparam_mc} present overall trends in model performance across hyperparameters.
\begin{figure}[t]
    \centering
    \begin{subfigure}[b]{0.6\textwidth}
        \centering
        \includegraphics[width=\linewidth]{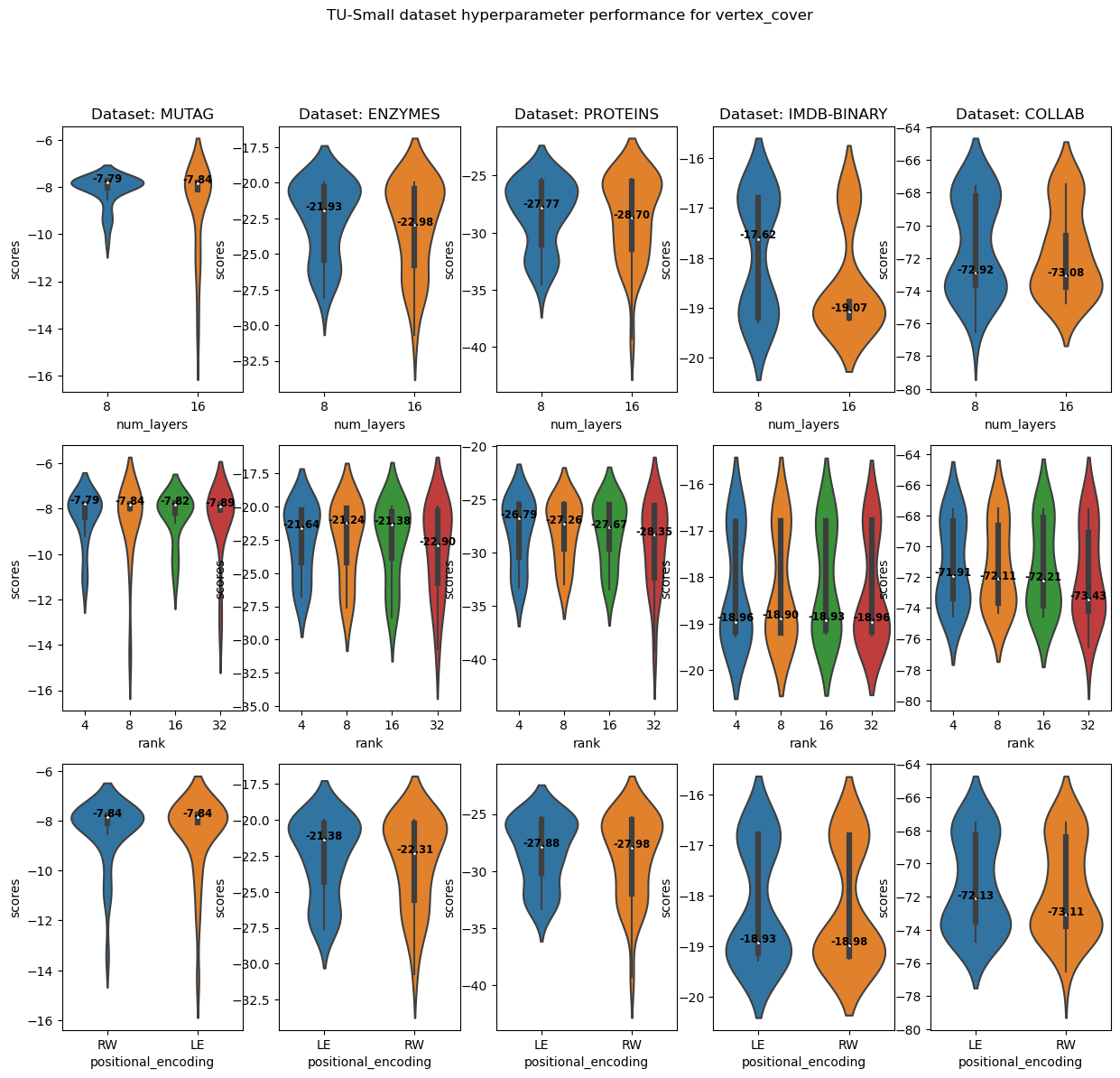}
        \caption{On \textbf{TU-small} datasets.}
    \end{subfigure}
    ~
    \begin{subfigure}[b]{0.6\textwidth}
        \centering
        \includegraphics[width=\linewidth]{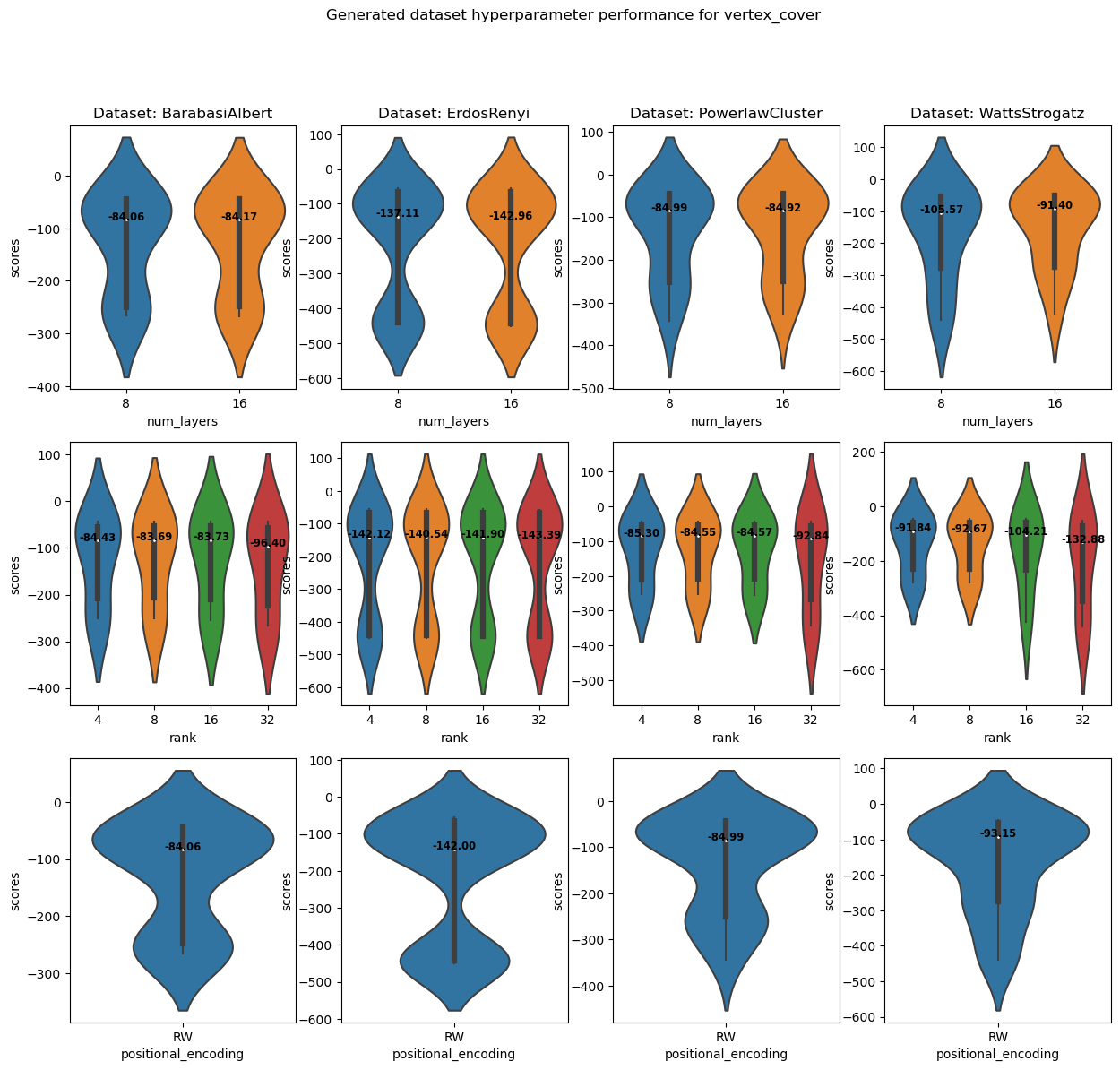}
        \caption{On generated datasets.}
    \end{subfigure}
    \caption{Trends in model performance on Min-Vertex-Cover with respect to the number of layers, hidden size, and positional encoding of the models.}
    \label{fig:hparam_vc}
\end{figure}
\clearpage
\begin{figure}[H]
    \centering
    \begin{subfigure}[b]{0.6\textwidth}
        \centering
        \includegraphics[width=\linewidth]{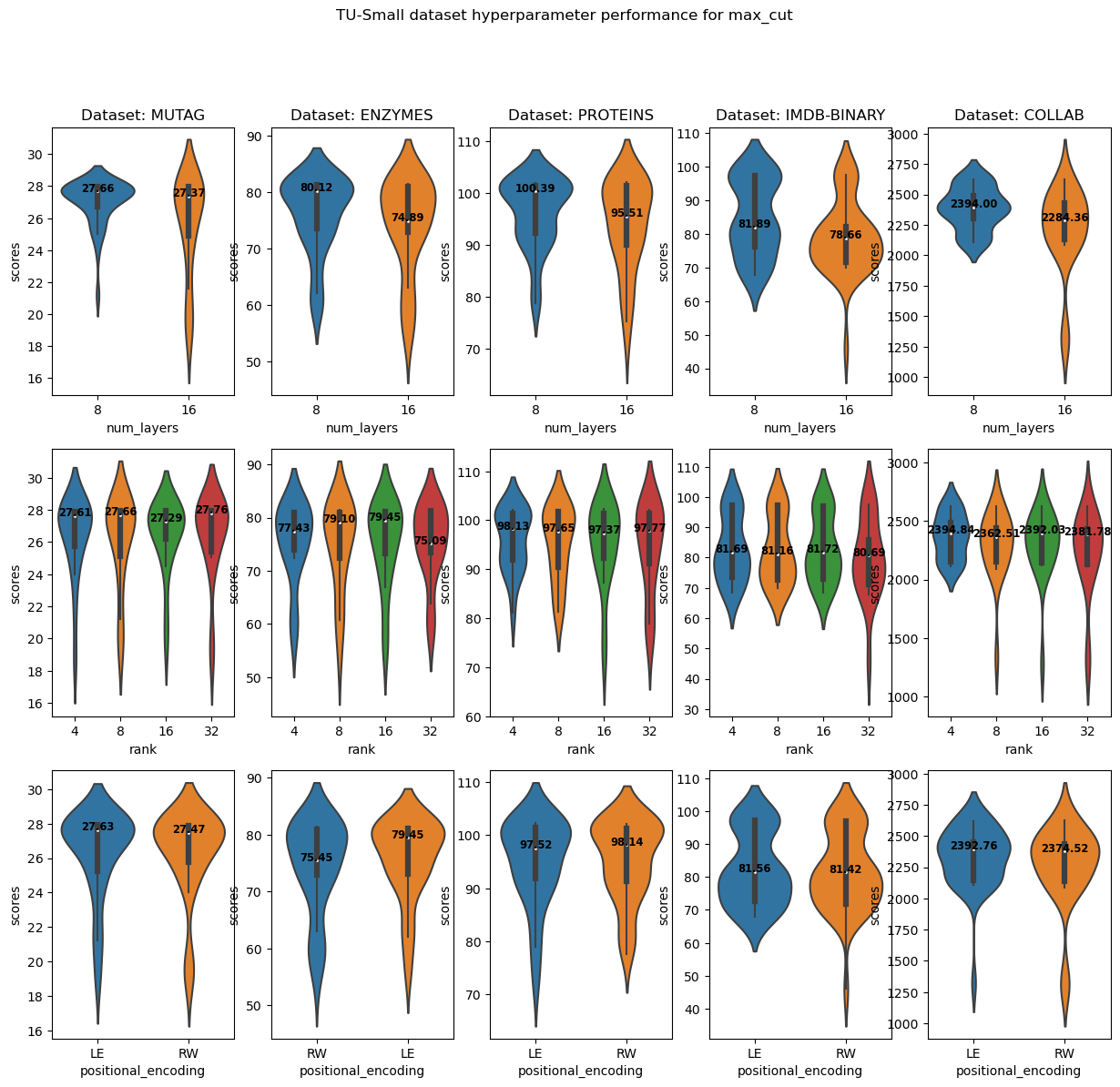}
        \caption{On \textbf{TU-small} datasets.}
    \end{subfigure}
    ~
    \begin{subfigure}[b]{0.6\textwidth}
        \centering
        \includegraphics[width=\linewidth]{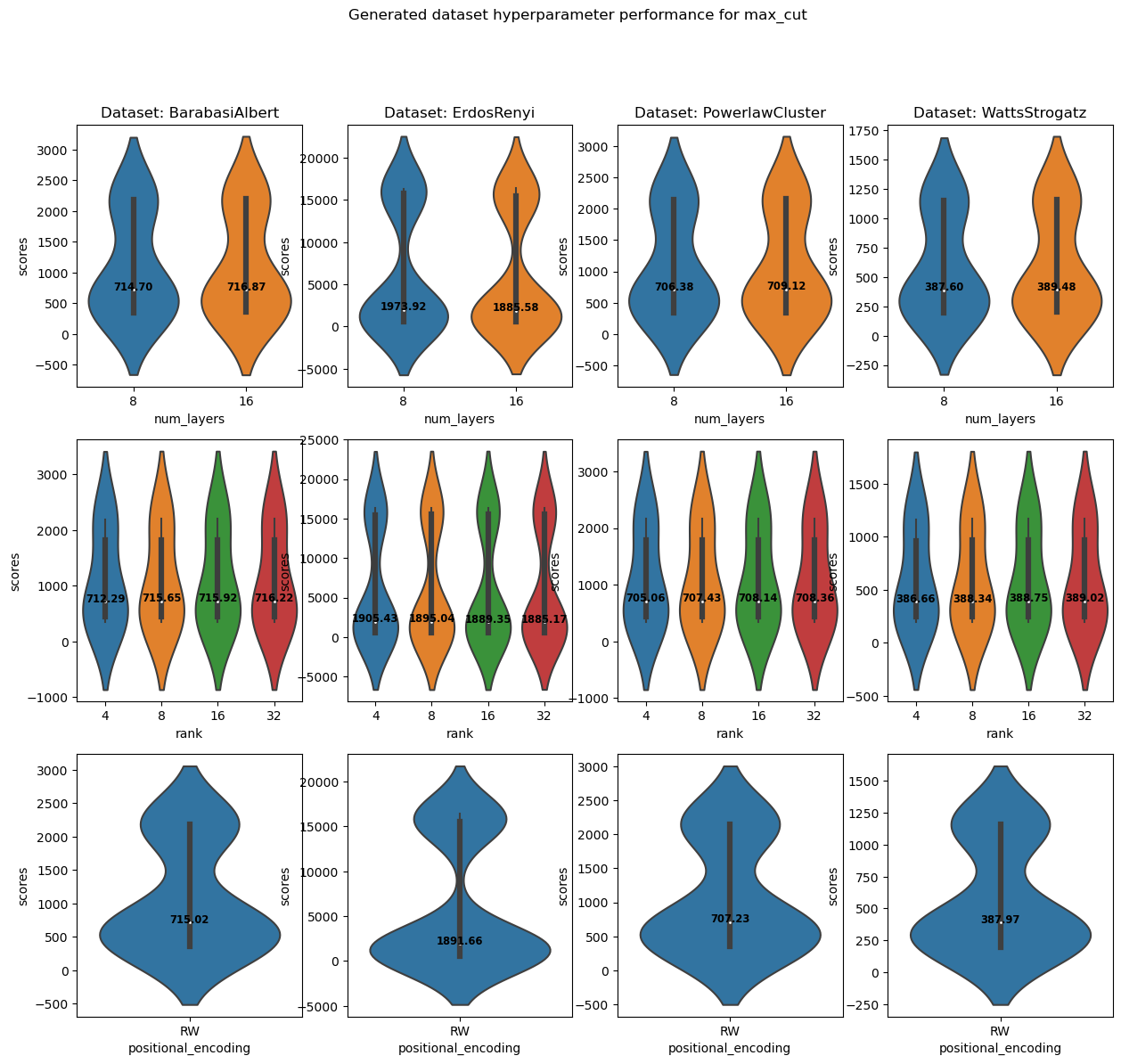}
        \caption{On generated datasets.}
    \end{subfigure}
    \caption{Trends in model performance on Max-Cut with respect to the number of layers, hidden size, and positional encoding of the models.}
    \label{fig:hparam_mc}
\end{figure}
\subsection{Out of distribution testing} \label{sec:oodtest}

\begin{table}[!h] 
\centering
\begin{tabular}{lrrrrr}
\toprule
Train Dataset &  MUTAG &  ENZYMES &  PROTEINS &  IMDB-BIN &  COLLAB \\
\midrule
  \midrule 
BA (50,100) &   \textbf{7.74} &    20.12 &     27.66 &        17.57 &   74.15 \\
 BA (100,200) &   \textbf{7.74} &    20.35 &     26.03 &        16.86 &   69.29 \\
 BA (400,500) &   8.05 &    21.00 &     26.54 &        17.34 &   70.17 \\
  \midrule 
ER (50,100) &   \textbf{7.74} &    20.37 &     28.17 &        16.86 &   69.07 \\
 ER (100,200) &   8.05 &    21.52 &     27.72 &        16.89 &   68.83 \\
 ER (400,500) &   7.79 &    21.55 &     28.60 &        16.78 &   68.74 \\
  \midrule 
HK (50,100) &   \textbf{7.74} &    20.42 &     25.60 &        17.05 &   69.17 \\
 HK (100,200) &   7.84 &    20.43 &     27.30 &        17.01 &   70.20 \\
 HK (400,500) &   7.95 &    20.63 &     26.30 &        17.15 &   69.91 \\
  \midrule 
WC (50,100) &   7.89 &    \textbf{20.13} &     25.46 &        17.38 &   70.14 \\
 WC (100,200) &   7.79 &    20.30 &     25.45 &        17.91 &   71.16 \\
 WC (400,500) &   8.05 &    20.48 &     25.79 &        17.12 &   70.16 \\
        \midrule 
MUTAG &   \textbf{7.74} &    20.83 &     26.76 &        16.92 &   70.09 \\
      ENZYMES &   \textbf{7.74} &    20.60 &     28.29 &        16.79 &   68.40 \\
     PROTEINS &   7.89 &    20.22 &     \textbf{25.29} &        16.77 &   70.26 \\
  IMDB-BIN &   7.95 &    20.97 &     27.06 &        \textbf{16.76} &   68.03 \\
       COLLAB &   7.89 &    20.35 &     26.13 &        \textbf{16.76} &   \textbf{67.52} \\
\bottomrule
\end{tabular}
\caption{Models for Min-Vertex-Cover trained on "dataset" were tested on a selection of the TU datasets ({ENZYMES, PROTEINS, MUTAG, IMDB-BINARY, and COLLAB}). We observe that the performance of the models generalizes well even when they are taken out of their training context.}
\label{tab:generalization}
\end{table}

OptGNN trained on one dataset performed quite well on other datasets without any finetuning, suggesting that the model can generalize to examples outside its training distribution.
For each dataset in our collection, we train a model and then test the trained model on a subset of datasets in the collection. The results are shown in \autoref{tab:generalization}.
It is apparent from the results that the model performance generalizes well to different datasets. Interestingly, we frequently observe that the model reaches its peak performance on a given test dataset even when trained on a different one. This suggests that the model indeed is capturing elements of a more general process instead of just overfitting the training data.

\subsection{Pseudocode for OptGNN training and inference} \label{sec:pseudocode}
In \algref{alg:OptGNNpseudocodemaxcut}, we present pseudocode for OptGNN in the Max-Cut case and in \algref{alg:OptGNNpseudocodegeneral} pseudocode for the forward pass of a general SDP.
\begin{algorithm} 
\caption{OptGNN pseudocode for Max-Cut forward pass}
\label{alg:OptGNNpseudocodemaxcut} 
\begin{algorithmic}[1]
\REQUIRE graph $G$
\STATE $\mathbf{v}^0 = \{v_1, v_2, \dots, v_N\}$ (random initial feature vectors and/or positional encodings)
\FOR{$t=1,2,3,\dots, T$} 
\FOR{$v_i^t \in \mathbf{v}^t$} 
\STATE ${v}_i^{t+1} \gets {v}_i^{t} + \sum_{j \in N(i)}v_j^t$
\STATE $\hat{v}_i^{t+1} \gets \text{Linear}(\frac{{v}_i^{t+1}}{|{v}_i^{t+1}|}) $
\ENDFOR 
\ENDFOR
\end{algorithmic}
\end{algorithm}

\begin{algorithm} 
\caption{OptGNN pseudocode for implementing a general SDP forward pass}
\label{alg:OptGNNpseudocodegeneral} 
\begin{algorithmic}[1]
\REQUIRE graph $G$
\STATE $\mathbf{v}^0 = \{v_1, v_2, \dots, v_N\}$  (random initial feature vectors and/or positional encodings)
\FOR{$t=1,2,3,\dots, T$} 
\FOR{$v_i^t \in \mathbf{v}^t$} 
\STATE ${v}_i^{t+1} \gets {v}_i^{t} + \text{Autograd}(\mathcal{L}( \mathbf{v}_i^{t};G))$
\STATE $\hat{v}_i^{t+1} \gets \text{Linear}(\frac{{v}_i^{t+1}}{|{v}_i^{t+1}|}) $
\ENDFOR 
\ENDFOR
\end{algorithmic}
\end{algorithm}



\section{Generalization Analysis}
In this section we produce a generalization bound for OptGNN. First we restate our result that a penalized loss approximates the optimum of \sdpref{alg:sdp-vector}.  Our analysis follows from a standard perturbation argument where the key is to bound the lipschitzness of the OptGNN aggregation function.  Here we will have to be more precise with the exact polynomial dependence of the lipschitzness of the gradient $\nabla \mathcal{L}_\rho$ and the smoothness of the hessian $\nabla^2 \mathcal{L}_\rho$.        
\paragraph{Notation:} For the convenience of the proofs in this section, with a slight abuse of notation, we will define loss functions $\mathcal{L}_\rho(V)$ that take matrix arguments $V$ instead of collections of vectors $\mathbf{v}$ where $V$ is simply the vectors in $\mathbf{v}$ concatenated row by row.  We will also refer to rows of $V$ by indexing them with set notation $v_i \in V$ where $v_i$ denotes the $i$'th row of $V$.  Furthermore, let every vector be bounded in norm by some absolute constant $B$.   

We begin by recomputing precisely the number of iterations required for \algref{alg:message-passing} to approximate the global optimum of \sdpref{alg:sdp-vector}.  Note that this was done in the proof of \thmref{thm:main-theorem} but we do it here with explicit polynomial dependence.     
\begin{lemma}[gradient descent lemma restated] \label{lem:precision} 
\Algref{alg:message-passing} computes in $O(\Phi^4\epsilon^{-4}\log^4(\delta^{-1}))$ iterations a set of vectors $\mathbf{v} \defeq\{\hat{v}_s\}$ for all $s \subseteq \mathcal{S}(P)$ for all $P \in \mathcal{P}$ that satisfy the constraints of \sdpref{alg:sdp-vector} to average error $\epsilon$ and approximates the optimum of \sdpref{alg:sdp-vector} to error $\epsilon$ with probability $1-\delta$ 
\[\big|\sum_{P_z \in \mathcal{P}} \pE_{\hat{\mu}}[P_z(X_z)] - \text{SDP}(\Lambda)\big| \leq \epsilon\]
where $\text{SDP}(\Lambda)$ is the optimum of \sdpref{alg:sdp-vector}.  
\end{lemma}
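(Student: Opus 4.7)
The plan is to repeat the argument of \thmref{thm:main-theorem} while tracking the polynomial dependence on $\Phi$, $\epsilon$, and $\delta$ explicitly. The overall structure has four stages: (i) set up the penalized Lagrangian $\mathcal{L}_\rho(V)$ of \eqref{eq:main-loss}, (ii) bound its gradient Lipschitz constant $\ell$ and its Hessian Lipschitz constant $\gamma$ as polynomials in $\Phi$, $\rho$, and the norm bound $B$, (iii) invoke perturbed gradient descent (\thmref{thm:perturbed-gd}, cf.\ \cite{jin2017escape}) to reach an $(\epsilon',\gamma^2)$-approximate second-order stationary point in a number of iterations that is polynomial in $\ell$, $\gamma$, and $1/\epsilon'$, and finally (iv) combine \lemref{lem:smoothed} with \lemref{lem:precision} (the precision lemma already proved in the body) to turn the SOSP into an $\epsilon$-approximate solution to \sdpref{alg:sdp-vector}.

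More concretely, after fixing $\rho = \mathrm{poly}(2^k,\epsilon^{-1})$ as dictated by \lemref{lem:precision}, every term in $\mathcal{L}_\rho$ is a quartic polynomial in the entries of $V$, and there are at most $\Phi$ such terms contributing to any given coordinate derivative. A direct calculation (analogous to the ones carried out in \lemref{lem:csp-lipschitz} and \lemref{lem:csp-hessian}) gives $\ell,\gamma = \mathrm{poly}(\Phi)$. Plugging these into the Jin--Ge--Netrapalli--Kakade--Jordan saddle-escape bound yields that $T = \tilde O(\ell\gamma/\epsilon'^2)\log^4(\delta^{-1})$ iterations suffice to find an $(\epsilon',\gamma^2)$-SOSP of $\mathcal{L}_\rho$ with probability at least $1-\delta$, with noise schedule and step size chosen exactly as in \algref{alg:message-passing}.

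Next, I would convert the SOSP guarantee to a global optimality guarantee via \lemref{lem:smoothed}: for an $(\epsilon',\gamma^2)$-SOSP $\hat V$ of the overparameterized loss $\mathcal{L}_\rho$, the induced lifted matrix $\hat X = \hat V^T \hat V$ satisfies
\begin{equation*}
\mathcal{H}_\rho(\hat X) - \mathcal{H}_\rho(\tilde X) \;\le\; \gamma\sqrt{\epsilon'}\,\mathrm{Tr}(\tilde X) + \tfrac{1}{2}\epsilon'\,\|\tilde V\|_F \;\le\; \mathrm{poly}(\Phi)\sqrt{\epsilon'},
\end{equation*}
where the last bound uses that $\mathrm{Tr}(\tilde X)\le \Phi$ and $\|\tilde V\|_F\le \Phi B$. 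Setting $\epsilon' = \epsilon^2/\mathrm{poly}(\Phi)$ therefore makes the gap to the penalized optimum $\mathcal{H}_\rho(\tilde X)$ at most $\epsilon/2$. By \lemref{lem:precision}, the penalized optimum itself is within $\epsilon/2$ of $\text{SDP}(\Lambda)$, and the average constraint violation is at most $\epsilon$. Combining these two $\epsilon/2$ gaps gives the desired $\epsilon$-approximation to $\text{SDP}(\Lambda)$.

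Tracking the iteration count through these substitutions yields
\begin{equation*}
T \;=\; \tilde O\!\left(\frac{\ell\gamma}{\epsilon'^2}\right)\log^4(\delta^{-1}) \;=\; O\!\left(\Phi^{4}\,\epsilon^{-4}\,\log^{4}(\delta^{-1})\right),
\end{equation*}
which matches the claim. The main obstacle is stage (ii): producing clean polynomial bounds on $\ell$ and $\gamma$ for the quartic Lagrangian with the quadratic constraint penalties requires careful bookkeeping of how many predicates touch a given vector in the constraint graph and how the coefficient $w_s/\mathcal{C}(s)$ scales — essentially the same combinatorics used to argue that the per-iteration update time is $O(\Phi)$. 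Once those constants are pinned down, everything else is a plug-in of known results.
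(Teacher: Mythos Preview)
Your proposal is correct and follows essentially the same route as the paper's proof: bound the gradient and Hessian Lipschitz constants of $\mathcal{L}_\rho$ via \lemref{lem:csp-lipschitz} and \lemref{lem:csp-hessian}, plug these into the perturbed-gradient-descent guarantee \thmref{thm:perturbed-gd}, choose $\epsilon'$ to be $\mathrm{poly}(\Phi^{-1},\epsilon)$ so that the SOSP-to-global conversion (\lemref{lem:smoothed} plus the precision lemma) yields an $\epsilon$-approximation, and read off the iteration count. The paper's proof is slightly terser---it records $\ell,\gamma = \mathrm{poly}(B)\rho$ rather than $\mathrm{poly}(\Phi)$ and does not re-derive the SOSP-to-global step since that was already done in \thmref{thm:main-theorem}---but the logic is identical.
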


\begin{proof}
To apply the gradient descent lemma of \cite{jin2017escape} \thmref{thm:perturbed-gd} we need a bound on the lipschitzness of the gradient and the smoothness of the hessian of the loss \eqref{eq:loss-normalize}.  By the lipschitz gradient \lemref{lem:csp-lipschitz} we have that the loss is $\ell \defeq \text{poly}(B)\rho$ lipschitz, and by the smooth hessian \lemref{lem:csp-hessian} we have the loss is $\gamma \defeq \text{poly(B)}\rho$ smooth. Then we have by \thmref{thm:perturbed-gd} that perturbed gradient descent in  
\[
O\left( \frac{(f(X_0) - f^*)\ell}{\epsilon'^2} \log^4\left(\frac{d\ell\Delta_f}{\epsilon'^2\delta}\right) \right)
\]
iterations can achieve a $(\gamma^2,\epsilon')-$SOSP.  In our setting $|f(X_0) - f^*| \leq 1$ because the loss is normalized between $[0,1]$.  Our desired accuracy $\epsilon'$ is $\text{poly}(B^{-1},\rho^{-1})\Phi^{-2} = \text{poly}(B^{-1},\epsilon^{-1})\Phi^{-2}$ where we take $\rho = \text{poly}(2^k,\epsilon^{-1})$ as in \lemref{lem:precision}. For these settings we achieve an $\epsilon$ approximation in $\Tilde{O}(\Phi^4 \epsilon^{-4}\log^4(\delta^{-1}))$  iterations.  
\end{proof}

Next we move on to prove the lipschitzness of the Max-CSP gradient.  This is important for two reasons.  First we need it to bound the number of iterations required in the proof of \lemref{lem:precision}.  Secondly, the lipschitzness of the hessian will be the key quantity   
\begin{lemma}[Lipschitz Gradient Lemma Max-CSP] \label{lem:csp-lipschitz}
For a Max-CSP instance $\Lambda$, Let $\mathcal{L}_\rho(\mathbf{v})$ be the normalized loss defined in \eqref{eq:loss-normalize}.  Then the gradient satisfies
\[\|\nabla\mathcal{L}_\rho(V) - \nabla\mathcal{L}_\rho(\hat{V})\|_F \leq O(B^4\rho)\| V - \hat{V}\|_F \]
\end{lemma}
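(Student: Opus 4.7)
The approach is to compute $\nabla_{v_w}\mathcal{L}_\rho(V)$ explicitly using \eqref{eq:loss-grad}, decompose it into a linear-in-$V$ piece (from the CSP objective $\sum y_s\langle v_g,v_{g'}\rangle$) and a cubic-in-$V$ piece (from the quadratic penalties on the equality constraints and the unit-norm constraints), and bound the Frobenius-norm Lipschitz constant of each piece separately. The linear piece is trivially $O(1)$-Lipschitz: the coefficients $|y_s|/|\mathcal C(s)|$ are bounded by $1$, each vector $v_w$ appears in at most a bounded-by-arity-$k$ number of objective monomials per predicate, and the $1/|\mathcal P|$ normalization cancels the sum over predicates. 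The nontrivial piece is therefore the cubic part.

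Every cubic summand in $\nabla_{v_w}\mathcal{L}_\rho(V)$ has one of two shapes: the equality-penalty gradient $2\rho(\langle v_w,v_{w'}\rangle - \langle v_h,v_{h'}\rangle)v_{w'}$, or the unit-norm gradient $2\rho(\|v_w\|^2-1)v_w$. For each I would apply the telescoping identity
\begin{equation*}
abc - \hat a\hat b\hat c = (a-\hat a)bc + \hat a(b-\hat b)c + \hat a\hat b(c-\hat c),
\end{equation*}
adapted to treat the inner-product pair $\langle v_w,v_{w'}\rangle$ analogously (so $\langle v_g,v_{g'}\rangle - \langle \hat v_g,\hat v_{g'}\rangle = \langle v_g-\hat v_g,v_{g'}\rangle + \langle\hat v_g,v_{g'}-\hat v_{g'}\rangle$), and use $\|v_i\|\le B$ to bound each ``held-fixed'' product by $B^2$. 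This yields that the difference of a single cubic summand at $V$ versus $\hat V$ is bounded by $O(B^2\rho)$ times a sum of at most five vector differences $\|v_i-\hat v_i\|$. The unit-norm term is handled by the direct expansion $(\|v\|^2-1)v - (\|\hat v\|^2-1)\hat v = (\|v\|^2-\|\hat v\|^2)v + (\|\hat v\|^2-1)(v-\hat v)$, which gives the same $O(B^2\rho)\|v-\hat v\|$ bound.

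Finally I would assemble the Frobenius bound by writing $\|\nabla_{v_w}\mathcal{L}_\rho(V) - \nabla_{v_w}\mathcal{L}_\rho(\hat V)\| \le O(B^2\rho)\sum_u c_{wu}\|v_u-\hat v_u\|$, squaring, summing over $w$, and applying Cauchy--Schwarz. Here $c_{wu}$ counts how many cubic summands in the $v_w$-gradient touch $v_u$; this incidence matrix has row- and column-sums bounded by $O(1)$ once the $1/|\mathcal P|$ normalization is used to cancel the bounded-arity constraint-graph degree. The extra factor of $B^2$ relative to the per-summand bound of $O(B^2\rho)$ in the stated $O(B^4\rho)$ constant is slack, coming from the Cauchy--Schwarz overhead and from the simultaneous-perturbation cases where two vectors in a single monomial are both moved.

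The main obstacle is the final combinatorial bookkeeping: verifying that the degree of the constraint graph does not introduce hidden $\Phi$- or $|\mathcal P|$-dependent factors into the Lipschitz constant. This is precisely the purpose of normalizing the loss by $1/|\mathcal P|$ in \eqref{eq:loss-normalize}, and the argument hinges on showing that each individual vector $v_w$ participates in a uniformly bounded number of penalty summands after this normalization is applied.
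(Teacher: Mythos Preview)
Your proposal is correct and follows essentially the same route as the paper: decompose $\nabla\mathcal{L}_\rho$ into the linear objective term and the two cubic penalty terms, telescope each cubic summand using $\|v_i\|\le B$, and control the combinatorial factor via the $1/|\mathcal P|$ normalization together with the bounded-arity structure of the constraint graph. The only cosmetic difference is that the paper rewrites the aggregated gradient difference as $(G_\Lambda\odot M)(V-\hat V)$ for suitable entrywise-bounded matrices $M$ and bounds $\|G_\Lambda\|_F\le 2^k\sqrt{|\mathcal P|}$ directly, whereas you phrase the same step as a row/column-sum bound on the incidence matrix $c_{wu}$; the two arguments are interchangeable.
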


\begin{proof}
We begin with the form of the Max-CSP gradient.
\begin{align}
    \left\|\nabla \mathcal{L}_\rho(V) - \nabla \mathcal{L}_\rho(\hat{V})\right\|_F = \sqrt{\sum_{w \in \mathcal{F}} \left\|\frac{\partial \mathcal{L}_\rho(V)}{\partial v_w} - \frac{\partial \mathcal{L}_\rho(\hat{V})}{\partial \hat{v}_w} \right\|_F^2}
\end{align}
Where recall 
\begin{multline}
\frac{\partial \mathcal{L}_\rho(\mathbf{v})}{\partial v_w}  = \frac{1}{|\mathcal{P}|}\Bigg[\sum_{\substack{P_z \in \mathcal{P}\\ \text{s.t } w \subseteq z}} \sum_{\substack{s \subseteq z\\\text{s.t } w \subseteq s}} y_s\frac{1}{|\mathcal{C}(s)|}\sum_{\substack{w' \subseteq s\\ \text{s.t } \zeta(w,w') = s}} v_{w'}\\ 
+ 2\rho \Bigg[\sum_{\substack{P_z \in \mathcal{P}\\ \text{s.t } w \subseteq z}} \sum_{\substack{w',h,h' \subseteq s\\ \text{s.t } \zeta(w,w') = \zeta(h,h')}} \big(\langle v_w,v_{w'}\rangle - \langle v_h, v_{h'}\rangle\big) v_w' \\
+(\| v_w\|^2 - 1)v_w\Bigg]\Bigg]
\end{multline} 
We break the gradient $\nabla \mathcal{L}_\rho(V)$ up into three terms $\mathcal{T}_1(V),\mathcal{T}_2(V),$ and $\mathcal{T}_3(V)$
such that 
$\partial \mathcal{L}_\rho(V)/ \partial v_w = \mathcal{T}_1(V)|_w + \mathcal{T}_2(V)|_w + \mathcal{T}_3(V)|_w$
Where $\mathcal{T}_1(V)|_w, \mathcal{T}_2(V)|_w, \mathcal{T}_3(V)|_w$ are defined as follows 
\begin{align}
\mathcal{T}_1(V)|_w \defeq \frac{1}{|\mathcal{P}|}\Bigg[\sum_{\substack{P_z \in \mathcal{P}\\ \text{s.t } w \subseteq z}} \sum_{\substack{s \subseteq z\\\text{s.t } w \subseteq s}} y_s\frac{1}{|\mathcal{C}(s)|}\sum_{\substack{w' \subseteq s\\ \text{s.t } \zeta(w,w') = s}} v_{w'}\Bigg]
\end{align} 

\begin{align}
\mathcal{T}_2(V)|_w  = \frac{2\rho}{|\mathcal{P}|}\Bigg[\sum_{\substack{P_z \in \mathcal{P}\\ \text{s.t } w \subseteq z}} \sum_{\substack{w',h,h' \subseteq s\\ \text{s.t } \zeta(w,w') = \zeta(h,h')}} \big(\langle v_w,v_{w'}\rangle - \langle v_h, v_{h'}\rangle\big) v_w'\Bigg]
\end{align}

\begin{align}
\mathcal{T}_3(V)|_w \defeq \frac{2\rho}{|\mathcal{P}|}(\| v_w\|^2 - 1)v_w 
\end{align}
Such that by triangle inequality we have 
\begin{align} \label{eq:89}
    \left\|\nabla \mathcal{L}_\rho(V) - \nabla \mathcal{L}_\rho(\hat{V})\right\|_F \leq \left\|\mathcal{T}_1(V) - \mathcal{T}_1(\hat{V})\right\|_F + \left\|\mathcal{T}_2(V) - \mathcal{T}_2(\hat{V})\right\|_F 
    + \left\|\mathcal{T}_3(V) - \mathcal{T}_3(\hat{V})\right\|_F
\end{align}
Where the three terms in \eqref{eq:89} are as follows.  
\begin{align}
\left\|\mathcal{T}_1(V) - \mathcal{T}_1(\hat{V})\right\|_F =  \frac{1}{|\mathcal{P}|}\sqrt{\sum_{w \in \mathcal{F}} \left\|\Bigg[\sum_{\substack{P_z \in \mathcal{P}\\ \text{s.t } w \subseteq z}} \sum_{\substack{s \subseteq z\\\text{s.t } w \subseteq s}} y_s\frac{1}{|\mathcal{C}(s)|}\sum_{\substack{w' \subseteq s\\ \text{s.t } \zeta(w,w') = s}}(v_{w'} - \hat{v}_{w'})\Bigg]  \right\|^2}
\end{align} 

\begin{align}
    \left\|\mathcal{T}_2(V) - \mathcal{T}_2(\hat{V})\right\|_F\\ \defeq  \frac{2\rho}{|\mathcal{P}|}\sqrt{\sum_{w \in \mathcal{F}}\left\|  \Bigg[\sum_{\substack{P_z \in \mathcal{P}\\ \text{s.t } w \subseteq z}} \sum_{\substack{w',h,h' \subseteq s\\ \text{s.t } \zeta(w,w') = \zeta(h,h')}} \Big[\big(\langle v_w,v_{w'}\rangle - \langle v_h, v_{h'}\rangle\big) v_w' - \big(\langle \hat{v}_w,\hat{v}_{w'}\rangle - \langle \hat{v}_h, \hat{v}_{h'}\rangle\big) \hat{v}_w' \Big]\Bigg] \right\|^2}
\end{align}

\begin{align}
\left\|\mathcal{T}_3(V) - \mathcal{T}_3(\hat{V})\right\|_F \defeq  \frac{2\rho}{|\mathcal{P}|}\sqrt{\sum_{w \in \mathcal{F}}\left\|\Big[(\|v_w\|^2 - 1)v_w - (\|\hat{v}_w\|^2 - 1)\hat{v}_w\Big]\right\|^2} 
\end{align}

We bound the terms one by one. 
First we bound term $1$.  
\paragraph{Term 1: }
\begin{align} \label{eq:93}
\left\|\mathcal{T}_1(V) - \mathcal{T}_1(\hat{V})\right\|_F =  \frac{1}{|\mathcal{P}|}\sqrt{\sum_{w \in \mathcal{F}} \left\|\Bigg[\sum_{\substack{P_z \in \mathcal{P}\\ \text{s.t } w \subseteq z}} \sum_{\substack{s \subseteq z\\\text{s.t } w \subseteq s}} y_s\frac{1}{|\mathcal{C}(s)|}\sum_{\substack{w' \subseteq s\\ \text{s.t } \zeta(w,w') = s}}(v_{w'} - \hat{v}_{w'})\Bigg]  \right\|^2}
\end{align} 
We will need to define some matrices so that we can write the above expression as the frobenius inner product of matrices.  First we define $G_\Lambda$ to be the adjacency matrix of the constraint graph.  In particular we denote the $(w,w')$ entry of $G_\Lambda$ as $G_\Lambda|_{w,w'}$ defined as follows.  
\[G_\Lambda|_{w,w'} \defeq \begin{cases} 1, & \text{if there exists }P_z \in \mathcal{P} \text{ s.t } \zeta(w,w') \subseteq z \\
0, & \text{otherwise}
\end{cases}\]
Furthermore, we define $M_{y_s/C(s)}$ to be a matrix comprised of a set of coefficients $y_s/|\mathcal{C}(s)|$ corresponding to every edge in the consraint graph $G_\Lambda$.  The $(w,w')$ entry of $M_{y_s/C(s)}$ is denoted $M_{y_s/C(s)}|_{w,w'}$ and defined as follows.   
\[M_{y_s/C(s)}|_{w,w'} \defeq \begin{cases}
y_s/|\mathcal{C}(s)|, & \text{if there exists }P_z \in \mathcal{P} \text{ s.t } \zeta(w,w') = s \subseteq z  \\
0, & \text{otherwise}
\end{cases}
\]
Then rewriting \eqref{eq:93}
\begin{align}
\left\|\mathcal{T}_1(V) - \mathcal{T}_1(\hat{V})\right\|_F=  \frac{1}{|\mathcal{P}|}\sqrt{\sum_{w \in \mathcal{F}} \left\|\Bigg[e_w^T G_{\Lambda} \odot M_{y_s/\mathcal{C}(s)} (V - \hat{V})\Bigg]  \right\|^2}
\end{align} 

\begin{align}
 = \frac{1}{|\mathcal{P}|}\sqrt{\left\|(G_{\Lambda} \odot M_{y_s/\mathcal{C}(s)})(V - \hat{V}) \right\|_F^2}
\end{align} 
By Cauchy-Schwarz we obtain 
\begin{align}
 \left\|\mathcal{T}_1(V) - \mathcal{T}_1(\hat{V})\right\|_F \leq \frac{1}{|\mathcal{P}|}\left\|G_{\Lambda} \odot M_{y_s/\mathcal{C}(s)} \right\| \left\|V - \hat{V}\right\|_F
\end{align} 
Next we move on to bound term $2$. 
\paragraph{Term 2: }
\begin{align}
    \left\|\mathcal{T}_2(V) - \mathcal{T}_2(\hat{V})\right\|_F = 
\end{align} 
\begin{align} \label{eq:98}
    \frac{2\rho}{|\mathcal{P}|} \sqrt{\sum_{w \in \mathcal{F}}\left\|  \Bigg[\sum_{\substack{P_z \in \mathcal{P}\\ \text{s.t } w \subseteq z}}\sum_{\substack{w',h,h' \subseteq s\\ \text{s.t } \zeta(w,w') = \zeta(h,h')}} \Big[\big(\langle v_w,v_{w'}\rangle v_w' - \langle v_h, v_{h'}\rangle v_w'\big)  - \big(\langle \hat{v}_w,\hat{v}_{w'}\rangle \hat{v}_w'- \langle \hat{v}_h, \hat{v}_{h'}\rangle \hat{v}_w'\big)  \Big]\Bigg] \right\|^2}
\end{align}
Let the vector $\delta_{w'} = v_w - v_{w'}$ and let the scalar $\delta_{w,w'} = \langle v_w,v_{w'}\rangle - \langle \hat{v}_w,\hat{v}_{w'}\rangle$.  Then we have by plugging definitions that
\begin{align} \label{eq:99}
     \big(\langle v_w,v_{w'}\rangle v_w' - \langle \hat{v}_w,\hat{v}_{w'}\rangle \hat{v}_w'\big) = \big(\langle v_w,v_{w'}\rangle v_w' - ((\langle v_w,v_{w'}\rangle + \delta_{w,w'})(v_w'+ \delta_{w'})) \big)
\end{align}
Substituting \eqref{eq:99} into \eqref{eq:98} in the square root 
we obtain 

\begin{align}
     =  \frac{2\rho}{|\mathcal{P}|}\sqrt{\sum_{w \in \mathcal{F}}\left\|\sum_{\substack{P_z \in \mathcal{P}\\ \text{s.t } w \subseteq z}} \sum_{\substack{w' \subseteq s}} -\langle v_w,v_{w'}\rangle\delta_{w'} - \delta_{w,w'}v_w' + \delta_{w,w'}\delta_{w'} \right\|^2}
\end{align}
Applying triangle inequality we obtain 
\begin{align}
     \leq  \frac{16\rho}{|\mathcal{P}|}\sqrt{\sum_{w \in \mathcal{F}}\left\|\sum_{\substack{P_z \in \mathcal{P}\\ \text{s.t } w \subseteq z}} \sum_{\substack{w' \subseteq s}} -\langle v_w,v_{w'}\rangle\delta_{w'}\right\|^2 + \sum_{w \in \mathcal{F}} \left\|\sum_{\substack{P_z \in \mathcal{P}\\ \text{s.t } w \subseteq z}} \sum_{\substack{w' \subseteq s}}- \delta_{w,w'}v_w'\right\|^2 +  \sum_{w \in \mathcal{F}}\left\|\sum_{\substack{P_z \in \mathcal{P}\\ \text{s.t } w \subseteq z}} \sum_{\substack{w' \subseteq s}}\delta_{w,w'}\delta_{w'} \right\|^2}
\end{align}
Let $M_{\langle v_w,v_{w'}\rangle} \in \R^{|\mathcal{P}|2^k \times |\mathcal{P}|2^k}$ be the matrix whose $w,w'$ entry is $\langle v_w,v_w'\rangle$.  Let $M_{\delta_{w,w'}}$ be the matrix whose $w,w'$ entry is $\delta_{w,w'}$. Let $\odot$ denote the entrywise product of two matrices.   
\begin{align}
     = \frac{16\rho}{|\mathcal{P}|}\sqrt{\left\| G_\Lambda \odot M_{\langle v_w,v_{w'}\rangle} (V - \hat{V})\right\|_F^2 +  \left\|G_\Lambda \odot M_{\delta_{w,w'}} (V - \hat{V})\right\|_F^2 +  \left\|G_{\Lambda} \odot M_{\delta_{w,w'}}(V - \hat{V}) \right\|_F^2}
\end{align}

Applying Cauchy-Schwarz we obtain 
\begin{align}
     = \frac{16\rho}{|\mathcal{P}|}\sqrt{\left\| G_\Lambda \odot M_{\langle v_w,v_{w'}\rangle}\right\|_F^2 \left\|(V - \hat{V})\right\|_F^2 + \left\|G_\Lambda \odot M_{\delta_{w,w'}} \right\|_F^2 \left\|(V - \hat{V})\right\|_F^2 +  \left\|G_{\Lambda} \odot M_{\delta_{w,w'}}\right\|_F^2 \left\|(V - \hat{V}) \right\|_F^2}
\end{align}

We apply entrywise upper bound $\langle v_w,v_{w'}\rangle \leq B^2$ which can be done because we're taking a frobenius norm so the sign of each entry does not matter. Likewise we apply a crude entrywise upper bound of $\delta_{w,w'} \leq B^2$ again because the sign of each entry does not matter in the frobenius norm (for each row this is euclidean norm).
\begin{align}
     = \frac{16B^4\rho}{|\mathcal{P}|}\sqrt{\left\| G_\Lambda \right\|_F^2 \left\|V - \hat{V}\right\|_F^2 + \left\|G_\Lambda  \right\|_F^2 \left\|V - \hat{V}\right\|_F^2 +\left\|G_{\Lambda}\right\|_F^2 \left\|V - \hat{V} \right\|_F^2}
\end{align}
Using the fact that $\left\| G_{\lambda}\right\|_F \leq 2^k\sqrt{|\mathcal{P}|}$
\begin{align}
     = \frac{2^kB^4\rho}{\sqrt{|\mathcal{P}|}} \left\|V - \hat{V}\right\|_F \leq 2^kB^4\rho \left\|V - \hat{V}\right\|_F
\end{align}

Therefore we have established 
\begin{align}
    \left\|\mathcal{T}_2(V) - \mathcal{T}_2(\hat{V})\right\|_F \leq 2^kB^4\rho \left\|V - \hat{V}\right\|_F
\end{align} 
Finally we move on to bound term $3$.  
\paragraph{Term 3: }
\begin{align}
\left\|\mathcal{T}_3(V) - \mathcal{T}_3(\hat{V})\right\|_F \defeq  \frac{2\rho}{|\mathcal{P}|}\sqrt{\sum_{w \in \mathcal{F}}\left\|\Big[(\|v_w\|^2 - 1)v_w - (\|\hat{v}_w\|^2 - 1)\hat{v}_w\Big]\right\|^2} 
\end{align}

\begin{align}
= \frac{2\rho}{|\mathcal{P}|}\sqrt{\sum_{w \in \mathcal{F}}\left\|\Big[\|v_w\|^2 v_w - \|\hat{v}_w\|^2\hat{v}_w +\hat{v}_w - v_w \Big]\right\|^2} 
\end{align}
Applying triangle inequality we obtain 
\begin{align}
\leq \frac{2\rho}{|\mathcal{P}|}\sqrt{\sum_{w \in \mathcal{F}}\Big[\left\|\left\|v_w\right\|^2 v_w - \left\|\hat{v}_w\right\|^2\hat{v}_w\right\|^2\Big]} 
+ \frac{2\rho}{|\mathcal{P}|}\sqrt{\sum_{w \in \mathcal{F}}\Big[\left\|\hat{v}_w - v_w\right\|^2\Big]} 
\end{align}
Using the fact that $\sum_{w \in \mathcal{F}} \Big[\left\|\hat{v}_w - v_w\right\|^2\Big] = \| V - \hat{V} \|_F$
\begin{align}
= \frac{2\rho}{|\mathcal{P}|}\sqrt{\sum_{w \in \mathcal{F}}\Big[\left\|\left\|v_w\right\|^2 v_w - \left\|\hat{v}_w\right\|^2\hat{v}_w\right\|^2\Big]} 
+ \frac{2\rho}{|\mathcal{P}|}\|V - \hat{V}\|_F 
\end{align}
Let $\delta_{v_w} \defeq v_w' - v_w$ then substituting the definition we obtain 
\begin{align}
= \frac{2\rho}{|\mathcal{P}|}\sqrt{\sum_{w \in \mathcal{F}}\Big[\left\|\left\|v_w\right\|^2 v_w - \left\|v_w + \delta_{v_w}\right\|^2(v_w + \delta_{v_w})\right\|^2\Big]} 
+ \frac{2\rho}{|\mathcal{P}|}\|V - \hat{V}\|_F
\end{align}
expanding the expression in the square root we obtain 
\begin{align}
= \frac{2\rho}{|\mathcal{P}|}\sqrt{\sum_{w \in \mathcal{F}}\Big[\left\|- 2\langle v_w,\delta_{v_w} \rangle v_w - \left\| \delta_{v_w}\right\|^2 v_w  - \left\|v_w\right\|^2\delta_{v_w} - 2\langle v_w,\delta_{v_w} \rangle \delta_{v_w} - \left\| \delta_{v_w}\right\|^2\delta_{v_w}\right\|^2\Big]} 
\\+ \frac{2\rho}{|\mathcal{P}|}\|V - \hat{V}\|_F
\end{align}
By triangle inequality we upper bound by 
\begin{multline}\label{eq:113}
\leq \frac{2\rho}{|\mathcal{P}|}\sqrt{\sum_{w \in \mathcal{F}}\Big[\left\|- 2\langle v_w,\delta_{v_w} \rangle v_w\right\|^2\Big]} \\
+ \frac{2\rho}{|\mathcal{P}|}\sqrt{\sum_{w \in \mathcal{F}}\Big[ \left\|- \left\| \delta_{v_w}\right\|^2 v_w\right\|^2\Big]} \\
+ \frac{2\rho}{|\mathcal{P}|}\sqrt{\sum_{w \in \mathcal{F}}\Big[ \left\| - \left\|v_w\right\|^2\delta_{v_w}\right\|^2\Big]} 
\\
+ \frac{2\rho}{|\mathcal{P}|}\sqrt{\sum_{w \in \mathcal{F}}\Big[\left\|- 2\langle v_w,\delta_{v_w} \rangle \delta_{v_w}\right\|^2\Big]}
\\
+ \frac{2\rho}{|\mathcal{P}|}\sqrt{\sum_{w \in \mathcal{F}}\Big[\left\| - \left\| \delta_{v_w}\right\|^2\delta_{v_w}\right\|^2\Big]}
\\
+ \frac{2\rho}{|\mathcal{P}|}\|V - \hat{V}\|_F
\end{multline}
For the first term consider by Cauchy-Schwarz
\[\left\|- 2\langle v_w,\delta_{v_w} \rangle v_w\right\|^2 = 4\langle v_w,\delta_{v_w} \rangle^2\left\|v_w\right\|^2 \leq 4B^4\|\delta_{v_w}\|^2 \]
Consider the second term which we upper bound via the norm bound on $\|v_w\| \leq B$ 
\[\left\|\left\| \delta_{v_w}\right\|^2 v_w \right\| = \left\| \delta_{v_w}\right\|^2B^2\]
Consider the third term which we upper bound via the norm bound on $\|v_w\| \leq B$ 
\[\left\| - \left\|v_w\right\|^2\delta_{v_w}\right\|^2 = \left\|v_w\right\|^2 \left\| \delta_{v_w}\right\|^2 \leq B^2\left\| \delta_{v_w}\right\|^2\]
Consider the fourth term which we upper bound via Cauchy-Schwarz
\[\left\|- 2\langle v_w,\delta_{v_w} \rangle \delta_{v_w}\right\|^2 = 4\langle v_w,\delta_{v_w} \rangle^2\left\| \delta_{v_w}\right\|^2 \leq 4B^4\left\|\delta_{v_w}\right\|^2\]

Consider the fifth term.  We apply a crude upper bound of $\left\|  \delta_{v_w}\right\|^2 \leq B^2$
\[\left\| - \left\| \delta_{v_w}\right\|^2\delta_{v_w}\right\|^2 = \left\| \delta_{v_w}\right\|^2\left\|  \delta_{v_w}\right\|^2 \leq B^2\left\|  \delta_{v_w}\right\|^2\]
Therefore we conclude 

\begin{align}
\ref{eq:113} \leq \frac{20B^2\rho}{|\mathcal{P}|}\sqrt{\sum_{w \in \mathcal{F}}\left\| \delta_{v_w}\right\|^2} + \frac{2\rho}{|\mathcal{P}|}\|V - \hat{V}\|_F\\ =  \frac{20B^2\rho}{|\mathcal{P}|} \left\|V - \hat{V}\right\|_F + \frac{2\rho}{|\mathcal{P}|}\|V - \hat{V}\|_F
\end{align}
So we conclude our bound on term $3$ 
\begin{align}
\left\|\mathcal{T}_3(V) - \mathcal{T}_3(\hat{V})\right\|_F \leq  20B^2\rho\left\|V - \hat{V}\right\|_F
\end{align}
as desired.  
Putting our bounds for terms $1,2,$ and $3$ into \eqref{eq:89} we obtain 
\begin{align}
 \left\|\nabla \mathcal{L}_\rho(V) - \nabla \mathcal{L}_\rho(\hat{V})\right\|_F \leq
 O(B^2\rho) \left\| V - \hat{V} \right\|_F
\end{align}
as desired.  
\end{proof}

\begin{lemma} [Max-CSP gradient perturbation analysis] \label{lem:perturbation}
For any set of matrices $M \defeq \{M_{1},M_2,...,M_{T}\} \in \R^{2r \times r}$ a perturbation $U \defeq \{U_1,U_2,...,U_T\} \in \R^{2r \times r}$ that satisfies $\|U_i\|_{1,1} \leq \epsilon$ for all $i \in [T]$.  Let $M + U$ denote the elementwise addition of $M$ and $U$ as such $M+U \defeq \{M_1 + U_1, M_2 + U_2, ..., M_T + U_T\}$.  Then for a matrix $V \in \R^{r \times N}$ satisfying $\|V\|_F \leq \sqrt{d}$ we define the aggregation function $\text{AGG}: \R^{r \times N} \rightarrow \R^{2r \times N}$ as such 
\begin{align}
\text{AGG}(V) \defeq  
\begin{bmatrix}
V \\
\nabla \mathcal{L}_\rho(V)
\end{bmatrix}
\end{align}

Furthermore, let $\text{LAYER}_{M_i}(V)$ denote  
\[\text{LAYER}_{M_i}(V) = M_i (\text{AGG}(V)) \]
Finally, let $\text{OptGNN}_M: \R^{r \times N} \rightarrow \R$ be defined as 
\[\text{OptGNN}_M(V) =  \mathcal{L}_\rho \circ \text{LAYER}_{M_T} \circ .... \circ \text{LAYER}_{M_2} \circ\text{LAYER}_{M_1}(V)\]
Here we feed the output of the final layer $\text{LAYER}_T$ to the loss function $\mathcal{L}: \R^{r \times N} \rightarrow \R$. Then   
\[\left\|\text{OptGNN}_{M}(V) - \text{OptGNN}_{M+U}(V)\right\|_F \leq O(\epsilon B^{2T}\rho^{2T} r^{2T})\]
\end{lemma}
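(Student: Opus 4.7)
The plan is to propagate the weight perturbation layer by layer, exploiting that the aggregation function $\text{AGG}$ is Lipschitz with constant coming from \lemref{lem:csp-lipschitz}. Let $V^t_M$ and $V^t_{M+U}$ denote the $t$-th layer outputs under weights $M$ and $M+U$ respectively (with $V^0_M = V^0_{M+U} = V$), and set $\delta_t \defeq \|V^t_M - V^t_{M+U}\|_F$. Writing $V^{t+1} = M_t\,\text{AGG}(V^t)$ and adding and subtracting $M_t\,\text{AGG}(V^t_{M+U})$ yields the key identity
\begin{align*}
V^{t+1}_M - V^{t+1}_{M+U} = M_t\bigl(\text{AGG}(V^t_M) - \text{AGG}(V^t_{M+U})\bigr) - U_t\,\text{AGG}(V^t_{M+U}),
\end{align*}
so that, by the triangle inequality and submultiplicativity,
\begin{align*}
\delta_{t+1} \leq \|M_t\|_{\mathrm{op}}\, L_{\mathrm{AGG}}\,\delta_t + \|U_t\|_{\mathrm{op}}\,\|\text{AGG}(V^t_{M+U})\|_F,
\end{align*}
where $L_{\mathrm{AGG}}$ is the Lipschitz constant of $\text{AGG}$: identity on the first row-block plus the gradient of $\mathcal{L}_\rho$ on the second, whose Lipschitzness is exactly the content of \lemref{lem:csp-lipschitz}.

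Next I will collect the pieces that feed into this recurrence. The perturbation magnitude is bounded via $\|U_t\|_{\mathrm{op}} \leq \|U_t\|_F \leq \|U_t\|_{1,1} \leq \epsilon$. The weight magnitude $\|M_t\|_{\mathrm{op}}$ is controlled by the $r \times 2r$ shape together with an entrywise bound on $M_t$ from the admissible parameter space, giving $\|M_t\|_{\mathrm{op}} = O(r)$. In parallel I will run a second induction on the forward iterates themselves: using $\|\nabla\mathcal{L}_\rho(V)\|_F = O(B^2\rho)\|V\|_F + O(\rho)$ (an easy by-product of the proof of \lemref{lem:csp-lipschitz} evaluated at $\hat V = 0$), we obtain $\|V^{t+1}\|_F \leq \|M_t\|_{\mathrm{op}}\cdot O(B^2\rho)\,\|V^t\|_F$, so inductively $\|V^t\|_F = O((rB^2\rho)^t\sqrt{d})$, which also bounds $\|\text{AGG}(V^t_{M+U})\|_F$.

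Plugging all of this into the recurrence gives $\delta_{t+1} \leq A\,\delta_t + \epsilon\,C_t$ with $A$ and $C_t$ both polynomial in $(r,B,\rho)$ and $C_t$ growing like $(rB^2\rho)^t$. Starting from $\delta_0 = 0$ and unrolling produces an exponential-in-$T$ bound for $\delta_T$. One final Lipschitzness invocation for $\mathcal{L}_\rho$ at the terminal iterate converts the embedding perturbation $\delta_T$ into a scalar-output perturbation, giving the target $O(\epsilon B^{2T}\rho^{2T} r^{2T})$ after absorbing lower-order powers of $T$ into the big-$O$.

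The hard part is bookkeeping rather than any deep argument. Because the layer update has no explicit normalization, the iterates $V^t$ grow geometrically in $t$, so the recurrence for $\delta_t$ carries simultaneously a Lipschitz amplification factor and a growing driving term, and the two must combine without spoiling the clean exponential-in-$T$ form. The precise polynomial dependencies from \lemref{lem:csp-lipschitz} (stated there in $B$ and $\rho$) must be tracked alongside the $r$-factor coming from $\|M_t\|_{\mathrm{op}}$; the cleanest way I see is to normalize $\delta_t$ by $(rB^2\rho)^t$ and bound the resulting dimensionless recurrence uniformly in $t$, then reintroduce the amplification factors at the end.
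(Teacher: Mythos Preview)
Your proposal is correct and follows essentially the same route as the paper: a single-layer perturbation bound combining the weight perturbation with the Lipschitzness of $\text{AGG}$ from \lemref{lem:csp-lipschitz}, followed by geometric composition across $T$ layers and a final Lipschitz bound on $\mathcal{L}_\rho$. Your explicit recurrence $\delta_{t+1} \le A\,\delta_t + \epsilon\,C_t$ is exactly the ``geometric series of errors'' the paper describes informally, and your additional tracking of the forward-iterate growth $\|V^t\|_F$ (which feeds into $C_t$) is in fact more careful than the paper's proof, which tacitly treats the driving term as constant across layers.
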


\begin{proof}
We begin by analyzing how much the gradient perturbs the input to a single layer. First we apply the definition of $\text{LAYER}$ and $\text{AGG}$ to obtain   
\begin{align}
&\left\|\text{LAYER}_{M}(V) - \text{LAYER}_{M+U}(V)\right\|_F\\ &\leq \left\|(M+U)V - MV\right\|_F +  \left\| \nabla \mathcal{L}_\rho((M+U)V) -  \nabla \mathcal{L}_\rho(MV)\right\|_F\\
&= \left\|UV\right\|_F +  \left\| \nabla \mathcal{L}_\rho((M+U)V) -  \nabla \mathcal{L}_\rho(MV)\right\|_F\\
& \leq \left\|U\right\|_F \left\|V\right\|_F +  \left\| \nabla \mathcal{L}_\rho((M+U)V) -  \nabla \mathcal{L}_\rho(MV)\right\|_F\\
& \leq \epsilon r^{3/2} +  \left\| \nabla \mathcal{L}_\rho((M+U)V) -  \nabla \mathcal{L}_\rho(MV)\right\|_F
\end{align}
Here the first inequality follows by triangle inequality.  The second inequality follows by Cauchy-Schwarz.  Finally the frobenius norm of $U$ is $\epsilon r$ the frobenius norm of $V$ is 
Then applying the lipschitzness of the gradient \lemref{lem:csp-lipschitz} we obtain.
\begin{align}
&\leq \epsilon r^{3/2} + O(B^2\rho)\left\| (M+U)V - MV\right\|_F = O(B^2\rho)\left\| UV\right\|_F \\
&\leq \epsilon r^{3/2} +O(B^2\rho)\left\| U\right\|_F \left\|V\right\|_F \\
&\leq \epsilon r^{3/2} + O(B^2\rho) \epsilon d\left\|V\right\|_F \\
&= O(B^2\rho) \epsilon r^{3/2}
\end{align}
To conclude, we've established that 
\begin{align}\left\|\text{LAYER}_{M}(V) - \text{LAYER}_{M+U}(V)\right\|_F = O(B^2\rho) \epsilon r^{3/2}
\end{align}
Next we upper bound the lipschitzness of the $\text{LAYER}$ function by the frobenius norm of $\|M\|_F \leq O(r)$ multiplied by the lipschitzness of $\nabla \mathcal{L}_\rho$ which is $O(B^2\rho)$. 
 Taken together we find the the lipschitzness of $\text{LAYER}$ is upper bounded by $O(B^2\rho r)$.    

Note that \text{OptGNN} is comprised of $T$ layers of \text{LAYER} functions followed by the evaluation of the loss $\mathcal{L}$. The $T$ layers contribute a geometric series of errors with the dominant term being $O(\epsilon r^{3/2} B^2\rho * (B^2\rho r)^T ) = O(\epsilon (B^2 \rho r)^{2T})$. 
 The smaller order terms contribute no more than an additional multiplicative factor of $T$ leading to an error of $O(T\epsilon (B^2 \rho r)^{2T})$.  At any rate, the dominant term is the exponential dependence on the number of layers.   Finally, OptGNN's final layer is the evaluation of the loss $\mathcal{L}_\rho$.  The lipschitzness of the loss $\mathcal{L}_\rho$ can be computed as follows.  For any pair of matrices $V$ and $\hat{V}$ both in $\R^{r \times r}$ we have 
\begin{align}
|\mathcal{L}_\rho(V) - \mathcal{L}_\rho(\hat{V})| \leq \frac{\rho 2^k}{|\mathcal{P}|}\left\| V - \hat{V}\right\|_F^2
\end{align}
where we used the fact that the loss is dominated by its quadratic penalty term $\rho$ times the square of the violations where each row in $V$ can be counted in up to $2^k$ constraints normalized by the number of predicates $|\mathcal{P}|$. Putting this together  applying the \text{LAYER} function over $T$ layers we obtain  
\begin{align}
\left\|\text{OptGNN}_{M+U}(V) - \text{OptGNN}_{M}(V)\right\| \leq  O(\epsilon B^{2T}\rho^{2T}r^{2T})
\end{align}
\end{proof}
At this point we restate some elementary theorems in the study of PAC learning adapted for the unsupervised learning setting.    
\begin{lemma}[Agnostic PAC learnability (folklore)] \label{lem:agnostic}Let $x_1,x_2,...,x_N \sim \mathcal{D}$ be data drawn from a distribution $\mathcal{D}$.  Let $\mathcal{H}$ be any finite hypothesis class comprised of functions $h \in \mathcal{H}$ where the range of $h$ in bounded in $[0,1]$. Let $\hat{h} \in \mathcal{H}$ be the empirical loss minimizer 
\[\argmin_{h \in \mathcal{H}} \frac{1}{N}\sum_{i \in [N]}h(x_i)\]

Let $\delta$ be the failure probability and let $\epsilon$ be the approximation error.  Then we say $\mathcal{H}$ is $(N,\epsilon,\delta)$-PAC learnable if 
\[\Pr[|\frac{1}{N}\sum_{i \in [N]}\hat{h}(x_i) - \E_{x \sim \mathcal{D}}[h(x)]| \geq \epsilon] \leq \delta\]
Furthermore, $\mathcal{H}$ is $(N,\epsilon,\delta)$-PAC learnable so long as 
\[N = O\left(\log|\mathcal{H}| \frac{\log(1/\delta)}{\epsilon^2}\right)\]
\end{lemma}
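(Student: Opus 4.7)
The plan is to prove this standard agnostic PAC bound by combining a single-hypothesis concentration inequality with a union bound over $\mathcal{H}$. Since each $h \in \mathcal{H}$ has range contained in $[0,1]$, the random variables $\{h(x_i)\}_{i \in [N]}$ drawn i.i.d.\ from $\mathcal{D}$ are bounded, so the empirical mean concentrates tightly around the population mean.

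First I would fix an arbitrary hypothesis $h \in \mathcal{H}$ and apply Hoeffding's inequality to the bounded i.i.d.\ sequence $h(x_1),\dots,h(x_N)$, yielding
\begin{equation*}
\Pr\left[\,\left|\tfrac{1}{N}\sum_{i \in [N]} h(x_i) - \E_{x \sim \mathcal{D}}[h(x)]\right| \geq \epsilon\,\right] \;\leq\; 2\exp(-2N\epsilon^2).
\end{equation*}
Next I would take a union bound over all $h \in \mathcal{H}$, which is valid precisely because $\mathcal{H}$ is finite. This gives the uniform convergence statement
\begin{equation*}
\Pr\left[\,\sup_{h \in \mathcal{H}}\left|\tfrac{1}{N}\sum_{i \in [N]} h(x_i) - \E_{x \sim \mathcal{D}}[h(x)]\right| \geq \epsilon\,\right] \;\leq\; 2|\mathcal{H}|\exp(-2N\epsilon^2).
\end{equation*}
Setting the right-hand side to $\delta$ and solving for $N$ gives the sample complexity bound $N \geq \tfrac{1}{2\epsilon^2}\log(2|\mathcal{H}|/\delta)$, which is $O(\log|\mathcal{H}|\,\log(1/\delta)/\epsilon^2)$ in the regime of interest.

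Finally, since $\hat{h}$ is itself one element of $\mathcal{H}$ (chosen as a function of the sample, but in a \emph{finite} class this is absorbed by the uniform bound above), the event inside the supremum applied to $\hat{h}$ is controlled by the same probability. That is, on the $(1-\delta)$-probability event, $|\tfrac{1}{N}\sum_i \hat{h}(x_i) - \E_{x \sim \mathcal{D}}[\hat{h}(x)]| \leq \epsilon$, which is the desired $(N,\epsilon,\delta)$-PAC learnability statement. There is no real obstacle here: the only subtle point is that data-dependence of $\hat{h}$ forces us to pass through the uniform (union) bound rather than applying Hoeffding directly to $\hat{h}$, and the finiteness of $\mathcal{H}$ is exactly what makes the union bound affordable.
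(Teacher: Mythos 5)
Your proposal is correct and follows essentially the same route as the paper's proof: apply Hoeffding's inequality to a fixed hypothesis, take a union bound over the finite class $\mathcal{H}$, solve for $N$, and observe that the data-dependent choice $\hat{h}$ is absorbed by the uniform bound (the paper phrases this as the bad event for $\hat{h}$ being contained in the union of bad events over all $h \in \mathcal{H}$). The only superficial difference is that you use the two-sided Hoeffding constant $2\exp(-2N\epsilon^2)$ while the paper writes a one-sided $\exp(-N\epsilon^2)$, which affects nothing in the $O(\cdot)$ bound.
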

The proof is a standard epsilon net union bound argument which the familiar reader should feel free to skip.  
\begin{proof}
For any fixed hypothesis $h$ the difference between the distributional loss between $[0,1]$ and the empirical loss can be bounded by Hoeffding.    
\[\Pr_{x_1,...,x_N \sim \mathcal{D}}\big[\E_{x \sim \mathcal{D}}[h(x)] - \frac{1}{N}\sum_{i \in [N]}h(x_i) > \epsilon\big] \leq \exp(-N\epsilon^2)\]
Let $\hat{h} \in \mathcal{H}$ be the empirical risk minimizer within the hypothesis class $\mathcal{H}$ on data $\{x_1,x_2,..,x_N\}$.  What is the probability that the empirical risk deviates from the distributional risk by greater than $\epsilon$?  i.e we wish to upper bound the quantity
\[\Pr_{x_1,...,x_N \sim \mathcal{D}}\big[\E_{x \sim \mathcal{D}}[\hat{h}(x)] - \frac{1}{N}\sum_{i \in [N]}\hat{h}(x_i) > \epsilon\big]\]
Of course the biggest caveat is that $\hat{h}$ depends on $x_1,...,x_N$ and thus Hoeffding does not apply.  Therefore we upper bound by the probability over draws $x_1,...,x_N \sim \mathcal{D}$ that there exists ANY hypothesis in $\mathcal{H}$ that deviates from its distributional risk by more than $\epsilon$.  
\begin{align}
\Pr_{x_1,...,x_N \sim \mathcal{D}}\big[\E_{x \sim \mathcal{D}}[\hat{h}(x)] - \frac{1}{N}\sum_{i \in [N]}\hat{h}(x_i) > \epsilon\big] \leq \Pr_{x_1,...,x_N \sim \mathcal{D}}\Big[\bigcup_{h \in \mathcal{H}}\big[\E_{x \sim \mathcal{D}}[h(x)] - \frac{1}{N} \sum_{i \in [N]}h(x_i) > \epsilon\big]\Big]
\end{align}
This follows because the event that $\hat{h}$ deviates substantially from its distributional loss is one of the elements of the union on the right hand side.  

\[\Big[\E_{x \sim \mathcal{D}}[\hat{h}(x)] - \frac{1}{N}\sum_{i \in [N]}\hat{h}(x_i) > \epsilon \Big] \subseteq \Big[\bigcup_{h \in \mathcal{H}}\big[\E_{x \sim \mathcal{D}}[h(x)] - \frac{1}{N} \sum_{i \in [N]}h(x_i) > \epsilon\big]\Big]\]
Then we have via union bound that 
\begin{multline}
\Pr_{x_1,...,x_N \sim \mathcal{D}}\big[\E_{x \sim \mathcal{D}}[\hat{h}(x)] - \frac{1}{N}\sum_{i \in [N]}\hat{h}(x_i) > \epsilon\big] \leq \Pr_{x_1,...,x_N \sim \mathcal{D}}\Big[\bigcup_{h \in \mathcal{H}}\big[\E_{x \sim \mathcal{D}}[h(x)] - \frac{1}{N} \sum_{i \in [N]}h(x_i) > \epsilon\big]\Big] \\
\leq \sum_{h \in \mathcal{H}}\Pr_{x_1,...,x_N \sim \mathcal{D}}\Big[\E_{x \sim \mathcal{D}}[h(x)] - \frac{1}{N} \sum_{i \in [N]}h(x_i) > \epsilon\Big]\\
\leq \sum_{h \in \mathcal{H}} \exp(-\epsilon^2N) = |\mathcal{H}|\exp(-\epsilon^2N)
\end{multline}
where the last line follows by Hoeffding. In particular if we want the failure probability to be $\delta$ then 
\begin{align}
|\mathcal{H}|\exp(-\epsilon^2N) \leq \delta
\end{align}
which implies \[N \geq \frac{1}{\epsilon^2}\ln(\frac{|\mathcal{H}|}{\delta})\]
Suffices for the $\hat{h} \in \mathcal{H}$ to deviate from its empirical risk by less than $\epsilon$.  
\end{proof}
Finally putting our perturbation analysis \lemref{lem:perturbation} together with the agnostic PAC learnability \lemref{lem:agnostic}
\begin{lemma} \label{lem:pac-learn}
Let $\Lambda_1,\Lambda_2,...,\Lambda_{\Gamma} \sim \mathcal{D}$ be Max-CSP instances drawn from a distribution over  instances $\mathcal{D}$ with no more than $|\mathcal{P}|$ predicates.  Let $M$ be a set of parameters $M = \{M_1,M_2,...,M_T\}$ in a parameter space $\Theta$. Then for $T = O(\Phi^4)$, for $\Gamma = O(\frac{1}{\epsilon^4}\Phi^6 \log^4(\delta^{-1}))$, 
let $\hat{M}$ be the empirical loss minimizer 

\[\hat{M} \defeq \argmin_{M \in \Theta} \frac{1}{\Gamma} \sum_{i \in [\Gamma]}\text{OptGNN}_{(M,\Lambda_i)}(V)\]
Then we have that $\text{OptGNN}$ is $(\Gamma,\epsilon,\delta)$-PAC learnable 
\[\Pr\left[\left|\frac{1}{\Gamma}\sum_{i \in [\Gamma]}\text{OptGNN}_{(\hat{M},\Lambda_i)}(V) - 
 \E_{\Lambda \sim \mathcal{D}}\left[\text{OptGNN}_{(\hat{M},\Lambda)}(U)\right] \right| \leq \epsilon \right] \geq 1-\delta \]  
\end{lemma}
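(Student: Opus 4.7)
The plan is to prove PAC learnability by the standard covering-number recipe: reduce the uncountable hypothesis class $\{\text{OptGNN}_{(M,\cdot)}(V) : M \in \Theta\}$ to a finite one via an $\epsilon'$-net over the parameter space $\Theta$, apply the agnostic PAC bound of Lemma~\ref{lem:agnostic} to that finite class, and then transfer the guarantee back to arbitrary $M\in\Theta$ using the Lipschitz-in-parameters bound from Lemma~\ref{lem:perturbation}. The natural loss to work with is $\text{OptGNN}_{(M,\Lambda)}(V)$, which we assume is bounded in $[0,1]$ after appropriate normalization (this is already the normalization used in \eqref{eq:loss-normalize}, where $\mathcal{L}_\rho$ has range $[0,1]$ up to the additive penalty contributions that are controlled by the trace bounds used in Theorem~\ref{thm:main-theorem}).

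First I would restrict $\Theta$ to a Frobenius ball of radius $R=\mathrm{poly}(r,T)$, which is harmless because any optimal $\hat M$ may be assumed bounded (otherwise the inner products in $\mathcal{L}_\rho$ blow up and the loss is trivially larger than at $M=0$). Next, build an $\epsilon'$-net $\mathcal{N}\subset\Theta$ in the entrywise $(1,1)$-norm, of cardinality $|\mathcal{N}| \le (3R/\epsilon')^{2r^2 T}$, so that $\log|\mathcal{N}| = O(r^2 T \log(R/\epsilon'))$. For any $M\in\Theta$ let $M^{\mathcal{N}}\in\mathcal{N}$ be its nearest net point; Lemma~\ref{lem:perturbation} gives
\begin{align*}
\bigl|\text{OptGNN}_{(M,\Lambda)}(V) - \text{OptGNN}_{(M^{\mathcal{N}},\Lambda)}(V)\bigr| \;\le\; O\!\bigl(\epsilon' (B^2\rho r)^{2T}\bigr),
\end{align*}
which holds \emph{uniformly} over instances $\Lambda$. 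Choosing $\epsilon' = \epsilon / (c (B^2\rho r)^{2T})$ for a small constant $c$ makes this perturbation at most $\epsilon/4$, so the empirical and distributional losses of $\hat M$ differ from those of its net image by at most $\epsilon/4$ each.

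Then I would apply Lemma~\ref{lem:agnostic} to the finite hypothesis class $\{\text{OptGNN}_{(M,\cdot)}(V) : M \in \mathcal{N}\}$ with accuracy $\epsilon/2$: this requires
\begin{align*}
\Gamma \;=\; O\!\left(\frac{\log|\mathcal{N}| + \log(1/\delta)}{\epsilon^2}\right)
\;=\; O\!\left(\frac{r^2 T \log(R/\epsilon') + \log(1/\delta)}{\epsilon^2}\right)
\end{align*}
samples. Plugging in $T = O(\epsilon^{-4}\Phi^4)$ from Lemma~\ref{lem:precision}, the $\log(R/\epsilon')$ factor becomes $O(T \log(B\rho r /\epsilon)) = \tilde O(\Phi^4 \epsilon^{-4})$, and $r=O(\Phi)$, so the total sample complexity collapses to $\Gamma = \tilde O(\epsilon^{-4}\Phi^6 \log^4(\delta^{-1}))$ as claimed. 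Combining the uniform $\epsilon/4$ perturbation error with the $\epsilon/2$ generalization error for $\hat M^{\mathcal{N}}$ via triangle inequality yields the desired $(\Gamma,\epsilon,\delta)$-PAC bound.

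The main obstacle is the exponential-in-$T$ blowup $(B^2\rho r)^{2T}$ from the perturbation lemma: this forces $\epsilon'$ to be exponentially small in $T$, so care is needed to show that $\log(1/\epsilon')$ is still only $\mathrm{poly}(T,\log(B\rho r/\epsilon))$ and thus contributes a mere multiplicative $T$ factor inside the log to the covering entropy. This is exactly why the sample complexity scales as $\Phi^6$ rather than something worse: the covering entropy is linear in $T\cdot r^2$, and each of those factors is $\mathrm{poly}(\Phi)$, while the logarithmic dependence on the perturbation radius absorbs the exponential tower. I would be explicit in tracking that the loss output stays in $[0,1]$ (so Hoeffding applies cleanly in Lemma~\ref{lem:agnostic}) and that the boundedness $\|V\|_F \le \sqrt{d}$ assumed in Lemma~\ref{lem:perturbation} is preserved by the initialization and projection-in-architecture described in Section~3.
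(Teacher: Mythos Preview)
Your proposal is correct and follows essentially the same approach as the paper: discretize the parameter space $\Theta$ via an $\epsilon'$-net whose mesh is dictated by the perturbation bound of Lemma~\ref{lem:perturbation}, apply the finite-class agnostic PAC bound of Lemma~\ref{lem:agnostic}, and count parameters as $2r^2T$ with $r=\Phi$ and $T$ set by Lemma~\ref{lem:precision}. Your write-up is in fact more explicit than the paper's own proof, which simply states that a net of interval size $\epsilon/r^{2T}$ suffices and that the parameter count is $O(\Phi^6\epsilon^{-4}\log^4(\delta^{-1}))$; you additionally spell out the triangle-inequality splitting into $\epsilon/4+\epsilon/4+\epsilon/2$ and flag the need to verify that the loss stays in $[0,1]$ and that the input-norm hypothesis of Lemma~\ref{lem:perturbation} is maintained.
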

\begin{proof}
The result follows directly from the agnostic PAC learning  \lemref{lem:agnostic} and the perturbation analysis \lemref{lem:perturbation}.  We have that for a net of interval size $\frac{\epsilon}{r^{2T}}$ suffices for an $\epsilon$ approximation.  The cardinality of the net is then $r^{2T}/\epsilon$ per parameter raised the power of the number of parameters required for $\text{OptGNN}$ to represent an $\epsilon$ approximate solution to $\text{SDP}(\Lambda)$.  Each $\text{LAYER}$ is comprised of a matrix $M_i$ of dimension $r \times 2r$ for $r = \Phi$.  By \lemref{lem:precision} we need a total of  
$T = O(\Phi^4 \epsilon^{-4}\log^4(\delta^{-1}))$ layers to represent an $\epsilon$ optimal solution with probability $1-\delta$.  Then the total number of parameters in the network is $ O(\Phi^6 \epsilon^{-4}\log^4(\delta^{-1}))$ as desired.  
\end{proof}
\section{Hessian Lemmas}
This section is to prove the hessian $\nabla^2 \mathcal{L}_{\rho}$ is smooth.  This is relevant for bounding the number of iterations required for \algref{alg:message-passing}.  

\begin{lemma} [Smooth Hessian Lemma] \label{lem:csp-hessian} For a Max-CSP instance $\Lambda$, Let $\mathcal{L}_\rho(\mathbf{v})$ be the normalized loss defined in \eqref{eq:loss-normalize}.  Then the hessian satisfies
\[\big\|\nabla^2 \mathcal{L}_\rho(\mathbf{v}) - \nabla^2 \mathcal{L}_\rho(\mathbf{\hat{v}}) \| \leq 8B\rho\|V - \hat{V}\|_F \]
In particular for $\rho = poly(k,\epsilon^{-1})$ we have that the hessian is $poly(B,k,\epsilon^{-1})$ smooth.   
\end{lemma}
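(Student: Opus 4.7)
The proof plan is to exploit that $\mathcal{L}_\rho$ is a polynomial of total degree at most $4$ in the matrix $V$: the objective piece is bilinear (hence has a constant Hessian, contributing $0$ to the Lipschitz bound), so all of the action comes from the quadratic penalty, which is a finite sum of squares of two types of functions, the bilinear differences $q(V) = \langle v_g, v_{g'}\rangle - \langle v_h, v_{h'}\rangle$ and the shifted norm-squares $r_s(V) = \|v_s\|^2 - 1$. Because these are polynomials of degree $\le 2$, their squares are of degree $\le 4$, and the Hessian of a degree-$4$ polynomial is quadratic in $V$ with a third derivative that is linear in $V$; Lipschitzness of the Hessian is therefore automatic and the entire task reduces to bounding constants and doing bookkeeping.

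For a single squared term $f(V) = q(V)^2$ with $q$ bilinear, the plan is to compute
\begin{equation*}
\nabla^2 f(V) = 2\,\nabla q(V)\,\nabla q(V)^{\top} + 2\,q(V)\,\nabla^2 q(V),
\end{equation*}
and then split
\begin{equation*}
\nabla^2 f(V) - \nabla^2 f(\hat V) = 2\bigl(\nabla q(V) - \nabla q(\hat V)\bigr)\nabla q(V)^{\top} + 2\,\nabla q(\hat V)\bigl(\nabla q(V) - \nabla q(\hat V)\bigr)^{\top} + 2\bigl(q(V) - q(\hat V)\bigr)\nabla^2 q(V).
\end{equation*}
Since $q$ is bilinear, $\nabla^2 q$ is a \emph{constant} symmetric tensor with $O(1)$ operator norm, $\nabla q$ is linear in $V$ with $\|\nabla q(V)\| \le O(B)$ via the uniform bound $\|v_w\|\le B$, and $|q(V)-q(\hat V)| \le O(B)\,\|V-\hat V\|_F$. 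Applying the triangle inequality to the three pieces and Cauchy--Schwarz bounds the operator norm by $O(B)\,\|V-\hat V\|_F$. The norm-squared penalty $(\|v_s\|^2-1)^2$ is handled identically, treating $\|v_s\|^2-1$ as a (shifted) bilinear form in $v_s$; this is in fact a special case of the argument above.

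The next step is to aggregate. Each vector $v_w$ participates in at most $O(2^{2k})$ quadratic penalty terms per predicate containing it, and in at most all predicates indexing it; summing across the full penalty, the coefficient is at most $\rho\cdot|\mathcal{P}|\cdot 2^{O(k)}$. After the global normalization $1/|\mathcal{P}|$ in $\mathcal{L}_\rho$, this collapses to a universal $k$-dependent constant times $B\rho$. Folding that constant into the factor "$8$" stated in the lemma (so that any $2^{O(k)}$ dependence is ultimately absorbed when $\rho = \operatorname{poly}(k,\epsilon^{-1})$, as the lemma's second sentence indicates) gives $\|\nabla^2 \mathcal{L}_\rho(V) - \nabla^2 \mathcal{L}_\rho(\hat V)\| \le 8B\rho\,\|V-\hat V\|_F$.

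The main obstacle I anticipate is going from the per-term operator-norm bound to the operator norm of the full sum: naive triangle inequality over all penalty terms could introduce a spurious $|\mathcal{P}|$ factor that would not cancel the $1/|\mathcal{P}|$ normalization. The fix is to exploit the sparsity structure of the constraint graph $G_\Lambda$, exactly as in the gradient-Lipschitz proof of \lemref{lem:csp-lipschitz}: each per-term Hessian is low-rank ($\le 2$) and supported on the neighborhood of a single vertex in $G_\Lambda$, so the full Hessian is effectively block-diagonal up to a bounded overlap, and the operator norm is controlled by the maximum weighted degree rather than the global sum. Once this block structure is made explicit, the bound reduces to a routine computation paralleling the Frobenius bound already carried out for $\nabla\mathcal{L}_\rho$ in \lemref{lem:csp-lipschitz}.
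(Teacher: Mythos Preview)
Your approach is correct and cleaner than the paper's in one respect, though the overall strategy is the same. Both split $\mathcal{L}_\rho$ into the bilinear objective (constant Hessian, so zero contribution), the squared-constraint penalty, and the norm penalty, and both reduce to a per-term Lipschitz bound plus an aggregation step. Where you invoke the abstract identity $\nabla^2(q^2)=2\nabla q\,\nabla q^\top + 2q\,\nabla^2 q$ and immediately read off $O(B)$ bounds from $\|\nabla q\|\le O(B)$, $|q|\le O(B^2)$, and $\|\nabla^2 q\|=O(1)$, the paper instead writes out all ten blocks of second partials of $\mathcal{T}(v_i,v_j,v_k,v_\ell)=(\langle v_i,v_j\rangle-\langle v_k,v_\ell\rangle)^2$ by hand and bounds each block separately in Frobenius norm. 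Your route is shorter and exposes the degree-$4$ structure directly; the paper's route is more concrete but arrives at the same per-term bound.

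On aggregation, your ``main obstacle'' paragraph is a red herring: as you already note in the paragraph just before it, the number of penalty terms is $|\mathcal{P}|\cdot 2^{O(k)}$, so after the $1/|\mathcal{P}|$ normalization the naive triangle inequality on operator norms leaves only a $2^{O(k)}$ factor, and no block-diagonal argument is needed. The paper's aggregation works in Frobenius norm and uses a counting argument on how many constraints each $\delta_g$ participates in, arriving at $\tfrac{8B\rho}{\sqrt{|\mathcal{P}|}}\|V-\hat V\|_F\le 8B\rho\|V-\hat V\|_F$; this is equivalent to (and slightly sharper than) your operator-norm triangle inequality, but both suffice for the $\mathrm{poly}(B,k,\epsilon^{-1})$ conclusion.
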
  

\begin{proof}

Next we verify this for the Max-CSP hessian. The form of its hessian is far more complex.  To simplify matters we first consider the hessian of polynomials of the form  $\mathcal{T}(v_i,v_j,v_k,v_\ell) = (\langle v_i,v_j\rangle - \langle v_k,v_\ell \rangle)^2$
\[
\nabla^2 \mathcal{T}(v_i,v_j,v_k,v_\ell) \defeq 
\begin{cases} 
  \frac{\partial}{\partial v_{ia}\partial v_{kb}} \mathcal{T} = - 2v_{ja}v_{\ell b}, & \text{for }a,b \in [r] \\
  \frac{\partial}{\partial v_{ia}\partial v_{\ell b}} \mathcal{T} = - 2v_{ja}v_{k b}, & \text{for }a,b \in [r] \\
   \frac{\partial}{\partial v_{ja}\partial v_{k b}} \mathcal{T} = - 2v_{ia}v_{\ell b}, & \text{for }a,b \in [r] \\
   \frac{\partial}{\partial v_{ja}\partial v_{\ell b}} \mathcal{T} = - 2v_{ia}v_{k b}, & \text{for }a,b \in [r] \\
  \frac{\partial}{\partial v_{ia}\partial v_{jb}} \mathcal{T} = 2v_{ja}v_{ib}, & \text{for } a,b \in [r] \\
   \frac{\partial}{\partial v_{ka}\partial v_{\ell b}} \mathcal{T} = 2v_{\ell a}v_{k b}, & \text{for } a,b \in [r] \\
  \frac{\partial}{\partial v_{ia}^2} \mathcal{T} = v_{ja}^2 , & \text{for } a \in [r]\\
   \frac{\partial}{\partial v_{ja}^2} \mathcal{T} = v_{ia}^2 , & \text{for } a \in [r]\\
   \frac{\partial}{\partial v_{ka}^2} \mathcal{T} = v_{\ell a}^2 , & \text{for } a \in [r]\\
   \frac{\partial}{\partial v_{\ell a}^2} \mathcal{T} = v_{k a}^2 , & \text{for } a \in [r]\\
   0, &\text{ otherwise }
\end{cases}
\]
We can decompose the Hessian as a sum of $10$ matrices corresponding to the cases enumerated above.  
\begin{multline}
\nabla^2\mathcal{T}(\bold{v}) = \nabla^2\mathcal{T}(\bold{v})|_{(i,k)} + \nabla^2\mathcal{T}(\bold{v})|_{(i,\ell)} 
+ \nabla^2\mathcal{T}(\bold{v})|_{(j,k)}
+ \nabla^2\mathcal{T}(\bold{v})|_{(j,\ell)}\\
+ \nabla^2\mathcal{T}(\bold{v})|_{(i,j)}
+ \nabla^2\mathcal{T}(\bold{v})|_{(k,\ell)}\\
+ \nabla^2\mathcal{T}(\bold{v})|_{(i,i)}
+ \nabla^2\mathcal{T}(\bold{v})|_{(j,j)}
+ \nabla^2\mathcal{T}(\bold{v})|_{(k,k)}
+ \nabla^2\mathcal{T}(\bold{v})|_{(\ell,\ell)}
\end{multline}
Then we can compute 
\begin{multline}
    \|\nabla^2 \mathcal{T}(\bold{v}) -\nabla^2 \mathcal{T}(\bold{v})\|_F \leq \\ \|\nabla^2\mathcal{T}(\bold{v})|_{(i,k)}- \nabla^2\mathcal{T}(\bold{\hat{v}})|_{(i,k)}\|_F + \|\nabla^2\mathcal{T}(\bold{v})|_{(i,\ell)}- \nabla^2\mathcal{T}(\bold{\hat{v}})|_{(i,\ell)}\|_F\\
+ \|\nabla^2\mathcal{T}(\bold{v})|_{(j,k)}- \nabla^2\mathcal{T}(\bold{\hat{v}})|_{(j,k)}\|_F
+ \|\nabla^2\mathcal{T}(\bold{v})|_{(i,k)}- \nabla^2\mathcal{T}(\bold{\hat{v}})|_{(j,\ell)}\|_F\\
+ \|\nabla^2\mathcal{T}(\bold{v})|_{(i,j)}- \nabla^2\mathcal{T}(\bold{\hat{v}})|_{(i,j)}\|_F
+ \|\nabla^2\mathcal{T}(\bold{v})|_{(k,\ell)}- \nabla^2\mathcal{T}(\bold{\hat{v}})|_{(k,\ell)}\|_F\\
+ \|\nabla^2\mathcal{T}(\bold{v})|_{(i,i)}- \nabla^2\mathcal{T}(\bold{\hat{v}})|_{(i,i)}\|_F
+ \|\nabla^2\mathcal{T}(\bold{v})|_{(j,j)}- \nabla^2\mathcal{T}(\bold{\hat{v}})|_{(j,j)}\|_F\\
+ \|\nabla^2\mathcal{T}(\bold{v})|_{(k,k)}- \nabla^2\mathcal{T}(\bold{\hat{v}})|_{(k,k)}\|_F
+ \|\nabla^2\mathcal{T}(\bold{v})|_{(\ell,\ell)}- \nabla^2\mathcal{T}(\bold{\hat{v}})|_{(\ell,\ell)}\|_F
\end{multline}
Noticing that the first four terms have the same upper bound, terms $5$ and $6$ have the same upper bound, and terms $7$ through $10$ share the same upper bound we obtain 
\begin{multline}
\|\nabla^2 \mathcal{T}(\bold{v}) -\nabla^2 \mathcal{T}(\bold{v})\|_F\leq 4\|\nabla^2\mathcal{T}(\bold{v})|_{(i,k)}- \nabla^2\mathcal{T}(\bold{\hat{v}})|_{(i,k)}\|_F\\
+ 2\|\nabla^2\mathcal{T}(\bold{v})|_{(i,j)}- \nabla^2\mathcal{T}(\bold{\hat{v}})|_{(i,j)}\|_F\\
+ 4\|\nabla^2\mathcal{T}(\bold{v})|_{(i,i)}- \nabla^2\mathcal{T}(\bold{\hat{v}})|_{(i,i)}\|_F
\end{multline}
Now consider the first term 
\begin{align} \label{eq:152}
\|\nabla^2\mathcal{T}(\bold{v})|_{(i,k)}- \nabla^2\mathcal{T}(\bold{\hat{v}})|_{(i,k)}\|_F^2 = 4\big(\sum_{a,b \in [r]}(v_{ja}v_{\ell b} - \hat{v}_{ja}\hat{v}_{\ell b})^2\big)    
\end{align}

Let $\delta_j = \hat{v}_j - v_j$ and $\delta_{\ell} = \hat{v}_\ell - v_{\ell}$.  We expand \eqref{eq:152} to obtain 
\[ = 2\|v_j v_\ell^T - \hat{v}_j \hat{v}_\ell^T \|_F  = 2\|v_j v_\ell^T - (v_j + \delta_j)(v_\ell + \delta_\ell)^T \|_F^2\]
\[ = 2\|v_j v_\ell^T - (v_jv_\ell^T + \delta_j v_\ell^T + v_j \delta_j^T + \delta_j \delta_\ell^T) \|_F^2\]
\[ = 2\|(\delta_j v_\ell^T + v_j \delta_\ell^T + \delta_j \delta_\ell^T) \|_F^2 \]
\[ \leq 16\big(\|\delta_j\|^2  \|v_\ell\|^2 + \|v_j\|^2 \|\delta_j\|^2 + \|\delta_j\|^2 \|\delta_\ell\|^2\big) \leq 16B^2(\|\delta_j\|^2 + \|\delta_\ell\|^2) \]
Where we apply Cauchy-Schwarz as needed.  The second term is bounded similarly.  The last term can be bounded 

\[\|\nabla^2\mathcal{T}(\bold{v})|_{(i,i)}- \nabla^2\mathcal{T}(\bold{\hat{v}})|_{(i,i)}\|_F \leq 2 \sqrt{\sum_{a \in [r]} (v_{ja}^2 - \hat{v}_{ja}^2)^2}\]

Upper bounding $v_{ja} + \hat{v}_{ja} \leq 2B$ which applies for all $a \in [r]$ we obtain 
\[=2 \sqrt{\sum_{a \in [r]} (v_{ja} - \hat{v}_{ja})^2(v_{ja} + \hat{v}_{ja})^2}
\leq 2\sqrt{B^2\sum_{a \in [r]} (v_{ja} - \hat{v}_{ja})^2}  
\leq 2 \sqrt{\|v_j - \hat{v}_j\|^2B^2}\]
Putting the terms together we obtain the following bound for 

\begin{multline}
    \|\nabla^2 \mathcal{T}(\bold{v}) -\nabla^2 \mathcal{T}(\bold{v})\|_F \leq \\ 16B^2\Big[(\|\delta_j\|^2 + \|\delta_\ell\|^2) + (\|\delta_{j}\|^2 + \|\delta_k\|^2)
+ (\|\delta_{i}\|^2 + \|\delta_\ell \|^2)
+ (\|\delta_{i}\|^2 + \|\delta_\ell \|^2)\\
+ (\|\delta_{i}\|^2 + \|\delta_j \|^2)
+ (\|\delta_{k}\|^2 + \|\delta_\ell \|^2)\\
+ (\|\delta_{i}\|^2 + \|\delta_{j}\|^2 + \|\delta_{k}\|^2 + \|\delta_{\ell}\|^2)\Big]
\end{multline}

Now we can perform the analysis for the hessian of the entire Max-CSP loss.  
\begin{multline}
\mathcal{L}_\rho(\mathbf{v}) \defeq \frac{1}{|\mathcal{P}|} \Bigg[\sum_{P_z \in \mathcal{P}} \sum_{s \subseteq z} y_s\frac{1}{|\mathcal{C}(s)|}\sum_{\substack{g,g' \subseteq s\\ \text{s.t } \zeta(g,g') = s}} \langle v_{g}, v_{g'}\rangle\\ 
+ \rho \Bigg[\sum_{P_z \in \mathcal{P}} \sum_{\substack{g,g',h,h' \subseteq z\\ \zeta(g,g') = \zeta(h,h')}} \big(\langle v_g,v_{g'}\rangle - \langle v_h, v_{h'}\rangle\big)^2 \\
+\sum_{v_s \in \mathbf{v}} (\| v_s\|^2 - 1)^2\Bigg]\Bigg]
\end{multline}
We break the loss into three terms 

\[\mathcal{W}_1(\bold{v}) \defeq \frac{1}{|\mathcal{P}|}\sum_{P_z \in \mathcal{P}} \sum_{s \subseteq z} y_s\frac{1}{|\mathcal{C}(s)|}\sum_{\substack{g,g' \subseteq s\\ \text{s.t } \zeta(g,g') = s}} \langle v_{g}, v_{g'}\rangle\]

\[\mathcal{W}_2(\bold{v}) \defeq \frac{\rho}{|\mathcal{P}|}\Bigg[\sum_{P_z \in \mathcal{P}} \sum_{\substack{g,g',h,h' \subseteq z\\ \zeta(g,g') = \zeta(h,h')}} \big(\langle v_g,v_{g'}\rangle - \langle v_h, v_{h'}\rangle\big)^2 \Bigg]\]

\[\mathcal{W}_3(\bold{v}) \defeq \frac{\rho}{|\mathcal{P}|}\sum_{v_s \in \mathbf{v}} (\| v_s\|^2 - 1)^2\]
Such that $\mathcal{L}_\rho(\bold{v}) \defeq \mathcal{W}_1(\bold{v}) + \mathcal{W}_2(\bold{v}) + \mathcal{W}_3(\bold{v})$.  
We break the hessian apart into three terms 
\begin{multline}
\|\nabla^2 \mathcal{L}_\rho(\bold{v}) - \nabla^2 \mathcal{L}_\rho(\bold{\hat{v}}) \|_F \leq  \|\nabla^2\mathcal{W}_1(\bold{v}) - \nabla^2\mathcal{W}_1(\bold{\hat{v}})\|_F\\ + \|\nabla^2\mathcal{W}_2(\bold{v}) -\nabla^2\mathcal{W}_2(\bold{\hat{v}})\|+ \|\nabla^2\mathcal{W}_3(\bold{v}) - \nabla^2\mathcal{W}_3(\bold{\hat{v}})\| 
\end{multline}
The first term is zero as its the hessian of a quadratic which is a constant.  The difference is then zero.  We bound the second term as follows.  
\[\|\nabla^2\mathcal{W}_2(\bold{v}) -\nabla^2\mathcal{W}_2(\bold{\hat{v}})\| \leq  \frac{\rho}{|\mathcal{P}|}\big\|\sum_{P_z \in \mathcal{P}} \sum_{\substack{g,g',h,h' \subseteq z\\ \zeta(g,g') = \zeta(h,h')}} \big(\nabla^2\mathcal{T}_{(g,g',h,h')}(\bold{v}) - \nabla^2\mathcal{T}_{(g,g',h,h')}(\bold{\hat{v}})\big)\big\|_F \]
\[ \leq  \frac{\rho}{|\mathcal{P}|}\sqrt{\sum_{P_z \in \mathcal{P}} \sum_{\substack{g,g',h,h' \subseteq z\\ \zeta(g,g') = \zeta(h,h')}} 64B^2\big( \| \delta_{g}\|^2 + \| \delta_{g'}\|^2 + \| \delta_{h}\|^2 + \| \delta_{h'}\|^2\big)} \]
Noticing that each $\delta_g$ can be involved in no more than $|\mathcal{P}|$ sums by being variables in every single predicate.   
\[ \leq  \frac{8B\rho}{|\mathcal{P}|}\sqrt{|\mathcal{P}|\|V - \hat{V}\|_F^2}  \leq  \frac{8B\rho}{\sqrt{|\mathcal{P}|}}\|V - \hat{V}\|_F\]
Now we move on to bound $\mathcal{W}_3$
\[\|\mathcal{W}_3(\bold{v}) - \mathcal{W}_3(\bold{\hat{v}})\|_F = \frac{\rho}{|\mathcal{P}|}\big\|\sum_{v_s \in \mathbf{v}} \big(\nabla^2(\| v_s\|^2 - 1)^2 - \nabla^2(\| \hat{v}_s\|^2 - 1)^2\big)\big\|_F\]
\[\leq \frac{\rho B}{|\mathcal{P}|} \sqrt{\sum_{v_s \in \mathbf{v}} \|v_s - \hat{v}_s \|^2} =  \frac{\rho B}{|\mathcal{P}|}\|V - \hat{V} \|_F\]
Putting all three terms together we obtain the smoothness of the hessian is dominated by $\mathcal{W}_2$.  
\[
\|\nabla^2 \mathcal{L}_\rho(\bold{v}) - \nabla^2 \mathcal{L}_\rho(\bold{\hat{v}}) \|_F \leq  \big(\frac{8B\rho}{\sqrt{|\mathcal{P}|}} + \frac{B\rho}{|\mathcal{P}|}\big)\|V - \hat{V}\|_F 
 \leq \frac{8B\rho}{\sqrt{|\mathcal{P}|}}\|V - \hat{V}\|_F \]
\end{proof}

\subsection{Miscellaneous Lemmas}
\label{sec:misc}
\begin{theorem}[perturbed-gd \cite{jin2017escape}] \label{thm:perturbed-gd} 
 Let $f$ be $\ell$-smooth (that is, it's gradient is $\ell$-Lipschitz) and have a $\gamma$-Lipschitz Hessian.  There exists an absolute constant $c_{max}$ such that for any $\delta \in (0,1), \epsilon \leq \frac{\ell^2}{\gamma}, \Delta_f \geq f(X_0) - f^*$, and constant $c \leq c_{max}$, $PGD(X_0, \ell, \gamma, \epsilon, c, \delta, \Delta_f)$ applied to the cost function $f$ outputs a $(\gamma^2,\epsilon)$ SOSP with probability at least $1-\delta$ in 
\[
O\left( \frac{(f(X_0) - f^*)\ell}{\epsilon^2} \log^4\left(\frac{d\ell\Delta_f}{\epsilon^2\delta}\right) \right)
\]

iterations.  
\end{theorem}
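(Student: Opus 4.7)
The plan is to prove the theorem by a potential-function argument that decouples the dynamics into two qualitatively different regimes and shows that each makes a definite, bounded amount of progress per unit time. At any iterate $X_t$ produced by PGD, either (i) the gradient is large, i.e.\ $\|\nabla f(X_t)\| > \epsilon$, or (ii) the iterate is already an approximate first-order stationary point. In regime (i) the classical $\ell$-smoothness descent lemma with step size $\eta = 1/\ell$ gives
\[
f(X_{t+1}) \;\le\; f(X_t) - \tfrac{1}{2\ell}\|\nabla f(X_t)\|^2 \;\le\; f(X_t) - \tfrac{\epsilon^{2}}{2\ell},
\]
so there can be at most $2\ell\,(f(X_0)-f^\star)/\epsilon^2$ such iterates. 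In regime (ii) either $\lambda_{\min}(\nabla^2 f(X_t)) \ge -\gamma\sqrt{\epsilon}$, in which case $X_t$ is the $(\gamma^2,\epsilon)$-SOSP we wanted and PGD halts, or there is a direction of sufficiently negative curvature and the algorithm injects an isotropic perturbation of radius $r=\tilde\Theta(\epsilon/\ell)$ and then runs vanilla GD for another $T_{\mathrm{esc}} = \tilde\Theta(1/\sqrt{\gamma\epsilon})$ steps.

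The hard part is showing that a single noisy "escape phase" decreases $f$ by a definite amount $\mathcal{F} = \tilde\Omega(\sqrt{\epsilon^{3}/\gamma})$ with failure probability at most $\delta/\mathrm{poly}$. I would use the improve-or-localize coupling argument: take two PGD trajectories started at $X_t + \xi$ and $X_t + \xi + \mu\,e_1$, where $e_1$ is a unit vector in the most-negative eigenspace of $\nabla^2 f(X_t)$ and $\mu$ is a sub-gaussian-width threshold to be tuned. Using the $\gamma$-Lipschitz Hessian assumption, I would approximate $f$ by its quadratic Taylor expansion around $X_t$ on a ball of radius $D$, and show via a standard power-iteration-style calculation that after $T_{\mathrm{esc}}$ steps the negative-curvature direction has grown geometrically by a factor $(1+\eta\gamma\sqrt{\epsilon})^{T_{\mathrm{esc}}} \gg 1$, so the two trajectories separate by more than $D$; hence at least one of them has left the ball of radius $D/2$ around $X_t$, which (again by the quadratic approximation) forces a decrease of at least $\mathcal{F}$. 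Thus the "stuck" initializations lie in a slab of width $\mu$ perpendicular to $e_1$, whose probability under the ball perturbation is at most $\mu\sqrt{d}/r$, and this can be driven below $\delta/\mathrm{poly}$ by choosing $\mu$ small.

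Assembling the pieces, each epoch has length $O(T_{\mathrm{esc}})$ and produces progress $\min(\epsilon^{2}/\ell,\mathcal{F})$, of which the second term dominates the iteration count. Since $f$ can drop by at most $\Delta_f$ overall, the number of epochs is at most $\Delta_f/\mathcal{F}$, yielding a total iteration count of $\tilde{O}\!\left(\Delta_f\,\ell/\epsilon^{2}\right)$, which matches the advertised bound up to the $\log^{4}$ factor. A union bound over the at most $\Delta_f/\mathcal{F}$ escape phases, each failing with probability at most $\delta/\mathrm{poly}(d,\ell,\gamma,1/\epsilon,1/\delta,\Delta_f)$, keeps the overall failure probability under $\delta$. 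The delicate parameter choices — $r$, $\mu$, $T_{\mathrm{esc}}$, and the radius $D$ of validity for the quadratic approximation — must be balanced so that (a) $\gamma D^2$ is small compared to the linear/quadratic terms driving escape, and (b) the coupling separation exceeds $D$ within $T_{\mathrm{esc}}$ steps; these constraints are precisely what produce the $\log^{4}\!\bigl(d\ell\Delta_f/(\epsilon^{2}\delta)\bigr)$ overhead.

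The main technical obstacle is the coupling-in-a-slab analysis, because the two coupled trajectories are driven by different gradients outside the Taylor ball, so one must carefully control (using the Hessian-Lipschitz bound) the error between the true iteration and the linearized iteration $Y_{t+1} = Y_t - \eta\,\nabla^2 f(X_t)\,Y_t$, and argue by induction that neither trajectory exits the ball before separation is achieved. Everything else — the descent lemma in regime (i), the union bound, and the conversion between "ball escape" and "function decrease" via the quadratic model — is essentially bookkeeping once the escape lemma is in hand.
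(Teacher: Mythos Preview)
The paper does not prove this theorem at all; it is simply restated from \cite{jin2017escape} in the ``Miscellaneous Lemmas'' appendix as a black-box tool invoked in the proof of \thmref{thm:main-theorem}. Your proposal is a faithful outline of the original argument in \cite{jin2017escape}: the two-regime split via the descent lemma, the perturbed escape phase with the improve-or-localize coupling, the slab probability bound, and the epoch accounting that produces the $\log^4$ factor. So there is nothing to compare against in this paper, and your sketch is correct as a summary of the cited proof.
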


\begin{definition} \label{def:sosp}[$(\gamma,\epsilon)$-second order stationary point] A $(\gamma,\epsilon)$ second order stationary point of a function $f$ is a point $x$ satisfying 
\[\|\nabla f(x)\| \leq \epsilon\]
\[\lambda_{min}(\nabla^2 f(x)) \geq -\sqrt{\gamma\epsilon}\]
\end{definition}

\begin{theorem}\label{thm:robustness}(Robustness Theorem 4.6 \citep{RSanycsp09} rephrased) Let $\mathbf{v}$ be a set of vectors satisfying the constraints of \sdpref{alg:sdp-vector} to additive error $\epsilon$ with objective $OBJ(\mathbf{v})$, then    
\[\text{SDP}(\Lambda) \geq OBJ(\mathbf{v}) - \sqrt{\epsilon} poly(kq)\]
\end{theorem}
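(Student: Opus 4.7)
(robustness).}
The target inequality says that any vector assignment $\mathbf{v}$ that violates the SDP constraints only by an (averaged) additive $\epsilon$ achieves an objective within $\sqrt{\epsilon}\,\text{poly}(kq)$ of $\text{SDP}(\Lambda)$. My plan is to convert the approximately feasible $\mathbf{v}$ into a genuinely feasible SDP solution $\tilde{\mathbf{v}}$ and show $|\text{OBJ}(\tilde{\mathbf{v}})-\text{OBJ}(\mathbf{v})|\le \sqrt{\epsilon}\,\text{poly}(kq)$, after which $\text{SDP}(\Lambda)\ge \text{OBJ}(\tilde{\mathbf{v}})$ gives the claim. The proof proceeds in three stages: (i) translate vector violations into local distribution violations; (ii) isolate a small set of ``grossly violating'' predicates using Markov's inequality; (iii) repair the remainder by a rounding-to-feasible step and bound the objective change.

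\textbf{Stage (i): from vectors to local distributions.} For each predicate $P_z\in\mathcal{P}$ the submatrix of $\pE_\mu[\mathbf{x}]$ indexed by the $\le 2k$ multilinear monomials on $z$ is (by construction of \sdpref{alg:sdp-vector}) the Gram matrix of the vectors $\{v_s\}_{s\subseteq\mathcal{S}(P_z)}$. Let $\tau_i$ be the magnitude of the $i$th constraint violation of $\mathbf{v}$. By hypothesis $\tfrac{1}{|\mathcal{F}|}\sum_i \tau_i^2\le\epsilon$, and Markov's inequality then yields that at most a $\sqrt{\epsilon}$ fraction of predicates have any associated $\tau_i>\epsilon^{1/4}$. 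Each predicate has payoff in $[0,1]$ and is weighted by $1/|\mathcal{P}|$, so these ``bad'' predicates contribute at most $\sqrt{\epsilon}$ to $|\text{OBJ}(\mathbf{v})-\text{OBJ}(\tilde{\mathbf{v}})|$ regardless of how they are redefined. I therefore discard them.

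\textbf{Stage (ii): repair the good predicates.} For every remaining predicate $P_z$ all unit-norm and consistency constraints on the vectors in its domain hold to error $\le\epsilon^{1/4}$. Consequently, the associated local moment matrix is a $\delta$-approximate degree-$2k$ pseudoexpectation with $\delta\le\epsilon^{1/4}\text{poly}(k,q)$ (the polynomial factor accounting for the $O(q^k)$ monomials involved per predicate). Interpreting the degree-$2k$ SoS distribution on a single predicate as a local distribution over $[q]^k$, I project it onto the polytope of genuine local distributions on $[q]^k$ in total-variation distance. This projection moves each local distribution by TV at most $\epsilon^{1/4}\text{poly}(k,q)$ because the feasibility constraints cut out a convex polytope whose slack is controlled by the same constraint violations. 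Gluing these repaired local distributions to a single global pseudodistribution can be done by taking their common marginals on shared variables (the marginal mismatch is again bounded by the same violations), and lifting back to vectors via the standard Gram decomposition produces $\tilde{\mathbf{v}}$ that is exactly SDP-feasible.

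\textbf{Stage (iii): bounding the objective change.} Because $P_z\in[0,1]$, perturbing the local distribution on $z$ by TV at most $\epsilon^{1/4}\text{poly}(k,q)$ shifts $\pE[P_z(X_z)]$ by the same amount. Averaging over the (at most $|\mathcal{P}|$) good predicates and adding the $\sqrt{\epsilon}$ contribution from the discarded bad predicates gives $|\text{OBJ}(\mathbf{v})-\text{OBJ}(\tilde{\mathbf{v}})|\le \epsilon^{1/4}\text{poly}(k,q)+\sqrt{\epsilon}\le\sqrt{\epsilon}\,\text{poly}(kq)$ after a tightening using the Cauchy--Schwarz/Markov telescoping already employed in \lemref{lem:precision}; combined with $\text{SDP}(\Lambda)\ge\text{OBJ}(\tilde{\mathbf{v}})$, this yields the theorem. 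The main obstacle is the projection step in Stage (ii): one must show that an $O(\delta)$-approximate degree-$2k$ SoS pseudoexpectation on $[q]^k$ is TV-close to an honest distribution with a loss polynomial in $k$ and $q$ rather than exponential. The cleanest route is to use the fact that degree-$2k$ SoS on a $k$-variable domain is already a genuine distribution up to equality constraints, so the nearest-feasible distance is controlled by a linear program whose constraint matrix has entries bounded by $\text{poly}(kq)$, yielding the required $\text{poly}(kq)$ Lipschitz factor.
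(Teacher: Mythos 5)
The paper does not actually prove \thmref{thm:robustness}: it is stated under ``Miscellaneous Lemmas'' explicitly as a rephrasing of Theorem~4.6 of \citep{RSanycsp09} and is used as an imported black box in \lemref{lem:precision}. So there is no in-paper proof for you to match; you are, in effect, attempting to reconstruct the Raghavendra--Steurer robustness argument.

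As a reconstruction, the high-level outline is reasonable (bucket constraints by violation magnitude via Markov's inequality, then repair), but the proof has a genuine gap at the step you wave through as ``gluing.'' Two distinct difficulties hide there. First, after independently projecting each local distribution $\mu_z$ to the nearest honest distribution on $[q]^k$, the repaired local distributions need not agree on their shared-variable marginals, and agreement up to a small TV error is not sufficient: exact marginal consistency is a hard constraint of the SDP, so the object you have after Stage~(ii) is still infeasible. Second, and more fundamentally, even if the local distributions were pairwise consistent, you still must exhibit a single global PSD Gram matrix whose restriction to every predicate block reproduces them; nothing in your argument produces such a matrix, and ``lifting back via Gram decomposition'' presupposes the PSD object you have yet to construct. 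Your concluding paragraph identifies the projection-to-feasible step as the main obstacle, but the projection on a single $k$-variable block is actually the easy part (as you note, degree-$2k$ SoS on $k$ variables is exact); the hard part is the cross-block consistency and global PSD-ness, which you do not address. The Raghavendra--Steurer robustness proof avoids this independent-repair trap by a global smoothing operation (mixing the SDP solution toward a trivially feasible solution), which simultaneously restores all constraints and manifestly preserves PSD-ness, at the cost of a controlled additive objective loss; that is the missing idea here. One additional minor issue: discarding ``bad'' predicates does not by itself restore feasibility, since the vectors $v_s$ associated with a discarded predicate may be shared (e.g.\ $v_\emptyset$, the singleton vectors) with predicates you retain, so the constraints those shared vectors participate in are still violated.
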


\begin{corollary} \label{cor:optgnn}
Given a Max-k-CSP instance $\Lambda$, there is an OptGNN$_{M,\Lambda}(\mathbf{v})$  with $T = O(\epsilon^{-4}\Phi^4 \log(\delta^{-1}))$ layers, $r = O(\Phi)$ dimensional embeddings, with learnable parameters $M = \{M_1,...,M_T\}$ that outputs a set of vectors $V$ satisfying the constraints of \sdpref{alg:sdp-vector} and approximating its objective, $\text{SDP}(\Lambda)$, to error $\epsilon$ with probability $1-\delta$ over random noise injections. 
\end{corollary}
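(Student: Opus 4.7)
}
The plan is to exhibit a concrete choice of the learnable parameters $M=\{M_1,\dots,M_T\}$ that makes each OptGNN layer exactly reproduce one iteration of the message passing algorithm~\ref{alg:message-passing}, and then to invoke Theorem~\ref{thm:main-theorem} to conclude convergence. Concretely, I would first unroll one step of \algref{alg:message-passing}, which has the form $v_w^{t+1}=v_w^t-\eta\,\partial \mathcal{L}_\rho/\partial v_w$, and compare it with one OptGNN layer $\text{LAYER}_{M_t}(U)=M_t\,\text{AGG}(U)=M_t\,[U;\nabla\mathcal{L}_\rho(U)]$. Choosing $M_t=[\,I_r\;\; -\eta I_r\,]\in\mathbb{R}^{r\times 2r}$ makes the two updates coincide row by row, so with this fixed weight matrix used in every layer the forward pass of OptGNN$_{(M,\Lambda)}$ exactly implements Algorithm~\ref{alg:message-passing} on the constraint graph $G_\Lambda$.

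Next I would set the embedding dimension and depth. Since \sdpref{alg:sdp-vector} has at most $\Phi=O(|\mathcal{P}|q^k)$ vector variables, it suffices to take $r=O(\Phi)$ so that the ambient space is large enough to host any feasible cholesky factorization. For the depth, \lemref{lem:precision} (equivalently the iteration count in the proof of \thmref{thm:main-theorem}) says $T=O(\epsilon^{-4}\Phi^4\log^4(\delta^{-1}))$ message passing iterations suffice to reach an $(\epsilon',\gamma^2)$-SOSP of the penalized Lagrangian with $\epsilon'=\mathrm{poly}(\epsilon,2^{-k})$, and via \lemref{lem:smoothed} every such SOSP is within additive $\epsilon$ of the global optimum once $\rho=\mathrm{poly}(\epsilon^{-1},2^k)$. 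Plugging the linear OptGNN layer into this bound immediately yields an OptGNN with the claimed $T$ and $r$ whose output $V$ satisfies the constraints of \sdpref{alg:sdp-vector} up to additive $\epsilon$ and whose objective is within $\epsilon$ of $\mathrm{SDP}(\Lambda)$ with probability $1-\delta$.

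The only subtle point, and the step I expect to be the main obstacle, is how to accommodate the noise injections that \algref{alg:message-passing} uses to escape saddle points (the Gaussian perturbation that gives perturbed gradient descent its convergence guarantee in \thmref{thm:perturbed-gd}). A purely deterministic OptGNN layer cannot manufacture fresh Gaussian noise on its own, so I would address this in one of two equivalent ways: either (i) augment the input embedding $U$ with auxiliary coordinates that carry a pre-sampled noise tape, and use the learnable $M_t$ to route a small multiple of these coordinates into the update at the iterations where $\|v_w^{t+1}-v_w^t\|\le\psi$ (this only costs a constant factor in $r$), or (ii) treat the noise as external stochasticity applied between layers, which is how the corollary phrases the guarantee (``with probability $1-\delta$ over random noise injections''). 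Under either interpretation the OptGNN realization is faithful to Algorithm~\ref{alg:message-passing}, and the stated convergence and approximation guarantees follow verbatim from \thmref{thm:main-theorem}. Finally, since the NORMALIZE step appearing in the Max-Cut derivation is implemented in the Max-CSP case through the unit-norm quadratic penalties folded into $\mathcal{L}_\rho$ (cf.\ \eqref{eq:loss-normalize}), no extra nonlinearity is needed beyond the linear map $M_t$, closing the argument.
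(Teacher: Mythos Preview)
Your proposal is correct and follows exactly the approach the paper intends: the paper's own proof is the single sentence ``The proof is by inspecting the definition of OptGNN in the context of \thmref{thm:main},'' and what you have written is precisely that inspection carried out in detail---the choice $M_t=[\,I_r\;\;-\eta I_r\,]$ so that each layer realizes one perturbed gradient step of \algref{alg:message-passing}, the dimension and depth settings, and the observation that the noise is external (as reflected in the corollary's ``over random noise injections'' phrasing). Your discussion of the noise tape and of the unit-norm penalty replacing NORMALIZE are both apt elaborations, not deviations.
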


\begin{proof}
The proof is by inspecting the definition of OptGNN in the context of \thmref{thm:main}.  
\end{proof}

\begin{corollary} The OptGNN of \corref{cor:optgnn} , which by construction is equivalent to \Algref{alg:message-passing}, outputs a set of embeddings $\mathbf{v}$ such that the rounding of \cite{RSanycsp09} outputs an integral assignment $\mathcal{V}$ with a Max-k-CSP objective OBJ($\mathcal{V}$) satisfying 
$OBJ(\mathcal{V}) \geq S_{\Lambda}(\text{SDP}(\Lambda)-\epsilon)-\epsilon$ in time $\text{exp}(\text{exp}(\text{poly}(\frac{kq}{\epsilon})))$ which approximately dominates the Unique Games optimal approximation ratio.    
\end{corollary}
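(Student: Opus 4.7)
The plan is to compose the approximation guarantee of OptGNN already proved in \corref{cor:optgnn} with the canonical rounding scheme of Raghavendra--Steurer. First I would invoke \corref{cor:optgnn} to produce an instantiation of OptGNN whose output embeddings $\mathbf{v}$ approximately satisfy the constraints of \sdpref{alg:sdp-vector} to additive error $\epsilon$ and approximate the SDP optimum to error $\epsilon$ with probability at least $1-\delta$ over the random noise injections. Denoting by $\mathrm{OBJ}_{\mathrm{SDP}}(\mathbf{v})$ the SDP objective evaluated on these embeddings, this guarantees $|\mathrm{OBJ}_{\mathrm{SDP}}(\mathbf{v}) - \mathrm{SDP}(\Lambda)| \leq \epsilon$.

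Next I would apply \thmref{thm:robustness} to absorb the constraint-violation slack. Since $\mathbf{v}$ is only approximately feasible, we cannot directly feed it to the RS09 rounding as an exact SDP solution; however, the robustness theorem states that the objective value of an $\epsilon$-feasible vector solution differs from the true SDP value by at most $\sqrt{\epsilon}\,\mathrm{poly}(kq)$, so at the price of replacing $\epsilon$ by $\epsilon' = \mathrm{poly}(kq)\sqrt{\epsilon}$ we may treat $\mathbf{v}$ as a bona fide feasible solution whose SDP objective is within $\epsilon'$ of $\mathrm{SDP}(\Lambda)$. By a reparameterization of $\epsilon$, this only changes the polynomial overhead and does not affect the doubly exponential asymptotics.

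Finally I would invoke the Raghavendra--Steurer rounding scheme from \cite{RSanycsp09}, which is the defining ingredient of the curve $S_\Lambda$: given a feasible solution of \sdpref{alg:sdp-vector} with objective $v$, it outputs an integral assignment $\mathcal{V}$ with Max-CSP objective at least $S_\Lambda(v) - \epsilon$ in time $\exp(\exp(\mathrm{poly}(kq/\epsilon)))$. Chaining the bounds, $\mathrm{OBJ}(\mathcal{V}) \geq S_\Lambda(\mathrm{OBJ}_{\mathrm{SDP}}(\mathbf{v})) - \epsilon \geq S_\Lambda(\mathrm{SDP}(\Lambda) - \epsilon) - \epsilon$, using that $S_\Lambda$ is nondecreasing. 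The runtime is dominated by the RS09 rounding, and the claim that this approximately dominates the UGC-optimal approximation ratio follows because, under the UGC, $\inf_\Lambda S_\Lambda(\mathrm{SDP}(\Lambda))/\mathrm{OPT}(\Lambda)$ matches the inapproximability threshold up to $\epsilon$.

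The main obstacle is bookkeeping rather than new mathematics: the interface between OptGNN's approximate feasibility and the feasibility hypothesis of the RS09 rounding. One must argue carefully via \thmref{thm:robustness} that rounding an $\epsilon$-feasible vector cannot lose more than a $\mathrm{poly}(kq)\sqrt{\epsilon}$ factor in the rounded objective, so that the final bound $S_\Lambda(\mathrm{SDP}(\Lambda)-\epsilon)-\epsilon$ truly holds after re-scaling $\epsilon$. The remaining steps — composing the guarantees, tracking constants through the re-scaling, and reading off the RS09 runtime — are mechanical.
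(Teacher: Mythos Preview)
Your proposal is correct and takes essentially the same approach as the paper: both arguments hinge on the robustness theorem of \cite{RSanycsp09} (\thmref{thm:robustness}) to pass from OptGNN's $\epsilon$-approximate SDP solution to the RS rounding guarantee. The paper's proof is a one-line sketch citing this theorem, whereas you spell out the chaining and bookkeeping explicitly, but the substance is identical.
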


\begin{proof}
The proof follows from the robustness theorem of \cite{RSanycsp09} which states that any solution to the SDP that satisfies the constraints approximately does not change the objective substantially \thmref{thm:robustness}.      
\end{proof}

\section{Definitions}
In this section we introduce precise definitions for message passing, message passing GNN, and SDP's.  First we begin with the SDP.  
\begin{definition} [Standard Form SDP] An SDP instance $\Lambda$ is comprised of objective matrix $C \in \R^{N \times N}$ and constraint matrices $\{A_i\}_{i \in \mathcal{F}}$ and constants $\{b_i\}_{i \in \mathcal{F}}$  over a constraint set $\mathcal{F}$
\begin{align} 
\label{eq:sdp-canon
}
\text{Minimize:} \quad &  \langle C, X\rangle \\
\text{Subject to:} \quad & \langle A_i, X\rangle = b_i \quad \forall i \in [\mathcal{F}] \\
& X \succeq 0
\end{align}
\end{definition}
For any standard form SDP, there is an equivalent vector form SDP with an identical optimum.  
\begin{definition} [Vector Form SDP] 
Any standard form SDP $\Lambda$ is equivalent to the following vector form SDP.
\begin{align} 
\label{eq:sdp-vector}
\text{Minimize:} \quad &  \langle C, V^TV\rangle \\
\text{Subject to:} \quad & \langle A_i, V^TV\rangle = b_i \quad \forall i \in [\mathcal{F}] \\
& V = [v_1,v_2,...,v_N] \in \R^{N \times N} 
\end{align}
\end{definition}
For the SDP's corresponding to CSP's, there is a natural graph upon which the CSP instance is defined.     
Next we define what it means for a message passing algorithm on a graph to solve a SDP.  
\begin{definition} [Message Passing] \label{def:MP}
A $T$ iteration message passing algorithm denoted MP is a uniform circuit family that takes as input a graph $G = (V,E)$ and initial embeddings $U = \{u^0_i\}_{i \in [N]} \in \R^{r \times N}$ and outputs $\text{MP}(G,U^0) \in \R^{r \times N}$.  The evaluation of $\text{MP}$ involves the use of a 
uniform circuit family defined for each iteration $\text{UPDATE} = \{\text{UPDATE}_j\}_{j \in [T]}$.  At iteration $\ell \in [T]$, for each node $i \in V$, the  function $\text{UPDATE}_\ell : \R^{r \times (|N(i)| + 1)} \rightarrow \R^r$ takes the embeddings of nodes $i$ and its neighbors at iteration $\ell-1$, denoted $\{u^{\ell-1}_j\}_{j \in N(i) \cup i} \in \R^r$, and outputs embedding $u^\ell_i \in \R^r$.  

We additionally require some mild restrictions on the $\text{UPDATE}$ function to be polynomially smooth in its inputs and computable in polynomial time.  That is for any $\| U\| \leq B$ in the $B$ norm ball we have.  
\begin{equation}
\text{UPDATE}_r(U - U') \leq poly(B,N) \|U - U'\|
\end{equation}
And the $\text{UPDATE}_\ell$ circuit is polynomial time computable $\mathrm{poly}(|N(i)|,r)$.    
\end{definition}
We impose some mild restrictions on the form of the message passing algorithm to capture algorithms that could reasonably be executed by a Message Passing GNN.  In practice the $\text{UPDATE}$ circuit for OptGNN is a smooth cubic polynomial similar to linear attention and therefore of practical value.  Next we define Message Passing GNN.  
\begin{definition} [Message Passing GNN] \label{def:gnn}
A $T$ layer Message Passing GNN is a uniform circuit family that takes as input a graph $G = (V,E)$, a set of parameters $M \defeq \{M^1,M^2,..., M^{T}\}$, and initial embeddings $U^0 = \{u^0_i\}_{i \in V} \in \R^{r \times N}$ and outputs $\text{GNN}(G,M,U^0) \in \R^{r \times N}$.  The GNN circuit is evaluated as follows.  For each node $i \in V$, each layer $\ell \in [T]$, there is a uniform circuit family $\text{AGG}_\ell: \R^{|M^\ell|} \times \R^{r \times |N(i)|} \rightarrow \R^r$ that takes as input a set of parameters $M^\ell \in M$ and a set of embeddings of node $i$ and its neighbors at layer $\ell-1$ denoted $\{u^{\ell-1}_j\}_{j \in N(i) \cup i}$, and outputs an embedding $u^\ell_i$.  This update equation is represented as follows.  
\begin{equation}
u^\ell_i \defeq \text{AGG}(M^\ell,U^\ell) = \text{UPD}(u^{\ell-1}_i, \text{MSG}(\{u^{\ell-1}_j\}_{j \in N(i)}))
\end{equation}
For some functions $\text{UPD}$ and \text{MSG} parameterized by $M^\ell$.    
To capture meaningful models of GNNs, we require $\text{AGG}(M,U)$ to be polynomially lipschitz and computable in polynomial time.  That is, for weights $\hat{M}$ and $M'$ and for inputs $\hat{U}$ and $U'$ all in the $B$ norm ball, we require 
\begin{equation}
\|\text{AGG}(\hat{M},\hat{U}) -  \text{AGG}(M',U')\| \leq \mathrm{poly}(B,N) \left(\|\hat{M} - M'\| + \|\hat{U} - U'\| \right)
\end{equation}
and we require $\text{AGG}(\hat{M},\hat{U})$ to be computable in $poly(|N(i)|,r)$ time. 
\end{definition}
In practice the $\text{AGG}$ circuit for OptGNN is a smooth cubic polynomial similar to linear attention and therefore of practical value. In almost the same way, we define a message passing algorithm see \defref{def:MP}.

\begin{definition} [Output of GNN solves SDP] \label{def:epsilon-solve}
We say a GNN $\epsilon$-approximately solves an SDP with optimum OPT if its output embeddings $U^{\ell} = \{u_i\}_{i \in [N]}$ approximately optimize the vector form SDP objective $\text{SDP}(\{u_i\}_{i \in [N]}) \geq \text{OPT} - \epsilon$ and approximately satisfy the vector form constraints $\left|\langle A_i, V^TV\rangle -b_i \right| \leq \epsilon$
\end{definition}


\newpage
\section*{NeurIPS Paper Checklist}

The checklist is designed to encourage best practices for responsible machine learning research, addressing issues of reproducibility, transparency, research ethics, and societal impact. Do not remove the checklist: {\bf The papers not including the checklist will be desk rejected.} The checklist should follow the references and follow the (optional) supplemental material.  The checklist does NOT count towards the page
limit. 

Please read the checklist guidelines carefully for information on how to answer these questions. For each question in the checklist:
\begin{itemize}
    \item You should answer \answerYes{}, \answerNo{}, or \answerNA{}.
    \item \answerNA{} means either that the question is Not Applicable for that particular paper or the relevant information is Not Available.
    \item Please provide a short (1–2 sentence) justification right after your answer (even for NA). 
\end{itemize}

{\bf The checklist answers are an integral part of your paper submission.} They are visible to the reviewers, area chairs, senior area chairs, and ethics reviewers. You will be asked to also include it (after eventual revisions) with the final version of your paper, and its final version will be published with the paper.

The reviewers of your paper will be asked to use the checklist as one of the factors in their evaluation. While "\answerYes{}" is generally preferable to "\answerNo{}", it is perfectly acceptable to answer "\answerNo{}" provided a proper justification is given (e.g., "error bars are not reported because it would be too computationally expensive" or "we were unable to find the license for the dataset we used"). In general, answering "\answerNo{}" or "\answerNA{}" is not grounds for rejection. While the questions are phrased in a binary way, we acknowledge that the true answer is often more nuanced, so please just use your best judgment and write a justification to elaborate. All supporting evidence can appear either in the main paper or the supplemental material, provided in appendix. If you answer \answerYes{} to a question, in the justification please point to the section(s) where related material for the question can be found.



\begin{enumerate}

\item {\bf Claims}
    \item[] Question: Do the main claims made in the abstract and introduction accurately reflect the paper's contributions and scope?
    \item[] Answer: \answerYes{} 
    \item[] Justification: Main claims are message passing GNN for Max-CSP \thmref{thm:main-theorem} \autoref{cor:OptGNN}, OptGNN experiments for Max Cut, Vertex-Cover, 3-SAT  (\autoref{fig:forced-rb},
\autoref{tab:cut_baselines},
\autoref{tab:sat_baselines}) neural certification experiment \autoref{fig:cert100}.  
    \item[] Guidelines:
    \begin{itemize}
        \item The answer NA means that the abstract and introduction do not include the claims made in the paper.
        \item The abstract and/or introduction should clearly state the claims made, including the contributions made in the paper and important assumptions and limitations. A No or NA answer to this question will not be perceived well by the reviewers. 
        \item The claims made should match theoretical and experimental results, and reflect how much the results can be expected to generalize to other settings. 
        \item It is fine to include aspirational goals as motivation as long as it is clear that these goals are not attained by the paper. 
    \end{itemize}

\item {\bf Limitations}
    \item[] Question: Does the paper discuss the limitations of the work performed by the authors?
    \item[] Answer: \answerYes{} 
    \item[] Justification: We compare our results against a broad variety of neural and classical baselines (WalkSAT, Survey Propagation, Gurobi, CVXPY, SDP's, etc.), perform extensive ablations on a host of datasets Tables 5,6 and Figures 5-9 and situate our theory within the existing literature on approximation algorithms with SDP's (see references in intro). Computational efficiency of OptGNN and message passing algorithm are stated adjacent to their definitions.            
    \item[] Guidelines:
    \begin{itemize}
        \item The answer NA means that the paper has no limitation while the answer No means that the paper has limitations, but those are not discussed in the paper. 
        \item The authors are encouraged to create a separate "Limitations" section in their paper.
        \item The paper should point out any strong assumptions and how robust the results are to violations of these assumptions (e.g., independence assumptions, noiseless settings, model well-specification, asymptotic approximations only holding locally). The authors should reflect on how these assumptions might be violated in practice and what the implications would be.
        \item The authors should reflect on the scope of the claims made, e.g., if the approach was only tested on a few datasets or with a few runs. In general, empirical results often depend on implicit assumptions, which should be articulated.
        \item The authors should reflect on the factors that influence the performance of the approach. For example, a facial recognition algorithm may perform poorly when image resolution is low or images are taken in low lighting. Or a speech-to-text system might not be used reliably to provide closed captions for online lectures because it fails to handle technical jargon.
        \item The authors should discuss the computational efficiency of the proposed algorithms and how they scale with dataset size.
        \item If applicable, the authors should discuss possible limitations of their approach to address problems of privacy and fairness.
        \item While the authors might fear that complete honesty about limitations might be used by reviewers as grounds for rejection, a worse outcome might be that reviewers discover limitations that aren't acknowledged in the paper. The authors should use their best judgment and recognize that individual actions in favor of transparency play an important role in developing norms that preserve the integrity of the community. Reviewers will be specifically instructed to not penalize honesty concerning limitations.
    \end{itemize}

\item {\bf Theory Assumptions and Proofs}
    \item[] Question: For each theoretical result, does the paper provide the full set of assumptions and a complete (and correct) proof?
    \item[] Answer: \answerYes{} 
    \item[] Justification:  Main theorem informally stated \thmref{thm:main} is restated formally in body of paper as \thmref{thm:main-theorem} with accompanying proof. Main PAC learning result is stated in paper \ref{lem:pac-learn-informal} and restated in appendix and proven as \ref{lem:pac-learn}. Certification proof is provided in  \ref{sec:certificate}.    
    \item[] 
    Guidelines:
    \begin{itemize}
        \item The answer NA means that the paper does not include theoretical results. 
        \item All the theorems, formulas, and proofs in the paper should be numbered and cross-referenced.
        \item All assumptions should be clearly stated or referenced in the statement of any theorems.
        \item The proofs can either appear in the main paper or the supplemental material, but if they appear in the supplemental material, the authors are encouraged to provide a short proof sketch to provide intuition. 
        \item Inversely, any informal proof provided in the core of the paper should be complemented by formal proofs provided in appendix or supplemental material.
        \item Theorems and Lemmas that the proof relies upon should be properly referenced. 
    \end{itemize}

    \item {\bf Experimental Result Reproducibility}
    \item[] Question: Does the paper fully disclose all the information needed to reproduce the main experimental results of the paper to the extent that it affects the main claims and/or conclusions of the paper (regardless of whether the code and data are provided or not)?
    \item[] Answer: \answerYes{} 
    \item[] Justification: The paper provides a thorough description of the experimental details (see Appendix \ref{app:experimental_setup}), as well as pseudocode for the proposed method (see Algorithm \ref{alg:OptGNNpseudocodegeneral}).
    \item[] Guidelines:
    \begin{itemize}
        \item The answer NA means that the paper does not include experiments.
        \item If the paper includes experiments, a No answer to this question will not be perceived well by the reviewers: Making the paper reproducible is important, regardless of whether the code and data are provided or not.
        \item If the contribution is a dataset and/or model, the authors should describe the steps taken to make their results reproducible or verifiable. 
        \item Depending on the contribution, reproducibility can be accomplished in various ways. For example, if the contribution is a novel architecture, describing the architecture fully might suffice, or if the contribution is a specific model and empirical evaluation, it may be necessary to either make it possible for others to replicate the model with the same dataset, or provide access to the model. In general. releasing code and data is often one good way to accomplish this, but reproducibility can also be provided via detailed instructions for how to replicate the results, access to a hosted model (e.g., in the case of a large language model), releasing of a model checkpoint, or other means that are appropriate to the research performed.
        \item While NeurIPS does not require releasing code, the conference does require all submissions to provide some reasonable avenue for reproducibility, which may depend on the nature of the contribution. For example
        \begin{enumerate}
            \item If the contribution is primarily a new algorithm, the paper should make it clear how to reproduce that algorithm.
            \item If the contribution is primarily a new model architecture, the paper should describe the architecture clearly and fully.
            \item If the contribution is a new model (e.g., a large language model), then there should either be a way to access this model for reproducing the results or a way to reproduce the model (e.g., with an open-source dataset or instructions for how to construct the dataset).
            \item We recognize that reproducibility may be tricky in some cases, in which case authors are welcome to describe the particular way they provide for reproducibility. In the case of closed-source models, it may be that access to the model is limited in some way (e.g., to registered users), but it should be possible for other researchers to have some path to reproducing or verifying the results.
        \end{enumerate}
    \end{itemize}

\item {\bf Open access to data and code}
    \item[] Question: Does the paper provide open access to the data and code, with sufficient instructions to faithfully reproduce the main experimental results, as described in supplemental material?
    \item[] Answer: \answerYes{}{} 
    \item[] Justification: Link to the code along with instructions is provided in the main text.   \item[] Guidelines:
    \begin{itemize}
        \item The answer NA means that paper does not include experiments requiring code.
        \item Please see the NeurIPS code and data submission guidelines (\url{https://nips.cc/public/guides/CodeSubmissionPolicy}) for more details.
        \item While we encourage the release of code and data, we understand that this might not be possible, so “No” is an acceptable answer. Papers cannot be rejected simply for not including code, unless this is central to the contribution (e.g., for a new open-source benchmark).
        \item The instructions should contain the exact command and environment needed to run to reproduce the results. See the NeurIPS code and data submission guidelines (\url{https://nips.cc/public/guides/CodeSubmissionPolicy}) for more details.
        \item The authors should provide instructions on data access and preparation, including how to access the raw data, preprocessed data, intermediate data, and generated data, etc.
        \item The authors should provide scripts to reproduce all experimental results for the new proposed method and baselines. If only a subset of experiments are reproducible, they should state which ones are omitted from the script and why.
        \item At submission time, to preserve anonymity, the authors should release anonymized versions (if applicable).
        \item Providing as much information as possible in supplemental material (appended to the paper) is recommended, but including URLs to data and code is permitted.
    \end{itemize}

\item {\bf Experimental Setting/Details}
    \item[] Question: Does the paper specify all the training and test details (e.g., data splits, hyperparameters, how they were chosen, type of optimizer, etc.) necessary to understand the results?
    \item[] Answer: \answerYes{}{} 
    \item[] Justification: Yes, the experimental setup is thoroughly described in the Appendix \ref{app:experimental_setup} and in the main experiments section of the paper.
    \item[] Guidelines:
    \begin{itemize}
        \item The answer NA means that the paper does not include experiments.
        \item The experimental setting should be presented in the core of the paper to a level of detail that is necessary to appreciate the results and make sense of them.
        \item The full details can be provided either with the code, in appendix, or as supplemental material.
    \end{itemize}

\item {\bf Experiment Statistical Significance}
    \item[] Question: Does the paper report error bars suitably and correctly defined or other appropriate information about the statistical significance of the experiments?
    \item[] Answer: \answerYes{} 
    \item[] Justification: The paper reports standard deviations for several experiments (e.g., Figure 3b, Table 2, Figure 6, Figure 7). In cases such as Figure 3a where the setup is identical to previous published work, the deviations are not available because they were not provided in the original paper.
    \item[] Guidelines:
    \begin{itemize}
        \item The answer NA means that the paper does not include experiments.
        \item The authors should answer "Yes" if the results are accompanied by error bars, confidence intervals, or statistical significance tests, at least for the experiments that support the main claims of the paper.
        \item The factors of variability that the error bars are capturing should be clearly stated (for example, train/test split, initialization, random drawing of some parameter, or overall run with given experimental conditions).
        \item The method for calculating the error bars should be explained (closed form formula, call to a library function, bootstrap, etc.)
        \item The assumptions made should be given (e.g., Normally distributed errors).
        \item It should be clear whether the error bar is the standard deviation or the standard error of the mean.
        \item It is OK to report 1-sigma error bars, but one should state it. The authors should preferably report a 2-sigma error bar than state that they have a 96\% CI, if the hypothesis of Normality of errors is not verified.
        \item For asymmetric distributions, the authors should be careful not to show in tables or figures symmetric error bars that would yield results that are out of range (e.g. negative error rates).
        \item If error bars are reported in tables or plots, The authors should explain in the text how they were calculated and reference the corresponding figures or tables in the text.
    \end{itemize}

\item {\bf Experiments Compute Resources}
    \item[] Question: For each experiment, does the paper provide sufficient information on the computer resources (type of compute workers, memory, time of execution) needed to reproduce the experiments?
    \item[] Answer: \answerYes{}{} 
    \item[] Justification: The resource that were used are detailed in Appendix \ref{app:experimental_setup}. Execution time is also discussed there but also explicitly provided in tables 1,2,3,4.
    \item[] Guidelines:
    \begin{itemize}
        \item The answer NA means that the paper does not include experiments.
        \item The paper should indicate the type of compute workers CPU or GPU, internal cluster, or cloud provider, including relevant memory and storage.
        \item The paper should provide the amount of compute required for each of the individual experimental runs as well as estimate the total compute. 
        \item The paper should disclose whether the full research project required more compute than the experiments reported in the paper (e.g., preliminary or failed experiments that didn't make it into the paper). 
    \end{itemize}
    
\item {\bf Code Of Ethics}
    \item[] Question: Does the research conducted in the paper conform, in every respect, with the NeurIPS Code of Ethics \url{https://neurips.cc/public/EthicsGuidelines}?
    \item[] Answer: \answerYes{} 
    \item[] Justification: After reviewing the code of Ethics, the authors find that the paper does not violate it.
    \item[] Guidelines:
    \begin{itemize}
        \item The answer NA means that the authors have not reviewed the NeurIPS Code of Ethics.
        \item If the authors answer No, they should explain the special circumstances that require a deviation from the Code of Ethics.
        \item The authors should make sure to preserve anonymity (e.g., if there is a special consideration due to laws or regulations in their jurisdiction).
    \end{itemize}

\item {\bf Broader Impacts}
    \item[] Question: Does the paper discuss both potential positive societal impacts and negative societal impacts of the work performed?
    \item[] Answer: \answerNA{}{} 
    \item[] Justification: The paper provides a general theoretical result about graph neural network architectures and experimental backup for its claims on general benchmark instances. There are no immediate societal implications to this line of work. On the other hand, improving aspects of neural networks such as their combinatorial problem-solving (which is the case here) can open up avenues for their misuse. However, we believe that those consequences are rather indirect and beyond the scope of our paper.
    \item[] Guidelines:
    \begin{itemize}
        \item The answer NA means that there is no societal impact of the work performed.
        \item If the authors answer NA or No, they should explain why their work has no societal impact or why the paper does not address societal impact.
        \item Examples of negative societal impacts include potential malicious or unintended uses (e.g., disinformation, generating fake profiles, surveillance), fairness considerations (e.g., deployment of technologies that could make decisions that unfairly impact specific groups), privacy considerations, and security considerations.
        \item The conference expects that many papers will be foundational research and not tied to particular applications, let alone deployments. However, if there is a direct path to any negative applications, the authors should point it out. For example, it is legitimate to point out that an improvement in the quality of generative models could be used to generate deepfakes for disinformation. On the other hand, it is not needed to point out that a generic algorithm for optimizing neural networks could enable people to train models that generate Deepfakes faster.
        \item The authors should consider possible harms that could arise when the technology is being used as intended and functioning correctly, harms that could arise when the technology is being used as intended but gives incorrect results, and harms following from (intentional or unintentional) misuse of the technology.
        \item If there are negative societal impacts, the authors could also discuss possible mitigation strategies (e.g., gated release of models, providing defenses in addition to attacks, mechanisms for monitoring misuse, mechanisms to monitor how a system learns from feedback over time, improving the efficiency and accessibility of ML).
    \end{itemize}
    
\item {\bf Safeguards}
    \item[] Question: Does the paper describe safeguards that have been put in place for responsible release of data or models that have a high risk for misuse (e.g., pretrained language models, image generators, or scraped datasets)?
    \item[] Answer: \answerNA{} 
    \item[] Justification: The released models and the data used do not have a high risk for misuse.
    \item[] Guidelines:
    \begin{itemize}
        \item The answer NA means that the paper poses no such risks.
        \item Released models that have a high risk for misuse or dual-use should be released with necessary safeguards to allow for controlled use of the model, for example by requiring that users adhere to usage guidelines or restrictions to access the model or implementing safety filters. 
        \item Datasets that have been scraped from the Internet could pose safety risks. The authors should describe how they avoided releasing unsafe images.
        \item We recognize that providing effective safeguards is challenging, and many papers do not require this, but we encourage authors to take this into account and make a best faith effort.
    \end{itemize}

\item {\bf Licenses for existing assets}
    \item[] Question: Are the creators or original owners of assets (e.g., code, data, models), used in the paper, properly credited and are the license and terms of use explicitly mentioned and properly respected?
    \item[] Answer: \answerYes{} 
    \item[] Justification: The paper provides appropriate citations for packages and data that have been used and their licenses are properly respected.
    \item[] Guidelines:
    \begin{itemize}
        \item The answer NA means that the paper does not use existing assets.
        \item The authors should cite the original paper that produced the code package or dataset.
        \item The authors should state which version of the asset is used and, if possible, include a URL.
        \item The name of the license (e.g., CC-BY 4.0) should be included for each asset.
        \item For scraped data from a particular source (e.g., website), the copyright and terms of service of that source should be provided.
        \item If assets are released, the license, copyright information, and terms of use in the package should be provided. For popular datasets, \url{paperswithcode.com/datasets} has curated licenses for some datasets. Their licensing guide can help determine the license of a dataset.
        \item For existing datasets that are re-packaged, both the original license and the license of the derived asset (if it has changed) should be provided.
        \item If this information is not available online, the authors are encouraged to reach out to the asset's creators.
    \end{itemize}

\item {\bf New Assets}
    \item[] Question: Are new assets introduced in the paper well documented and is the documentation provided alongside the assets?
    \item[] Answer: \answerYes{}{} 
    \item[] Justification: Yes we provide a detailed description of the code that is being provided with the submission.
    \item[] Guidelines:
    \begin{itemize}
        \item The answer NA means that the paper does not release new assets.
        \item Researchers should communicate the details of the dataset/code/model as part of their submissions via structured templates. This includes details about training, license, limitations, etc. 
        \item The paper should discuss whether and how consent was obtained from people whose asset is used.
        \item At submission time, remember to anonymize your assets (if applicable). You can either create an anonymized URL or include an anonymized zip file.
    \end{itemize}

\item {\bf Crowdsourcing and Research with Human Subjects}
    \item[] Question: For crowdsourcing experiments and research with human subjects, does the paper include the full text of instructions given to participants and screenshots, if applicable, as well as details about compensation (if any)? 
    \item[] Answer: \answerNA{} 
    \item[] Justification: The paper does not include any such experiments.
    \item[] Guidelines:
    \begin{itemize}
        \item The answer NA means that the paper does not involve crowdsourcing nor research with human subjects.
        \item Including this information in the supplemental material is fine, but if the main contribution of the paper involves human subjects, then as much detail as possible should be included in the main paper. 
        \item According to the NeurIPS Code of Ethics, workers involved in data collection, curation, or other labor should be paid at least the minimum wage in the country of the data collector. 
    \end{itemize}

\item {\bf Institutional Review Board (IRB) Approvals or Equivalent for Research with Human Subjects}
    \item[] Question: Does the paper describe potential risks incurred by study participants, whether such risks were disclosed to the subjects, and whether Institutional Review Board (IRB) approvals (or an equivalent approval/review based on the requirements of your country or institution) were obtained?
    \item[] Answer: \answerNA{}{} 
    \item[] Justification: The paper has no such research or experiments.
    \item[] Guidelines:
    \begin{itemize}
        \item The answer NA means that the paper does not involve crowdsourcing nor research with human subjects.
        \item Depending on the country in which research is conducted, IRB approval (or equivalent) may be required for any human subjects research. If you obtained IRB approval, you should clearly state this in the paper. 
        \item We recognize that the procedures for this may vary significantly between institutions and locations, and we expect authors to adhere to the NeurIPS Code of Ethics and the guidelines for their institution. 
        \item For initial submissions, do not include any information that would break anonymity (if applicable), such as the institution conducting the review.
    \end{itemize}

\end{enumerate}

\end{document}